\documentclass[opre,nonblindrev]{informs4a}
\OneAndAHalfSpacedXI

\usepackage{mathtools,bm}
\usepackage{aliascnt}
\usepackage{booktabs,tabularx}
\usepackage{enumitem}
\usepackage[nopatch=footnote]{microtype}
\usepackage{natbib}
\usepackage[hyperfootnotes=false]{hyperref}
\usepackage[capitalize,noabbrev]{cleveref}
\usepackage{algorithm}
\usepackage[noend]{algpseudocode}
\usepackage{float}
\usepackage{needspace}
\usepackage{graphicx}

\bibpunct[, ]{(}{)}{,}{a}{}{,}
\def\bibfont{\footnotesize\linespread{1}\selectfont}

\hypersetup{
 hypertexnames=false,
 colorlinks=true,
 linkcolor=blue,
 citecolor=blue,
 urlcolor=blue,
 filecolor=blue,
 pdftitle={Resource-Adaptive Stochastic Gradient Descent for Online Linear Programming without Re-solving},
 pdfauthor={Jiameng Lyu},
}
\renewcommand{\arraystretch}{1.12}
\setlist{leftmargin=2em,itemsep=0.1em,topsep=0.2em}

\newtheorem{theorem}{Theorem}[section]
\newaliascnt{lemma}{theorem}
\newtheorem{lemma}[lemma]{Lemma}
\aliascntresetthe{lemma}
\newaliascnt{proposition}{theorem}

\aliascntresetthe{proposition}
\newaliascnt{corollary}{theorem}
\newtheorem{corollary}[corollary]{Corollary}
\aliascntresetthe{corollary}
\newaliascnt{assumption}{theorem}
\newtheorem{assumption}[assumption]{Assumption}
\aliascntresetthe{assumption}
\newaliascnt{remark}{theorem}

\aliascntresetthe{remark}
\numberwithin{equation}{section}
\crefname{theorem}{Theorem}{Theorems}
\Crefname{theorem}{Theorem}{Theorems}
\crefname{lemma}{Lemma}{Lemmas}
\Crefname{lemma}{Lemma}{Lemmas}
\crefname{proposition}{Proposition}{Propositions}
\Crefname{proposition}{Proposition}{Propositions}
\crefname{corollary}{Corollary}{Corollaries}
\Crefname{corollary}{Corollary}{Corollaries}
\crefname{assumption}{Assumption}{Assumptions}
\Crefname{assumption}{Assumption}{Assumptions}
\crefname{remark}{Remark}{Remarks}
\Crefname{remark}{Remark}{Remarks}

\newcommand{\R}{\mathbb R}
\newcommand{\E}{\mathbb E}
\newcommand{\Prob}{\mathbb P}
\newcommand{\1}{\mathbf 1}
\newcommand{\F}{\mathcal F}
\newcommand{\Y}{\mathcal Y}
\newcommand{\B}{\mathcal B}
\newcommand{\N}{\mathcal N}
\newcommand{\G}{\mathcal G}
\newcommand{\OPT}{\operatorname{OPT}}
\newcommand{\Reg}{\operatorname{Reg}}
\newcommand{\norm}[1]{\left\lVert #1\right\rVert}
\newcommand{\ip}[2]{\left\langle #1,#2\right\rangle}
\newcommand{\RA}{\mathsf{RASGD}}
\newcommand{\epsT}{\varepsilon_T}

\newcommand{\dmin}{d_{\min}}
\newcommand{\dmax}{d_{\max}}
\newcommand{\mulo}{\mu_0}
\DeclareRobustCommand{\algname}[1]{\mbox{\sffamily\upshape #1}}
\newcolumntype{P}[1]{>{\raggedright\arraybackslash}p{#1}}
\newcolumntype{Z}{>{\raggedright\arraybackslash}X}

\begin{document}

\RUNTITLE{Resource-Adaptive SGD for Online Linear Programming}
\TITLE{Resource-Adaptive Stochastic Gradient Descent for Online Linear
Programming without Re-solving}
\RUNAUTHOR{Lyu}
\ARTICLEAUTHORS{
\AUTHOR{Jiameng Lyu}
\AFF{Department of Management Science, School of Management, Fudan University,
Shanghai 200433, China, \EMAIL{jiamenglyu@fudan.edu.cn}}
}

\ABSTRACT{
The growth of large language model (LLM) inference and search services increases the scale of online linear programming problems, motivating computationally efficient algorithms.
We develop resource-adaptive stochastic gradient descent (RASGD) for stochastic online linear programming.
The algorithm uses one request and current inventory to update resource prices, requiring \(O(m)\) operations for \(m\) resources and memory per arrival and no LP or sample-average optimization.
The central idea is to express the current-resource pricing logic of re-solving through a first-order SGD update: each arrival refreshes the remaining-inventory allowance in the dual objective, while the stepsize decreases for early learning and increases later to match the speed of inventory adjustment.
Under standard non-degeneracy conditions, our algorithm is feasible on every sample path and achieves \(O(\log T)\) expected regret against the realized fractional hindsight optimum, which matches the lower bound, even for policies that know the distribution and have unrestricted computation.
The analysis converts curvature around the fixed reference price into inventory stability without tracking optimal prices at changing resource levels.
Numerical experiments show that RASGD achieves regret competitive with per-arrival LP re-solving and improves upon the tested first-order baselines, while retaining the computational efficiency of first-order methods. These results establish RASGD as a computationally
efficient approach to achieving high allocation quality in large-scale OLP.
}

\KEYWORDS{online linear programming; stochastic gradient descent; LP-free
first-order algorithms; resource-adaptive learning; logarithmic regret}

\maketitle

\section{Introduction}\label{sec:intro}
Online linear programming (OLP) allocates limited resources to sequentially arriving requests
\citep{AgrawalWangYe2014,Jasin2015,BumpensantiWang2020,LiSunYe2020,LiYe2022}.
Applications include online advertising, network revenue management, and cloud computing.
LLM inference also admits an OLP formulation when each request reveals its value and reserved token or compute allowance upon arrival.
In these settings, resource-price learning guides decisions made before future demand is known.

Computation becomes critical at scale. Google reports more than five trillion searches annually \citep{Google2025},
and LLM serving must sustain high throughput under latency and GPU-memory constraints \citep{Kwon2023}.
Repeated large LP solves can therefore be costly in the admission path, motivating infrequent re-solving and LP-free first-order methods
\citep{BalseiroLuMirrokni2020,LiSunYe2020}.
For finite-support inputs, \citet{LiWangZhang2026} obtain constant regret with \(O(\log\log T)\) LP solves.

We study a class permitting continuous rewards. Under non-degeneracy conditions,
\citet{Bray2025} establishes matching logarithmic upper and lower bounds, but the unknown-distribution policy attaining the upper bound solves an empirical LP every period.
Algorithm~5 of \citet{Ma2025} avoids LP solves and proves \(O(\log^2 T)\) regret using epoch-wise resource adaptation,
under growth and response conditions uniform over resource levels.
\citet{Gao2026} remove this resource-uniform requirement and obtain an \(O(T^{1/3})\) first-order guarantee in the continuous, nondegenerate setting.
We propose an algorithm framework that attains logarithmic regret with exact feasibility under standard non-degeneracy conditions.
Its defining feature is the combination of per-arrival resource feedback and a stepsize matched to inventory dynamics.

\subsection{Contributions}
We make the following three contributions.
\begin{enumerate}
\item \textbf{A resource-adaptive stochastic gradient descent framework.}
We propose RASGD, a first-order framework that couples per-arrival resource feedback with stochastic price updates through a current-resource dual objective.
The key design coordinates per-arrival resource adaptation with a single
horizon-dependent stepsize: decreasing steps support early learning, while
increasing steps near the horizon match the faster inventory dynamics.
Together, these components implement the current-resource pricing logic
of re-solving through stochastic gradient updates, without epoch restarts
or active-set identification. RASGD requires \(O(m)\) operations and memory
per arrival for \(m\) resources, solves no LP or sample-average optimization
problem, and produces feasible binary decisions on every sample path.

\item \textbf{Optimal logarithmic regret under standard non-degeneracy.}
We prove \(O(\log T)\) expected regret against the realized fractional
hindsight optimum under standard non-degeneracy conditions. The response condition is anchored at the initial optimal price, without requiring uniform growth and response bounds across resource levels. An embedding of a known multisecretary lower bound
establishes the optimal horizon order within this class, even with a known
distribution and unrestricted computation. The analysis explains how the
joint design achieves this guarantee: per-arrival feedback and the matched
late stepsize turn curvature at a fixed reference price into restoring
inventory drift. A joint early analysis controls the initial error, while
strict slack controls nonbinding prices. This yields logarithmic regret
through price accuracy and inventory stability without tracking a moving
optimizer.

\item \textbf{Strong empirical performance with first-order efficiency.}
Experiments across resource levels, horizons, and binding structures
demonstrate the practical effectiveness of RASGD. In representative
single-resource and multiple-resource benchmarks, it achieves allocation
quality close to that of state-of-the-art per-arrival LP re-solving and
lower regret than the other tested first-order methods. Its runtime
remains comparable to these first-order methods while being orders of
magnitude lower than per-arrival LP re-solving in the tested
implementations. These results establish RASGD as a computationally
efficient approach to achieving high allocation quality in large-scale OLP.
\end{enumerate}

\subsection{Related Literature}\label{sec:related}
\textbf{Learning and re-solving in online resource allocation.}
LP-based control and re-solving have been extensively studied in network revenue management, both with known demand distributions and with demand learning
\citep{JasinKumar2012,Jasin2015,ChenLiYe2024}.
\citet{AgrawalWangYe2014} use geometric price-learning intervals to obtain near-optimal competitive guarantees in the random-order model.
\citet{LiYe2022} connect empirical dual optimization to a population stochastic program and prove \(O(\log T\log\log T)\) regret for action-history-dependent learning under non-degeneracy conditions with resource-uniform growth and response bounds.
\citet{Bray2025} establishes logarithmic upper and lower bounds for continuous-valuation multisecretary and OLP models.
\citet{ChenWang2025} show that, with a known arrival distribution, the standard certainty-equivalent policy achieves \(O((\log T)^2)\) hindsight regret, including instances with fluid degeneracy.
Related work addresses continuous rewards under degeneracy, random consumption, and gapped multisecretary instances
\citep{JiangMaZhang2025,ZhangContinuous2026,ZhangBellman2026}.
\citet{ChenZhouEtAl2026} extend OLP to settings with stochastic resource replenishment.

\textbf{Infrequent re-solving and LP-free methods.}
Computational considerations motivate both reducing the frequency of re-solving and replacing optimization solves with first-order updates.
LP-free first-order methods include the dual mirror descent policy of \citet{BalseiroLuMirrokni2020}, with an \(O(\sqrt T)\) guarantee;
\citet{LiSunYe2020} use one-pass projected stochastic subgradient updates.
Primal--dual learning also applies to resource-constrained revenue management with large action spaces \citep{MiaoWangZhang2026}.

For unknown finite-support inputs, \citet{Jasin2015} obtains
\(O(\log^2 T)\) regret under standard non-degeneracy conditions using
\(O(\log T)\) LP re-solves. \citet{LiWangZhang2026} achieve constant regret with \(O(\log\log T)\) LP re-solves, including degenerate instances.
Their policy combines infrequent re-solving with first-order computations between solves.

For continuous-support inputs, several first-order approaches avoid per-period re-solving.
Algorithm~5 of \citet{Ma2025} updates the gradient's resource target at geometric epoch boundaries.
They established \(O(\log^2 T)\) regret bound under resource-uniform growth and response conditions.
\citet{Gao2026} separate price learning from decisions and obtain an \(O(T^{1/3})\) guarantee under standard non-degeneracy conditions for continuous inputs.

In our OLP setting, RASGD refreshes the resource target every arrival and matches its stepsize to inventory dynamics.
Under standard non-degeneracy conditions, it achieves optimal \(O(\log T)\) hindsight regret for a class permitting continuous rewards, with prefix feasibility and no LP solves.
The analysis controls prices and inventory around a fixed reference price, without requiring uniform growth and response bounds across resource levels.
Table~\ref{tab:comparison} summarizes the input classes, guarantees, and computational requirements.

\begin{table}[H]
\centering
\SingleSpacedXI
\small
\caption{Learning algorithms for stochastic online linear programming with unknown distributions}
\label{tab:comparison}
\medskip
\setlength{\tabcolsep}{4pt}
\renewcommand{\arraystretch}{1.22}
\begin{tabularx}{\textwidth}{@{}P{1.95cm}ZP{2.9cm}P{2.8cm}P{1.85cm}@{}}
\toprule
\textbf{Input class} & \textbf{Reference} & \textbf{Non-degeneracy} & \textbf{Regret / performance} & \textbf{\# of re-solvings}\\
\midrule
General & \citet{BalseiroLuMirrokni2020} & Allows degeneracy & $O(\sqrt T)$ & $0$\\
General & \citet{LiSunYe2020} & Allows degeneracy & $O(\sqrt T)^\dagger$ & $0$\\
\midrule
Finite & \citet{Jasin2015} & Standard & $O(\log^2 T)$ & $O(\log T)$\\
Finite & \citet{Gao2026} & Standard & $O(\log T)^\dagger$ & $0$\\
Finite & \citet{LiWangZhang2026} & Allows degeneracy & $O(1)$ & $O(\log\log T)$\\
\midrule
Continuous & \citet{LiYe2022}, Algorithm~2 & Standard & $O(\sqrt T\log T)$ & $O(\log T)$\\
Continuous & \citet{LiYe2022}, Algorithm~3 & \textbf{Standard + uniformity} & $O(\log T\log\log T)$ & $O(T)$\\
Continuous & \citet{Bray2025} & Standard & $O(\log T)$ & $O(T)$\\
Continuous & \citet{Ma2025}, Algorithm~5 & \textbf{Standard + uniformity} & $O(\log^2 T)$ & $0$\\
Continuous & \citet{Gao2026} & Standard & $O(T^{1/3})^\dagger$ & $0$\\
\textbf{Continuous} & \textbf{This paper} & \textbf{Standard} & $\mathbf{O}(\log T)$ & $\mathbf{0}$\\
\bottomrule
\end{tabularx}
\smallskip
\par\noindent\footnotesize
\emph{Notes:}
All listed policies learn from unknown arrival distributions.
``Standard'' denotes each paper's non-degeneracy conditions; ``Allows degeneracy''
means that no non-degeneracy assumption is required. ``+ uniformity''
indicates additional growth and response bounds that hold uniformly over a
range of resource levels. The precise assumptions differ across papers.
The Li--Ye rows use Assumptions~1--2 (Theorem~4) and
Assumptions~1 and~3 (Theorem~5), respectively.
$\dagger$ denotes guarantees involving both objective loss and constraint
violation.
\end{table}

\textbf{Organization.}
Section~\ref{sec:model} gives the model and assumptions;
Section~\ref{sec:algorithm} presents RASGD;
Section~\ref{sec:analysis} proves its regret guarantee and the lower-bound embedding.
Section~\ref{sec:experiments} reports experiments, and Section~\ref{sec:conclusion} concludes.
The appendices contain supporting proofs and theoretical parameter dependence.

\section{Model and Assumptions}\label{sec:model}
\subsection{Online Allocation and the Hindsight Benchmark}
There are \(m\) resources, a known horizon \(T\), and initial inventory \(B_1=Td\), where \(d\in\R_{++}^m\) is the initial resource level per period.
At arrival \(t\), the policy observes reward \(r_t\) and consumption vector \(a_t\in\R_+^m\), then chooses an irrevocable decision \(x_t\in\{0,1\}\).

A policy \(\pi=(\pi_t)_{t=1}^T\) specifies a decision rule for each arrival.
It is \emph{non-anticipative} if \(x_t\) uses only requests observed up to \(t\)
and internal randomization independent of the request sequence.
It is \emph{feasible} if, on every sample path,
\begin{equation}
 B_t:=Td-\sum_{s<t}a_sx_s\ge0,\qquad 1\le t\le T+1. \label{eq:inventory}
\end{equation}
Throughout, we consider feasible, non-anticipative policies.
We consider the standard packing model: accepting a request consumes nonnegative amounts of resources.
We compare the policy with the fractional allocation that knows the entire realized sequence:
\begin{equation}
 \OPT_T^{\rm H}
 =\max\left\{\sum_{t=1}^T r_t z_t:\ \sum_{t=1}^T a_tz_t\le Td,\ 0\le z_t\le1\right\}.\label{eq:benchmark}
\end{equation}
Following the convention in the literature \citep{LiYe2022}, we define the regret of a policy \(\pi\) relative to this benchmark as
\begin{equation}
 \Reg_T(\pi)=\E\left[\OPT_T^{\rm H}-\sum_{t=1}^T r_tx_t\right].\label{eq:regret}
\end{equation}

The expectation includes arrivals and policy randomization. A prefix-feasible binary allocation is feasible for the hindsight LP, so realized regret is nonnegative.

Write \(\F_{t-1}\) for the information available just before request \(t\), and \(\E_t[\cdot]=\E[\cdot\mid\F_{t-1}]\).

\subsection{Population Prices and Non-degeneracy}
Let \(\mathcal P\) denote the unknown joint distribution of a
generic request \((r,a)\in\R\times\R_+^m\). Expectations and probabilities
concerning a generic request below are taken under \((r,a)\sim\mathcal P\).

A price vector \(y\ge0\) assigns cost \(a^\top y\) to a request's resource bundle. The corresponding threshold rule accepts when the reward covers this cost. Define the population dual objective, preferred action, and mean consumption by
\begin{align}
 f_d(y)&=d^\top y+\E(r-a^\top y)_+,&
 X(y;r,a)&=\1\{r\ge a^\top y\},\label{eq:dual-response}\\
 h(y)&=\E[aX(y;r,a)],&
 F(y)&=d-h(y).\label{eq:field}
\end{align}

At equality we accept, fixing the subgradient selection throughout.
Here \(F(y)\in\partial f_d(y)\) is the mean resource surplus induced by \(X(y;r,a)\).

\needspace{5\baselineskip}
At the initial resource level \(d\), let
\begin{equation}
 \Y^*(d):=\operatorname*{arg\,min}_{y\in\R_+^m}f_d(y),
 \qquad y^*=y^*(d)\in\Y^*(d).
 \label{eq:reference-price}
\end{equation}
The analysis uses \(y^*\) as a fixed reference price.
The algorithm does not know \(y^*\) and does not track optimizers associated with the evolving resource level.
The following assumptions imply \(\Y^*(d)=\{y^*\}\) (Lemma~\ref{lem:geometry}).

\needspace{7\baselineskip}
\begin{assumption}\label{ass:gao}
Let \(y^*=y^*(d)\) be the reference price defined in \eqref{eq:reference-price}.
We impose the following input and non-degeneracy conditions.
\begin{enumerate}[label=\textup{(G\arabic*)},ref=G\arabic*]
\item\label{ass:input}The request pairs satisfy
\((r_t,a_t)\overset{\mathrm{i.i.d.}}{\sim}\mathcal P\),
\(t=1,\ldots,T\). Known finite constants \(\bar r,\bar a>0\) and
\(0<\underline d\le\bar d<\infty\) satisfy
\(|r|\le\bar r\) and \(a\in[0,\bar a]^m\)
\(\mathcal P\)-almost surely, and \(d\in[\underline d,\bar d]^m\).
\item\label{ass:moment} \(\E[aa^\top]\succeq\lambda_0 I_m\), where \(\lambda_0>0\).
\item\label{ass:threshold} For almost every \(a\) and every \(y\in\Xi_1:=\{y\ge0:\norm y\le\bar r/\underline d+1\}\),
\begin{equation}
 \lambda_1|a^\top(y-y^*)|
 \le\left|\Prob(r\ge a^\top y\mid a)-\Prob(r\ge a^\top y^*\mid a)\right|
 \le\lambda_2|a^\top(y-y^*)|,\label{eq:margin}
\end{equation}
where \(0<\lambda_1\le\lambda_2<\infty\).
\item\label{ass:complementarity} The reference solution satisfies strict complementarity: for every \(i\in[m]\),
\begin{equation}
 \begin{aligned}
 F_i(y^*)&=d_i-\E[a_i\1\{r\ge a^\top y^*\}]=0
 &&\text{if }y_i^*>0,\\
 F_i(y^*)&=d_i-\E[a_i\1\{r\ge a^\top y^*\}]>0
 &&\text{if }y_i^*=0.
 \end{aligned}\label{eq:complementarity}
\end{equation}
\end{enumerate}
\end{assumption}

Define the binding and nonbinding resource sets by
\begin{equation}
 \B=\{i:y_i^*>0\},\qquad \N=[m]\setminus\B. \label{eq:sets}
\end{equation}
Thus, by (\ref{ass:complementarity}),
\begin{equation}
 F_\B(y^*)=0,\qquad s_i:=F_i(y^*)>0\quad(i\in\N).
 \label{eq:reference-slack}
\end{equation}
The algorithm does not know these sets. Empty-coordinate vectors have norm zero, and minima over empty index sets are omitted.

\textbf{Interpretation of the assumptions.}
Condition (\ref{ass:input}) specifies the basic input assumptions.
Conditions (\ref{ass:moment})--(\ref{ass:complementarity}) follow the standard
non-degeneracy framework of \citet[Assumption~2]{LiYe2022}: a positive-definite
resource second moment, two-sided conditional threshold-response bounds,
and strict complementarity.
Conditions (\ref{ass:moment})--(\ref{ass:threshold}) provide a coercive and Lipschitz mean response around the fixed reference price \(y^*\), together with a quadratic reward-loss bound (Lemma~\ref{lem:geometry}).
These response conditions also appear in \citet[Example~1]{Gao2026}.
In (\ref{ass:threshold}), the reference price is fixed at \(y^*(d)\), while
the candidate price \(y\) ranges over the entire prescribed domain \(\Xi_1\).
Condition (\ref{ass:complementarity}) imposes strict complementarity on the response induced by \(X\): positive-price resources have zero slack, while zero-price resources have strictly positive slack.
No analogous growth or response bounds are imposed around optimizers
associated with the evolving resource vector \(q_t\).
This is narrower in scope than the resource-uniform conditions in \citet[Assumption~3]{LiYe2022} and \citet[Assumptions~3.1 and~4.1]{Ma2025}.
For each resource level \(d'\) in a prescribed range, those conditions
require growth and response bounds around the corresponding population
optimizer, with constants independent of \(d'\).
Thus, resource uniformity adds a requirement across resource levels, whereas
(\ref{ass:threshold}) varies only the candidate price \(y\in\Xi_1\) relative
to the single reference \(y^*(d)\).

\section{Resource-Adaptive Stochastic Gradient Descent}\label{sec:algorithm}
The RASGD framework integrates price learning, inventory feedback,
and feasible allocation in a single sequential policy. Within this
framework, RASGD takes one stochastic subgradient step for a dual objective indexed by current inventory, with a stepsize that
determines how quickly prices respond to that target. 

We first derive the current-resource
Lagrangian underlying re-solving and its stochastic first-order update. Then, we give the executable policy, and explain the two-sided stepsize.

\subsection{The Resource-Adaptive Dual Objective}\label{sec:feedback}

\textbf{The primal objective and its price representation.}
Consider the population relaxation at the initial resource level \(d\):
\begin{equation}
 \sup_{x(\cdot,\cdot)}\left\{\E[rx(r,a)]:\
       \E[ax(r,a)]\le d,\quad 0\le x(r,a)\le1\right\}.
 \label{eq:population-primal}
\end{equation}
 Here \(x(r,a)\) is measurable. Unlike the online policy, this relaxation constrains only expected consumption.
Its Lagrangian for \(y\ge0\) is
\begin{equation}
 \mathcal L_d(x,y)
 =\E[rx(r,a)]+y^\top\bigl(d-\E[ax(r,a)]\bigr)
 =d^\top y+\E[(r-a^\top y)x(r,a)].
 \label{eq:population-lagrangian}
\end{equation}
 Maximization separates across requests:
\begin{equation}
 X(y;r,a)=\1\{r\ge a^\top y\}
 \in\arg\max_{0\le x\le1}(r-a^\top y)x,\qquad
 \max_{0\le x\le1}(r-a^\top y)x=(r-a^\top y)_+.
 \label{eq:primal-best-response}
\end{equation}
 Choosing \(x=1\) at ties gives the dual problem
\begin{equation}
 \min_{y\ge0} f_d(y),\qquad
 f_d(y)=\sup_{0\le x(\cdot,\cdot)\le1}\mathcal L_d(x,y)
       =d^\top y+\E(r-a^\top y)_+.
 \label{eq:dual-minimization}
\end{equation}

Thus thresholding solves the inner maximization, and prices are updated by descent on \(f_d\).
Under Assumption~\ref{ass:gao}, the reference rule \(X(y^*;r,a)\) is feasible for \eqref{eq:population-primal} and earns \(f_d(y^*)\) (Lemma~\ref{lem:geometry}).

\textbf{Current-resource feedback.}
At the start of period \(t\), define the remaining horizon, resource rate, and capped allowance by
\begin{equation}
 n_t=T-t+1,\qquad q_t=B_t/n_t,\qquad
 q^c_{t,i}=\min\{q_{t,i},d_i+1\}. \label{eq:cap}
\end{equation}
Replacing \(d\) by a resource allowance \(q\) changes only the linear term
of the dual objective:
\begin{equation}
 f_q(y)=q^\top y+\E(r-a^\top y)_+
       =f_d(y)+(q-d)^\top y.
 \label{eq:adaptive-objective}
\end{equation}

The remaining population relaxation has objective \(n_t f_{q_t}(y)\).
RASGD uses \(f_{q_t^c}\) to keep the gradient bounded; the cap changes neither physical inventory nor feasibility.

\textbf{Relation to re-solving.}
For \(t\ge2\), an empirical re-solving policy minimizes over \(y\ge0\)
\begin{equation}
 \widehat f_t(y;q_t)=q_t^\top y+\frac{1}{t-1}\sum_{s=1}^{t-1}(r_s-a_s^\top y)_+.
 \label{eq:resolving-dual}
\end{equation}
RASGD retains the current-resource term but replaces empirical optimization with one fresh-request subgradient of \(f_{q_t^c}\).
It neither stores the empirical objective nor solves for a new optimizer.
The allowance is held fixed during each price step and recomputed from physical inventory before the next arrival.

\textbf{Coupled updates.}
Write \(\widetilde x_t=X(y_t;r_t,a_t)\).
When the cap, projection, and feasibility filter are inactive, price descent and inventory accounting give
\begin{equation}
 \begin{aligned}
 y_{t+1}&=y_t-\alpha_t(q_t-a_t\widetilde x_t),\\
 q_{t+1}&=q_t+\frac{q_t-a_t\widetilde x_t}{n_t-1},
 \qquad t<T.
 \end{aligned}\label{eq:feedback-recursions}
\end{equation}
The second equation follows from \(B_{t+1}=B_t-a_t\widetilde x_t\)
and the loss of one remaining period. The same consumption discrepancy drives both updates: spending above the allowance raises the price and lowers the resource rate; spending below it has the opposite effect.

\subsection{The Executable Policy}\label{sec:executable}

\textbf{What the sample estimates.}
At period \(t\), the current request defines the convex sample loss
\begin{equation}
 \ell_t(y;q_t^c)=(q_t^c)^\top y+(r_t-a_t^\top y)_+.
 \label{eq:sample-loss}
\end{equation}
 Since \(y_t,q_t^c\) are \(\F_{t-1}\)-measurable and the request is independent of the past,
\(\E_t[\ell_t(y;q_t^c)]=f_{q_t^c}(y)\) for each fixed \(y\).

 The preferred action gives, for every \(y\),
\begin{align}
 \ell_t(y;q_t^c)
 &\ge(q_t^c)^\top y+(r_t-a_t^\top y)\widetilde x_t\notag\\
 &=\ell_t(y_t;q_t^c)
   +(q_t^c-a_t\widetilde x_t)^\top(y-y_t).
 \label{eq:sample-subgradient-inequality}
\end{align}
 Thus the computable subgradient
\begin{equation}
 g_t=q_t^c-a_t\widetilde x_t
 \in\partial_y\ell_t(y_t;q_t^c)
 \label{eq:sample-subgradient}
\end{equation}
is valid even at a threshold tie. Taking conditional expectations yields
\begin{equation}
 \E_t[g_t]=q_t^c-h(y_t)
 =F(y_t)+(q_t^c-d)
 \in\partial f_{q_t^c}(y_t).
 \label{eq:conditional-subgradient}
\end{equation}

The gradient is taken with respect to price, holding \(q_t^c\) fixed.
It uses the preferred action \(\widetilde x_t\), not the feasible action \(x_t\): the full request reveals preferred consumption even when rejected for feasibility.
Using \(x_t\) would generally invalidate \eqref{eq:conditional-subgradient} near depletion.

\textbf{Projection, stepsizes, and execution.}
 Let \(\dmin=\min_i d_i\), \(\dmax=\max_i d_i\), \(D_0=\bar r/\dmin\), and define
\begin{equation}
 \sigma_0=\min\{1,D_0/2\},\qquad
 R=D_0+\sigma_0,\qquad
 \Y=\{y\ge0:\norm y\le R\}. \label{eq:Y}
\end{equation}

Projection takes positive parts, then rescales if the norm exceeds \(R\), using \(O(m)\) operations.
Since \(d^\top y^*\le f_d(y^*)\le f_d(0)\le\bar r\), every minimizer of \(f_d\) satisfies \(\norm{y^*}_1\le D_0\).
Thus \(\Y\subseteq\Xi_1\) contains all minimizers of \(f_d\) with spherical margin at least \(\sigma_0\).

Choose the gain and stepsize
\begin{equation}
 \kappa=\frac{2}{\mulo},\qquad s_0=64,\qquad
 \alpha_t=\frac{\kappa}{\max\{s_0,\min(t,T-t)\}}
 \quad(t<T). \label{eq:clock}
\end{equation}
 Algorithm~\ref{alg:rafo} uses this schedule without restarts and stores only inventory and prices, requiring \(O(m)\) operations and memory per arrival.

\textbf{Interpretation of the stepsize calibration input.}
The policy receives a conservative curvature bound
\begin{equation}
 0<\mulo\le\lambda_0\lambda_1. \label{eq:calibration}
\end{equation}
Any fixed valid positive lower bound is sufficient; a smaller value changes the gain and regret constant, but not the logarithmic horizon order.
Such inverse-curvature calibration is standard in online gradient methods and strongly convex stochastic optimization \citep{HazanAgarwalKale2007,RakhlinShamirSridharan2012}, and is also used in the first-order allocation method of \citet{Ma2025}.
The distribution, reference price, binding set, and slack margins remain unknown; the guarantee assumes a valid supplied bound.
\begin{algorithm}[H]
\caption[Resource-Adaptive Stochastic Gradient Descent (RASGD)]{Resource-Adaptive Stochastic Gradient Descent (\(\RA\))}\label{alg:rafo}
\begin{algorithmic}[1]
\Require \(T,d,\bar r,\mulo\); form \(\Y,\kappa,s_0\) by \eqref{eq:Y} and \eqref{eq:clock}
\State \(B_1\gets Td,\quad y_1\gets0\)
\For{\(t=1,\ldots,T\)}
 \State \(n_t\gets T-t+1,\quad q_t\gets B_t/n_t,\quad q^c_t\gets\min\{q_t,d+\mathbf1\}\)
 \State Observe \((r_t,a_t)\); set \(\widetilde x_t\gets\1\{r_t\ge a_t^\top y_t\}\)
 \State \(x_t\gets\widetilde x_t\,\1\{a_t\le B_t\text{ componentwise}\}\)
 \State \(B_{t+1}\gets B_t-a_tx_t\)
 \If{\(t<T\)}
  \State \(g_t\gets q^c_t-a_t\widetilde x_t\)
  \State \(y_{t+1}\gets\Pi_\Y(y_t-\alpha_tg_t)\)
 \EndIf
\EndFor
\end{algorithmic}
\end{algorithm}

\subsection{A Stepsize Matched to Inventory Dynamics}\label{sec:stepsize}
Early in the horizon, inventory rates change by \(O(1/T)\) per request, while the stepsize is \(\kappa/t\) for \(s_0\le t\le T/2\).
This supports price learning while retaining resource feedback.
Later, \eqref{eq:clock} matches the inventory scale in \eqref{eq:feedback-recursions}:
\[
 \alpha_t=\frac{\kappa}{n_t-1}
 \qquad\text{for }t\ge T/2\text{ and }n_t-1\ge s_0.
\]
This matching yields the binding price--inventory invariant used in the proof.
A stepsize that continued to decrease as \(1/t\) would instead remain of order \(1/T\) while inventory rates became increasingly sensitive.
The truncation \(s_0\) bounds the first and last steps; the midpoint does not trigger a reset or require active-set information.

\subsection{Illustration of Resource Adaptation and Stepsize Compensation}\label{sec:illustration}
Consider one resource with initial rate \(d=0.5\), unit consumption
\(a_t=1\), and i.i.d. rewards \(r_t\sim\operatorname{Unif}[0,1]\).
A price \(y\in[0,1]\) induces mean consumption \(h(y)=1-y\), which is also the acceptance probability because each acceptance consumes one unit.
The inventory-based target is \(q_t=B_t/n_t\): the available resource per remaining arrival.
Thus \(q_t\) specifies the consumption budget, while \(h(y_t)\) is the rate induced by the current price.
We consider states where the cap, projection, and feasibility filter are inactive.

\textbf{Resource adaptation: setting the consumption target.} For a fixed \(q_t\in(0,1)\), the scalar population relaxation maximizes reward at the threshold \(y=1-q_t\), for which \(h(y)=q_t\).
A higher threshold forgoes positive-reward requests despite available capacity; a lower threshold exceeds the mean resource budget.
Matching the rates is therefore the population benchmark for using the available inventory while selecting the highest-value requests that the budget permits.

At \(y_t=0.5\), the induced acceptance rate is \(h(y_t)=0.5\).
With 100 arrivals remaining, inventories of 70 and 30 give targets \(q_t=0.7\) and \(q_t=0.3\), respectively.
Keeping the rate at \(0.5\) would plan to consume 50 units: 20 fewer than the first inventory, but 20 more than the second.
These discrepancies create the risk of unused capacity or early depletion.
The price drift
\[
 \E_t[y_{t+1}-y_t]=\alpha_t\bigl(h(y_t)-q_t\bigr)
\]
is \(-0.2\alpha_t\) in the first state and \(+0.2\alpha_t\) in the second.
Because \(h(y)=1-y\), the resulting response raises consumption toward \(0.7\) when stock is abundant and lowers it toward \(0.3\) when stock is scarce.
The target itself changes after each decision, so the price response must also keep pace with that change.

\textbf{Stepsize compensation: responding before time runs out.} Now take \(y_t=q_t=0.5\) and reject one request, so \(x_t=0\).
The inventory target for the next arrival and the consumption rate induced by the updated price are
\[
 \begin{aligned}
 q_{t+1}&=\frac{B_t}{n_t-1}
         =0.5+\frac{0.5}{n_t-1},\\
 h(y_{t+1})&=1-\bigl(y_t-\alpha_t q_t\bigr)
         =0.5+\frac{\alpha_t}{2}.
 \end{aligned}
\]
For \((n_t,B_t)=(1000,500)\), the target \(q_{t+1}\) is \(500/999\approx50.05\%\); for \((100,50)\), it is \(50/99\approx50.51\%\).
The same rejection changes the target about ten times as much when one tenth of the opportunities remain.
A stepsize proportional to \(1/(n_t-1)\) makes the change in \(h(y_{t+1})\) respond on this same scale.

For \((n_t,B_t,y_t)=(100,50,0.5)\), the rejection leaves the target at \(q_{t+1}=50/99\).
Conditional on this updated state, the next inventory drift is
\[
 \E_{t+1}[q_{t+2}-q_{t+1}]
 =\frac{q_{t+1}-h(y_{t+1})}{98}.
\]
Applying two stepsizes to the same subgradient \(q_t-x_t=0.5\) gives:
\begin{center}
\small
\setlength{\tabcolsep}{5pt}
\renewcommand{\arraystretch}{1.2}
\begin{tabular}{@{}ccccc@{}}
\toprule
\shortstack{Stepsize\\\(\alpha_t\)} &
\shortstack{Inventory target\\\(q_{t+1}\)} &
\shortstack{Acceptance rate\\\(h(y_{t+1})\)} &
\shortstack{Gap \(h(y_{t+1})-q_{t+1}\)\\(percentage points)} &
\shortstack{Next drift\\in \(q\)}\\
\midrule
\(0.002\) & \(50.505\%\) & \(50.100\%\) & \(-0.405\) & Positive\\
\(2/99\)  & \(50.505\%\) & \(51.010\%\) & \(+0.505\) & Negative\\
\bottomrule
\end{tabular}
\end{center}
After the rejection, the same 50 units must be allocated over 99 rather than 100 arrivals; the smaller step leaves acceptance below the pace required by the remaining inventory, while the larger step lowers the price more quickly and starts catching up with the missed allocation opportunity.

\section{Regret Guarantee and Analysis}\label{sec:analysis}\label{sec:proof}
The following theorem gives the regret guarantee for RASGD.
\begin{theorem}\label{thm:main}
Under Assumption~\ref{ass:gao} and calibration input \eqref{eq:calibration},
Algorithm~\ref{alg:rafo} is prefix feasible and satisfies, for every \(T\ge2\),
\begin{equation}
 0\le\Reg_T(\RA)\le C\log T. \label{eq:main-bound}
\end{equation}
Here \(C\) is independent of \(T\).
\end{theorem}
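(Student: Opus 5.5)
Prefix feasibility and $\Reg_T\ge0$ come directly from the algorithm. The filter $x_t=\widetilde x_t\1\{a_t\le B_t\}$ keeps $B_{t+1}\ge0$ on every path, and any prefix-feasible binary allocation is feasible for \eqref{eq:benchmark}, so the realized regret is nonnegative. The work is the upper bound, which I split into three pieces. First, $\E[\OPT_T^{\rm H}]\le Tf_d(y^*)$ by weak duality, since $\OPT_T^{\rm H}\le \sum_t[d^\top y^*+(r_t-a_t^\top y^*)_+]$. Second, I write $r_tx_t=r_t\widetilde x_t-r_t\widetilde x_t\1\{a_t\not\le B_t\}$. Third, for the unfiltered preferred actions I expand
\[
 f_d(y^*)-\E_t[r_t\widetilde x_t]
 =\bigl[\E(r-a^\top y^*)X(y^*)-\E(r-a^\top y_t)X(y_t)\bigr]-(y^*)^\top\bigl(h(y_t)-d\bigr)-y_t^{\top}h(y_t)+\dots
\]
Using the quadratic reward-loss bound of Lemma~\ref{lem:geometry}, the Lipschitz response from (\ref{ass:threshold}), and $F_\B(y^*)=0$, this reduces to
\[
 Tf_d(y^*)-\E\sum_t r_t\widetilde x_t\lesssim\sum_t\E\norm{y_t-y^*}^2+\E\Bigl[(y^*_\B)^\top\bigl(B_{T+1}\bigr)_\B\Bigr]+\text{(filter loss)}.
\]
The middle term is the value of leftover binding inventory, and the filter loss is at most $\bar r$ times the number of periods after some binding resource nearly depletes. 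The theorem therefore reduces to three targets: $\sum_t\E\norm{y_t-y^*}^2=O(\log T)$, $\E\norm{(B_{T+1})_\B}=O(\log T)$, and an expected $O(\log T)$ count of depletion-filtered periods.

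\textbf{Early phase ($t\le T/2$).} Here $\alpha_t=\kappa/t$ with $\kappa=2/\mulo$. I run the standard strongly convex SGD recursion on $e_t=y_t-y^*$, using the coercivity $\ip{F(y)}{y-y^*}\ge\lambda_0\lambda_1\norm{y-y^*}^2$ from Lemma~\ref{lem:geometry} restricted to the binding block. The perturbation $q^c_t-d$ enters as a bias. Projection onto $\Y$ is nonexpansive and $y^*\in\Y$, so it only helps. This gives the coupled inequality
\[
 \E\norm{e_{t+1}}^2\le(1-2/t)\E\norm{e_t}^2+O(1/t^2)+O(1/t)\,\E\norm{e_t}\norm{q_t-d}.
\]
For the inventory, \eqref{eq:feedback-recursions} shows that $q_t-d$ is a martingale with increments $O(1/T)$ plus a drift driven by $h(y_t)-d=O(\norm{e_t})$ and scaled by $1/n_t$. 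So $\E\norm{q_t-d}^2\lesssim t/T^2+\bigl(\sum_{s<t}\E\norm{e_s}/T\bigr)^2$. Closing the two bounds jointly gives $\E\norm{e_t}^2=O(1/t)$ and $\E\norm{q_t-d}^2=O(1/T)$ at $t=T/2$. The early contribution to $\sum\E\norm{e_t}^2$ is then $O(\log T)$.

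\textbf{Late phase ($t\ge T/2$).} This is the main obstacle. Here $\alpha_t=\kappa/(n_t-1)$ exactly matches the inventory scale, so I look for a combined Lyapunov quantity. From \eqref{eq:feedback-recursions}, on the binding block (with inactive cap and filter),
\[
 y_{t+1}-y^*+\kappa(q_{t+1}-d)=y_t-y^*+\kappa(q_t-d),
\]
exactly. I call this the binding price--inventory invariant $w_t=e_{t,\B}+\kappa(q_{t,\B}-d_\B)$; it is a martingale up to the cap and projection corrections. Substituting $q_t-d=(w_t-e_t)/\kappa$ into the price recursion gives
\[
 e_{t+1}=e_t-\alpha_t\bigl[F(y_t)+(w_t-e_t)/\kappa\bigr]+\text{noise}.
\]
Because $\kappa\mulo=2$, the restoring force from $F$ dominates the $+e_t/\kappa$ term: net contraction is at rate about $\alpha_t\mulo/2=1/(n_t-1)$ toward $-$ (small multiple of $w_t$). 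Meanwhile $w_t$ carries only martingale noise of variance $O(\alpha_t^2)$ per step, so $\E\norm{w_t}^2\lesssim 1/T+\sum_{s\le t}(n_s-1)^{-2}=O(1/n_t)$. This yields $\E\norm{e_t}^2+\E\norm{q_{t,\B}-d_\B}^2=O(1/n_t)$, and summing over the late phase gives $O(\log T)$.

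\textbf{Remaining bookkeeping.} Three items finish the argument. Leftover binding inventory equals $n_t q_t$ near the end, so taking the truncation $n_t\approx s_0$ gives $\E\norm{B_{T+1,\B}}=O(1)$ plus a tail term. The filter is activated only if $q_{t,i}$ falls below $\bar a/n_t$, i.e.\ a deviation of order $d_i$. By the $O(1/n_t)$ variance bound and a Freedman-type concentration on $w_t$, this occurs with probability at most $O(1/n_t)$ summed, i.e.\ $O(\log T)$ expected lost periods. Finally, for nonbinding coordinates the strict slack $s_i>0$ creates a downward drift of size $\alpha_t s_i$ on $y_{t,i}$, keeping it at $0$ after an $O(\log T)$ transient. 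These resources deplete only on large deviations, with exponentially small probability. The cap $q^c\le d+1$ likewise binds only on such events. Adding all pieces gives $C\log T$.
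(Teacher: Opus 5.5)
\textbf{Early phase.} Your global decomposition is valid: weak duality, the quadratic reward-loss bound of Lemma~\ref{lem:geometry}, leftover binding inventory, and filter loss. Your late-phase mechanism is the paper's Step~2: the combination $e_{t,\B}+\kappa u_t$, with $\kappa\mulo=2$ making the feedback restoring. That combination is, however, exactly \emph{constant} while the cap, the binding projection and the filter are inactive. It carries no noise, so a Freedman bound on $w_t$ targets the wrong object.

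The first genuine gap is that the two bounds you propose to ``close jointly'' do not close.
\begin{itemize}
\item The inventory identity $n_tu_t=\sum_{s<t}(\Psi_\B(e_s)+\zeta_{s,\B})$ and Cauchy--Schwarz give $\E\norm{u_t}^2\lesssim T^{-2}\bigl(L^2t\sum_{s<t}\E\norm{e_s}^2+t\bigr)$. This needs $\E(\sum_s\norm{e_s})^2$, not $(\sum_s\E\norm{e_s})^2$ as you wrote.
\item The price recursion gives only $\E\norm{e_t}^2\lesssim1/t+\frac{\kappa^2}{2}\sup_s\E\norm{u_s}^2$.
\item Feeding one into the other at $t\approx T/2$ gives a feedback gain of order $L^2\kappa^2=4L^2/\mulo^2$. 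For a general condition number $L/\mulo$ the circular estimate is therefore vacuous.
\end{itemize}
The missing idea is that the price--inventory cancellation already holds early, with a time-varying weight. For the unprojected step, $\bar e_{t+1,\B}+\frac{\kappa(T-t)}{t}u_{t+1}=e_{t,\B}+\frac{\kappa(T-t)}{t}u_t$. The paper builds the joint energy $\mathcal V_t$ from this identity, together with a weighted projection inequality, and obtains $\E[\1\{t<S\}\norm{u_t}^2]=O(\epsT)$ without circularity.

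There is also a problem on $\N$. There the rate $q_{t,i}-d_i$ drifts deterministically up to about $s_i$ by the midpoint, so $\E\norm{q_M-d}^2=O(1/T)$ is false. Treating $q^c_t-d$ as a bias $\norm{e_t}\norm{q_t-d}$ would destroy the $1/t$ rate. The nonbinding term must instead be discarded by sign, using $z_t\ge0$ and $q^c_{t,i}-d_i+s_i>0$. That already requires the nonbinding rates not to drop by $s_i$.

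\textbf{Localization and exit control.} The second gap is that every moment bound you state is asserted for all paths. But the invariant, the sign argument, and the inactivity of cap, projection and filter hold only inside a safe region. A path that leaves it with $n$ periods left can lose order $n$: order-one price error per period, leftover binding inventory, and filtered periods. Your Lyapunov estimates rely on exactly these properties, so they only hold for paths that have not yet exited. Chebyshev applied to such killed moments does not bound the probability of having exited. So ``$O(1/n_t)$ summed'' is the right target, but it is not derived.

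The paper supplies the missing machinery as follows:
\begin{itemize}
\item an analytical stopping time $\tau$;
\item a midpoint entrance event $\G$ with $\Prob(\G^c)=O(\epsT)$, which includes a nonbinding buffer $q_{M,i}\ge d_i+s_i/4$, so failed entrance costs only $T\cdot O(\epsT)$;
\item a terminal cutoff $n_t\le H$ with $\kappa G/H\le\rho/2$, needed because near the end $\alpha_t$ reaches $\kappa/s_0$, which need not be small relative to the margins, so projection can break the invariant;
\item block exit probabilities $p_j\le C(\epsT+1/N_j)$ on dyadic blocks of remaining horizon, proved by maximal inequalities that retain the exit-producing update, which give $\sum_jN_jp_j=O(\log T)$.
\end{itemize}
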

The guarantee combines logarithmic hindsight loss with exact feasibility and the \(O(m)\) per-arrival computation of Algorithm~\ref{alg:rafo}.
The analysis below explains how price learning and resource adaptation jointly deliver this bound.
Appendix~\ref{app:constants} records the dependence of the constant on the problem parameters.

\subsection{Analysis Framework and Regret Decomposition}\label{sec:roadmap}\label{sec:reduction}
Two sources of loss must be controlled. Inaccurate prices can lead to inferior acceptance decisions, while inventory fluctuations can leave too little usable stock to continue making those decisions.
The analysis compares prices with the fixed reference \(y^*\) and separates these two effects.
Write
\[
 e_t=y_t-y^*,\qquad u_t=q_{t,\B}-d_\B,\qquad
 M=\lfloor T/2\rfloor+1,\qquad \epsT=\frac{\log T}{T}.
\]
Thus \(e_t\) is the price error, \(u_t\) is the binding resource-rate error, and \(M\) separates the early and late phases.
Throughout this section, norms are Euclidean and \(C\) denotes a finite constant independent of \(T\), whose value may change between displays.

We localize the analysis to states where inventory can support the preferred decisions.
During the early phase, the safe region restricts resource rates; during the late phase, it also keeps prices close to \(y^*\).
Let \(\G\) denote successful entrance into the late region at \(M\), with a buffer of nonbinding inventory.
The analytical stopping time \(\tau\) records an early resource exit, an unsuccessful midpoint entrance, a late safe-region exit, or the start of a final window of \(H\) periods.
Here \(H\) is fixed independently of \(T\).
Before \(\tau\), the preferred and implemented actions coincide; the algorithm itself continues throughout the horizon.
Appendix~\ref{app:setup} gives the precise construction and verifies these properties.
The following analysis treats \(T>4H\); the bounded range of smaller horizons is covered at the end of Step~3.

The fixed-reference reward comparison in Lemma~\ref{lem:geometry} separates the loss of a threshold decision into a quadratic price-error term and a linear resource term.
Physical inventory accounting turns the sum of the resource terms into an inventory balance at \(\tau\).
This gives the following reduction, proved in Appendix~\ref{app:setup}.

\begin{lemma}\label{lem:regret-reduction}
For \(T>4H\), the analytical stop above satisfies
\begin{equation}
 \Reg_T(\RA)
 \le L\E\sum_{t<\tau}\norm{e_t}^2+C\E n_\tau,
 \qquad L=\lambda_2m\bar a^2,
 \label{eq:regret-reduction}
\end{equation}
where \(n_\tau=T-\tau+1\).
\end{lemma}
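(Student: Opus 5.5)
The plan is to bound $\OPT_T^{\rm H}$ by the dual value at the fixed reference price, compare each accepted reward with the per-period dual value, and then group the leftover terms so that only a price-error sum and an inventory balance remain.

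\textbf{Step 1: dual upper bound for the benchmark.} By weak duality for \eqref{eq:benchmark} at the price $y^*$,
\[
 \OPT_T^{\rm H}\le Td^\top y^*+\sum_{t=1}^T(r_t-a_t^\top y^*)_+ .
\]
Taking expectations gives $\E\,\OPT_T^{\rm H}\le Tf_d(y^*)$. I would use this crude bound. Its gap to the true hindsight optimum is itself $O(\log T)$ at worst, but here it serves only as an upper bound, so no gap estimate is needed.

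\textbf{Step 2: split the horizon at $\tau$.} Write $Tf_d(y^*)=\sum_{t<\tau}f_d(y^*)+n_\tau f_d(y^*)$. Since $f_d(y^*)\le\bar r$, the tail satisfies $\E[n_\tau f_d(y^*)]\le\bar r\,\E n_\tau$, and the rewards collected after $\tau$ can simply be dropped because $r_t\ge-\bar r$ only costs a further $\bar r\,\E n_\tau$. Before $\tau$, the construction in Appendix~\ref{app:setup} guarantees $x_t=\widetilde x_t=X(y_t;r_t,a_t)$. The quantity $\tau$ is a stopping time with respect to $\F_{t-1}$, so optional stopping gives
\[
 \E\sum_{t<\tau}r_tx_t=\E\sum_{t<\tau}\E_t\bigl[rX(y_t;r,a)\bigr].
\]

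\textbf{Step 3: fixed-reference reward comparison.} Apply Lemma~\ref{lem:geometry} to each $t<\tau$ in the form
\[
 f_d(y^*)-\E_t[rX(y_t)]\le L\norm{e_t}^2+y^{*\top}\bigl(d-h(y_t)\bigr).
\]
This is the quadratic-plus-linear split: integrate \eqref{eq:margin} along the threshold gap, and use that the difference between the reward and $y^{*\top}a$ is controlled by $\lambda_2 m\bar a^2\norm{e_t}^2$. Since $y^*_\N=0$, the linear term reduces to $y^{*\top}_\B(d_\B-h_\B(y_t))$. Replacing $h(y_t)$ by the realized consumption $a_tx_t$ again uses the optional-stopping identity. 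Summing over $t<\tau$ then gives
\[
 \E\Bigl[\sum_{t<\tau}y^{*\top}_\B(d_\B-a_{t,\B}x_t)\Bigr]=\E\Bigl[y^{*\top}_\B\bigl(B_{\tau,\B}-n_\tau d_\B\bigr)\Bigr],
\]
using $B_{\tau}=Td-\sum_{t<\tau}a_tx_t$.

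\textbf{Step 4: control the inventory balance.} It remains to show $\E[y^{*\top}_\B B_{\tau,\B}]\le C\,\E n_\tau$. Write $B_{\tau,\B}=n_\tau q_{\tau,\B}$. At $\tau$ either the final window has started, in which case $n_\tau=H$, or a safe-region exit has occurred; in every case the binding rate $q_{\tau,\B}$ should be bounded by a constant (for example $\dmax+1$ plus the one-step jump). Hence $y^{*\top}_\B B_{\tau,\B}\le\norm{y^*}\,n_\tau\,C\le D_0Cn_\tau$, and the negative term $-y^{*\top}n_\tau d_\B$ can be discarded.

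I expect the main obstacle to be exactly this step: verifying that $q_{\tau,\B}$ stays bounded at every possible type of stop. An early resource exit could in principle occur with a large rate at the stopping instant. My first attempt would be to define the safe region as a two-sided rate band, so that the overshoot at exit is at most one request's worth, namely $\bar a/(n_\tau-1)$. Using $n_\tau\ge H\ge2$, this overshoot is $O(1)$, and multiplying by $n_\tau$ keeps the whole term $O(n_\tau)$. Combining Steps 1 through 4 yields \eqref{eq:regret-reduction}.
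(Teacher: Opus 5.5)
Your proposal is correct and follows the paper's proof essentially step by step: weak duality at \(y^*\), a \(2\bar r\,\E n_\tau\) charge for the periods from \(\tau\) on, the reward comparison \eqref{eq:quadratic} combined with the tower property on the predictable event \(\{t<\tau\}\), telescoping to \(n_\tau(q_\tau-d)\), and a one-step overshoot bound on the binding rate at each type of stop. The only difference is cosmetic. You drop \(-n_\tau y^{*\top}d\) and need only an upper bound on \(q_{\tau,\B}\), which already follows from the paper's band \(\norm{u_t}<\delta_B\) for \(t<\tau\) (so you need not define a new safe region) together with \(q_{\tau,\B}\le q_{\tau-1,\B}\,n_{\tau-1}/(n_{\tau-1}-1)\); the paper instead bounds \(\norm{q_{\tau,\B}-d_\B}\le\delta_B+C/H\) on both sides.
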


The first term measures cumulative decision error before the stop.
The second bounds the loss charged to the remaining horizon, including the terminal inventory balance.
We split the price-error term at \(M\), and the remaining-horizon term according to whether midpoint entrance succeeds:
\[
 \begin{aligned}
 \E\sum_{t<\tau}\norm{e_t}^2
 &=\underbrace{\E\sum_{\substack{t<M\\t<\tau}}\norm{e_t}^2}_{\text{Step 1}}
   +\underbrace{\E\sum_{M\le t<\tau}\norm{e_t}^2}_{\text{Step 2}},\\
 \E n_\tau
 &=\underbrace{\E[\1_{\G^c}n_\tau]}_{\text{Step 1}}
   +\underbrace{\E[\1_\G n_\tau]}_{\text{Step 3}}.
 \end{aligned}
\]
It suffices to bound each term on the right by \(C\log T\).
The three steps establish these bounds in the following order:
\begin{itemize}
\item \emph{Step~1: Early learning and midpoint entrance.}
Bound early price error and the remaining-horizon charge on \(\G^c\); establish the midpoint accuracy and inventory buffer needed on \(\G\).
\item \emph{Step~2: Late stabilization through matched updates.}
Starting from these midpoint estimates, control late price error through matched updates. This completes the cumulative price-error bound, leaving only the remaining horizon on \(\G\).
\item \emph{Step~3: Controlling premature exits.}
Use the late estimates to bound first-exit probabilities and hence the remaining-horizon charge on \(\G\). Substitute the completed bounds into \eqref{eq:regret-reduction} to conclude Theorem~\ref{thm:main}.
\end{itemize}

\subsection{Step 1: Early Learning and Midpoint Entrance}\label{sec:early}
We first control the early part of the price-error sum and the cost of unsuccessful midpoint entrance, while obtaining the initial conditions for late stabilization.
Price learning and inventory adjustment operate on different scales:
\[
 \alpha_t\asymp\frac1{t+s_0},\qquad
 \beta_t:=\frac1{n_t-1}\le\frac2T,\qquad t<M.
\]
Inventory rates therefore change slowly while prices are learned, but both updates use the same requests.
A price estimate alone would not guarantee adequate inventory at the midpoint.
The joint energy and localization argument in Appendix~\ref{app:early} controls both; here \(S\) is the early resource stop from Appendix~\ref{app:setup}.

\begin{lemma}\label{lem:early}
For \(T>4H\) and \(1\le t\le M\),
\begin{align}
 \E[\1\{t<S\}\norm{e_t}^2]
 &\le C\left(\frac1{t+s_0}+\epsT\right), \label{eq:early-pointwise}\\
 \Prob(\G^c)&\le C\epsT, \label{eq:G-prob}\\
 \E[\1_\G(\norm{e_M}^2+\norm{u_M}^2)]
 &\le C\epsT. \label{eq:G-moment}
\end{align}
\end{lemma}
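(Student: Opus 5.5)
The plan is to run a joint Lyapunov (energy) argument on the stopped process up to \(S\wedge M\). The energy combines the price error \(\norm{e_t}^2\) with the inventory error \(\norm{q_t-d}^2\). We then convert the resulting moment bounds into the probability and midpoint statements.

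\textbf{Step A: one-step price recursion.} Before \(S\) the cap is inactive, since \(q_t\) stays in a neighborhood of \(d\) by the definition of the early stop. By \eqref{eq:conditional-subgradient}, the conditional mean of the subgradient is \(F(y_t)+(q_t-d)\). Nonexpansiveness of \(\Pi_\Y\) together with \(y^*\in\Y\) gives
\[
 \E_t\norm{e_{t+1}}^2\le\norm{e_t}^2-2\alpha_t\,\ip{F(y_t)}{e_t}-2\alpha_t\ip{q_t-d}{e_t}+\alpha_t^2G^2,
\]
where \(G\) bounds \(\norm{g_t}\) using the cap. Lemma~\ref{lem:geometry} supplies the coercivity \(\ip{F(y)-F(y^*)}{y-y^*}\ge\lambda_0\lambda_1\norm{y-y^*}^2\ge\mulo\norm{e}^2\) on \(\Xi_1\). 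Combining this with \(\ip{F(y^*)}{e_t}\ge0\), which follows from strict complementarity (\(F_\B(y^*)=0\), and \(F_\N(y^*)>0\) with \(e_{t,\N}=y_{t,\N}\ge0\)), yields the contraction \(-2\alpha_t\mulo\norm{e_t}^2\). Since \(\kappa=2/\mulo\), we have \(2\alpha_t\mulo=4/\max\{s_0,t\}\ge 2/(t+s_0)\) up to constants. The cross term is absorbed by Young's inequality: \(2\alpha_t|\ip{q_t-d}{e_t}|\le \alpha_t\mulo\norm{e_t}^2+\alpha_t\norm{q_t-d}^2/\mulo\).

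\textbf{Step B: inventory moment.} For \(t<M\), \eqref{eq:feedback-recursions} gives \(q_{t+1}-d=(q_t-d)+\beta_t(q_t-a_t\widetilde x_t)\) with \(\beta_t\le2/T\). This is a martingale with bounded drift at scale \(1/T\). I would bound \(\E[\1\{t<S\}\norm{q_t-d}^2]\le C(\epsT+\text{drift})\). The noise part contributes \(\sum\beta_s^2\lesssim 1/T\). The drift \(\beta_t(h(y^*)-h(y_t)+q_t-d)\) is Lipschitz in \(e_t\) by \eqref{eq:margin}, so summing gives \(\frac1T\sum_{s<t}\norm{e_s}\). Cauchy--Schwarz, fed back with the Step~A bound, makes it \(O(\sqrt{\log T/T}\cdot\ldots)\). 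Squared, this is \(C\epsT\). This coupling is the main obstacle: the price bound needs \(\norm{q_t-d}^2\lesssim\epsT\), and the inventory bound needs \(\sum\norm{e_s}\lesssim\sqrt{T\log T}\). I would close it by an induction, or a single energy \(\Phi_t=\norm{e_t}^2+c\,T\beta\norm{q_t-d}^2\)-type combination, using the time scale separation \(\beta_t\ll\alpha_t\). For higher moments and for the probability bounds, I would use Freedman/Azuma on the martingale part of \(q\) to get \(\Prob(\sup_{t\le M}\norm{q_t-d}>\delta)\le C\epsT\), indeed exponentially small.

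\textbf{Step C: conclusions.} Unrolling Step~A with contraction factor \(1-2/(t+s_0)\) and additive term \(\alpha_t^2G^2+\alpha_t C\epsT\) gives \eqref{eq:early-pointwise}: the standard \(1/t\) SGD rate plus the \(\epsT\) floor from the inventory bias. At \(t=M\) this gives \(\E[\1\{M\le S\}\norm{e_M}^2]\le C\epsT\); with Step~B this gives \eqref{eq:G-moment}, since \(u_M\) is a coordinate block of \(q_M-d\). For \eqref{eq:G-prob}, \(\G^c\) is the union of three events. The first is an early resource exit \(S<M\), controlled by the concentration in Step~B. The second is a midpoint price outside the late ball, handled by Markov's inequality on \eqref{eq:G-moment} with a fixed radius. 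The third is insufficient nonbinding buffer, which holds w.h.p. because \(s_i>0\) makes nonbinding consumption drift strictly below \(d_i\) once \(\norm{e_t}\) is small. Azuma then gives deviations of order \(\sqrt{T\log T}\ll T s_i\) with probability \(\le C\epsT\). A union bound finishes the proof.
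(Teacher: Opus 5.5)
Your outline (price recursion with coercivity, an $O(\epsT)$ binding-inventory moment, then Markov and union bounds) matches the paper's skeleton. The gap is the step you yourself flag as the main obstacle: it is left open, and neither of your suggested closures works as stated.

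\emph{Fixed-constant induction.} With your sign-blind estimates (Young on the cross term, Cauchy--Schwarz on $\sum_{s<t}\Psi_\B(e_s)$), the hypotheses $U_t\le D\epsT$ and $E_t\le A/(t+s_0)+B\epsT$ require roughly $B\ge D/\mulo^2$ from the price recursion and $D\ge L^2B$ from the midpoint inventory moment. The loop gain is therefore at least $(L/\mulo)^2\ge1$; with the constants of \eqref{eq:forward-E} it is $4(L/\mulo)^2$. So the bootstrap cannot close.

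\emph{Diagonal energy.} An energy $\norm{e_t}^2+c\norm{u_t}^2$ with constant $c$ fails too. Early on, $u_t$ has no dissipation: the coefficient of $u_t$ in its own drift is $+\beta_t$. Meanwhile the price recursion contributes $\alpha_t\norm{u_t}^2/\mulo\gg c\beta_t\norm{u_t}^2$, so the energy is only bounded by a factor growing polynomially in $T$.

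\emph{What the paper uses.} The missing idea is that one sample drives both updates. The exact identity \eqref{eq:early-cancellation} cancels drift and noise in $e_{t,\B}+\kappa(T-t)u_t/t$. The time-weighted energy \eqref{eq:early-energy}, preserved under projection by \eqref{eq:weighted-projection}, then yields \eqref{eq:early-energy-drift}. This gives $\sum_{t<M}E_t\le s_0D^2+K_T$ and $U_t\le6K_T/(\kappa^2T)$ with polynomial constants.

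\emph{How your route could be completed.} A genuine Gronwall step also works. Set $\Sigma_t=\sum_{s<t}E_s$. The stopped inventory identity gives $U_t\le8(t-1)T^{-2}(L^2\Sigma_t+C)$. Solving \eqref{eq:forward-E} then gives $\Sigma_{t+1}\le\bigl(1+C\kappa^2L^2(t+s_0)T^{-2}\bigr)\Sigma_t+A/(t+s_0)+C(t+s_0)T^{-2}$. Since $\sum_{t<M}(t+s_0)T^{-2}=O(1)$, the product is $\exp(O(L^2/\mulo^2))$. That is independent of $T$, but exponentially worse than the paper's constants.

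Second, the nonbinding coordinates are mishandled. Your drift $\beta_t(h(y^*)-h(y_t)+q_t-d)$ omits $F(y^*)$, which equals $s_i>0$ on $\N$. Hence $q_{t,i}-d_i$ has deterministic part $(t-1)s_i/n_t$, which is of order $s_i$ once $t\asymp T$; indeed $\G$ requires $q_{M,i}\ge d_i+s_i/4$. So $\E[\1\{t<S\}\norm{q_t-d}^2]\le C\epsT$ is false, and applying Young to the full $\ip{q_t-d}{e_t}$ in Step A would leave an $O(1)$ price floor. The cap also need not be inactive on $\N$, since $S$ bounds nonbinding rates only from below. The fix, as in \eqref{eq:early-coercivity}, is to keep $F_i(y^*)=s_i$ and use $z_{t,i}(q^c_{t,i}-d_i+s_i)\ge0$. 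Then only $\ip{e_{t,\B}}{u_t}$ needs Young, and only $u_t$ must be $O(\sqrt{\epsT})$.

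Smaller points for \eqref{eq:G-prob}:
\begin{itemize}
\item The early-exit event is $\{S\le M\}$, not $\{S<M\}$. It needs a maximal bound that retains the exit state, because the killed moments $U_t$ discard it.
\item The drift part $\sum\Psi(e_s)$ is controlled only in $L^2$. You therefore get $C\epsT$ via Markov, not exponential concentration.
\item Midpoint-price failure must be bounded by Markov on $E_M=\E[\1\{M<S\}\norm{e_M}^2]$, not on \eqref{eq:G-moment}. The latter is restricted to $\G$ and says nothing about $\G^c$.
\item The binding event $\norm{u_M}\ge\delta_B/2$, which you omitted, is handled the same way via $U_M$.
\end{itemize}
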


For \(t<M\), \(\{t<\tau\}=\{t<S\}\).
Summing \eqref{eq:early-pointwise} therefore controls the early contribution:
\begin{equation}
 \E\sum_{\substack{t<M\\t<\tau}}\norm{e_t}^2
 \le C\sum_{t=1}^{M-1}\left(\frac1{t+s_0}+\epsT\right)
 \le C\bigl(\log T+T\epsT\bigr)\le C\log T.
 \label{eq:early-cumulative}
\end{equation}
The reciprocal term captures learning error, while \(\epsT\) allows for the coupled inventory fluctuations; both accumulate only logarithmically.
On successful entrance, \eqref{eq:G-moment} initializes the price--inventory combination needed for Step~2:
\[
 w:=e_{M,\B}+\kappa u_M,\qquad
 \E[\1_\G\norm w^2]
 \le C\E[\1_\G(\norm{e_M}^2+\norm{u_M}^2)]
 \le C\epsT.
\]
These are weighted moments on \(\G\), not an assumption that entrance always succeeds.
The remaining horizon on unsuccessful paths is already controlled by \eqref{eq:G-prob}:
\begin{equation}
 \E[\1_{\G^c}n_\tau]
 \le T\Prob(\G^c)\le CT\epsT\le C\log T.
 \label{eq:failed-entrance-charge}
\end{equation}
Combining \eqref{eq:early-cumulative} and \eqref{eq:failed-entrance-charge} with \eqref{eq:regret-reduction} leaves
\begin{equation}
 \Reg_T(\RA)
 \le C\log T
   +L\E\sum_{M\le t<\tau}\norm{e_t}^2
   +C\E[\1_\G n_\tau].
 \label{eq:after-early}
\end{equation}
Thus early loss and unsuccessful entrance are settled.
The midpoint moments and nonbinding buffer on \(\G\) supply the inputs for Step~2; the two unresolved terms in \eqref{eq:after-early} are handled by Steps~2 and~3, respectively.

\subsection{Step 2: Late Stabilization through Matched Updates}\label{sec:stability}
Using the midpoint estimates from Step~1, we now bound the late price-error sum in \eqref{eq:after-early}.
This requires control only before \(\tau\), without yet assuming that late exits are rare.
As the remaining horizon shrinks, each consumption discrepancy changes the resource rate more strongly.
The late stepsize matches this change: \(\alpha_t=\kappa\beta_t\).
On \(\G\), the binding cap, binding projection, and feasibility filter are inactive before \(\tau\), so the coupled updates \eqref{eq:feedback-recursions} give
\begin{equation}
 \begin{aligned}
 e_{t+1,\B}-e_{t,\B}&=-\kappa(u_{t+1}-u_t),&&M\le t<\tau,\\
 e_{t,\B}+\kappa u_t&=w,&&M\le t\le\tau.
 \end{aligned}
 \label{eq:invariant}
\end{equation}
The shared consumption noise cancels in this combination.
Since \(e_{t,\B}=w-\kappa u_t\), surplus inventory lowers its binding price relative to the fixed offset \(w\), while scarcity raises it.
Fixed-reference coercivity and the calibrated gain make this feedback restoring (Lemma~\ref{lem:restoring}):
\[
 \begin{aligned}
 \E_t[u_{t+1}-u_t]&=\beta_t\bigl(q_{t,\B}-h_\B(y_t)\bigr),\\
 \ip{u_t}{q_{t,\B}-h_\B(y_t)}
 &\le-\frac18\norm{u_t}^2
       +C\norm w^2+C\norm{y_{t,\N}}^2.
 \end{aligned}
\]
The negative term stabilizes binding inventory, up to the midpoint error and the nonbinding prices.
For nonbinding resources, strict slack pushes prices toward zero; Lemma~\ref{lem:reflection} controls their reflected noise.
Together with Step~1, this yields the bound below: \(\epsT\) is inherited from the midpoint, while \(1/n_t\) accounts for accumulated late noise (proof in Appendix~\ref{app:late}).

\begin{lemma}\label{lem:late}
For \(T>4H\) and \(M\le t\le T-H+1\),
\begin{equation}
 \E[\1_\G\1\{t<\tau\}(\norm{e_t}^2+\norm{u_t}^2)]
 \le C\left(\epsT+\frac1{n_t}\right).
 \label{eq:late-bounds}
\end{equation}
\end{lemma}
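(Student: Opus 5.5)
The proof tracks three quantities from the midpoint \(M\) to \(\tau\), each on \(\G\): the binding inventory error \(u_t\), the binding price error, and the nonbinding prices. The binding price error is recovered from \(u_t\) through the invariant \eqref{eq:invariant}, since \(e_{t,\B}=w-\kappa u_t\) gives \(\norm{e_{t,\B}}^2\le 2\norm w^2+2\kappa^2\norm{u_t}^2\). Because \(\E[\1_\G\norm w^2]\le C\epsT\) by Lemma~\ref{lem:early}, it suffices to bound \(\E[\1_\G\1\{t<\tau\}\norm{u_t}^2]\) and \(\E[\1_\G\1\{t<\tau\}\norm{y_{t,\N}}^2]\) by \(C(\epsT+1/n_t)\). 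The nonbinding term can be handled first and independently. Before \(\tau\), the late safe region keeps \(q_{t,\N}\) within a buffer of \(d_\N\), and it keeps prices close to \(y^*\). Strict slack \(s_i>0\), combined with Lipschitz continuity of \(h\) (from (\ref{ass:threshold})), then gives a conditional drift \(\E_t[g_{t,i}]\ge s_i/2>0\) for \(i\in\N\). The projected recursion \(y_{t+1,i}=(y_{t,i}-\alpha_tg_{t,i})_+\) is therefore a reflected walk with negative drift. I would invoke Lemma~\ref{lem:reflection} to get \(\E[\1_\G\1\{t<\tau\}y_{t,i}^2]\le C(\E[\1_\G y_{M,i}^2]\rho_t+\alpha_t^2)\) for some contraction factor \(\rho_t\). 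The midpoint moment is \(\le C\epsT\), and \(\alpha_t\asymp 1/n_t\), so the bound is within \(C(\epsT+1/n_t)\) (in fact \(1/n_t^2\)).

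For the binding inventory, set \(V_t=\norm{u_t}^2\) and expand the recursion \(u_{t+1}=u_t+\beta_t(q_{t,\B}-a_{t,\B}\widetilde x_t)\), which is valid before \(\tau\) on \(\G\) because the cap and filter are inactive. This gives
\[
 \E_t[V_{t+1}]\le V_t+2\beta_t\ip{u_t}{q_{t,\B}-h_\B(y_t)}+C\beta_t^2 .
\]
Lemma~\ref{lem:restoring} bounds the inner product, so
\[
 \E_t[V_{t+1}]\le\Bigl(1-\tfrac{\beta_t}{4}\Bigr)V_t+C\beta_t\bigl(\norm w^2+\norm{y_{t,\N}}^2\bigr)+C\beta_t^2 .
\]
Multiplying by \(\1_\G\1\{t+1<\tau\}\le\1_\G\1\{t<\tau\}\) and taking expectations produces a scalar recursion for \(v_t=\E[\1_\G\1\{t<\tau\}V_t]\):
\[
 v_{t+1}\le(1-\beta_t/4)v_t+C\beta_t(\epsT+1/n_t)+C\beta_t^2 .
\]
The product \(\prod_{s=t'}^{t-1}(1-\beta_s/4)\) behaves like \((n_t/n_{t'})^{1/4}\), which decays only as a power of the remaining horizon. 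Unrolling from \(v_M\le C\epsT\) then gives
\[
 v_t\le C\epsT+C\sum_{s=M}^{t-1}\Bigl(\frac{n_t}{n_s}\Bigr)^{1/4}\bigl(\beta_s\epsT+\beta_s^2\bigr).
\]
The \(\epsT\) part sums to \(C\epsT\), since \(\sum_s\beta_s(n_t/n_s)^{1/4}\) is bounded. The noise part is \(\sum_s n_s^{-2}(n_t/n_s)^{1/4}\le C/n_t\). Note that \(\sum_s\beta_s\) itself is not bounded but grows like \(\log(n_M/n_t)\). The \(\epsT\) term survives anyway because \(\sum_s\beta_s(n_t/n_s)^{1/4}\) converges: the weight \((n_t/n_s)^{1/4}\) makes the sum geometric in the dyadic scale of \(n_s\). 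The range \(t\le T-H+1\) keeps \(n_t\ge H\ge s_0\), so \(\alpha_t=\kappa\beta_t\) holds throughout.

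The main obstacle is the cross term \(\norm{y_{t,\N}}^2\) in Lemma~\ref{lem:restoring}, together with the stopping-time bookkeeping. The nonbinding prices are themselves controlled only on \(\{t<\tau\}\), and the expectation of the product of \(\norm{y_{t,\N}}^2\) with the indicator must be bounded using the reflected-noise estimate at the same time index. Doing the nonbinding bound first, as above, lets it be plugged into the \(v_t\) recursion as an exogenous forcing term of size \(C(\epsT+1/n_t)\). If Lemma~\ref{lem:reflection} instead gives only a summed bound, I would run a joint Lyapunov function \(V_t+c\norm{y_{t,\N}}^2\) with small \(c\), absorbing the cross terms into the \(\beta_t/4\) contraction.
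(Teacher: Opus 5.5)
Your proposal is correct and follows essentially the paper's route. You reduce \(e_{t,\B}\) to \(w\) and \(u_t\) via the invariant, bound the nonbinding prices first with Lemma~\ref{lem:reflection}, and feed that bound together with \(W\le C\epsT\) into the killed restoring recursion \(v_{t+1}\le(1-\beta_t/4)v_t+C\beta_t\epsT+C\beta_t^2\); the only difference is that you solve this recursion by explicit unrolling with weights \((n_t/n_s)^{1/4}\), where the paper uses the supersolution \(K(\epsT+1/n)\) and backward induction, which is an equivalent computation.

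One small correction: Lemma~\ref{lem:reflection} discards the drift, so it gives neither a contraction factor nor a \(1/n_t^2\) rate. It gives only \(2\E[\1_\G\norm{z_M}^2]+C\sum_{s=M}^{t-1}\alpha_s^2\le C(\epsT+1/n_t)\), which is exactly the forcing size your binding recursion needs, so your fallback joint Lyapunov function is unnecessary.
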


Since \(\tau\le M\) on \(\G^c\), only successful entrances contribute to the late error.
Summing \eqref{eq:late-bounds} gives
\begin{equation}
 \E\sum_{M\le t<\tau}\norm{e_t}^2
 \le C\sum_{n=H}^{n_M}\left(\epsT+\frac1n\right)
 \le C\left(T\epsT+\log T\right)
 \le C\log T.
 \label{eq:late-cumulative}
\end{equation}
Combining this with the early bound \eqref{eq:early-cumulative} completes the first term of the regret reduction:
\begin{equation}
 \E\sum_{t<\tau}\norm{e_t}^2
 =
 \E\sum_{\substack{t<M\\t<\tau}}\norm{e_t}^2
 +\E\sum_{M\le t<\tau}\norm{e_t}^2
 \le C\log T.
 \label{eq:cumulative}
\end{equation}
In particular, \eqref{eq:late-cumulative} resolves the first outstanding term in \eqref{eq:after-early}, reducing the regret bound to
\begin{equation}
 \Reg_T(\RA)\le C\log T+C\E[\1_\G n_\tau].
 \label{eq:after-late}
\end{equation}
Only the remaining horizon after successful entrance is still unbounded.

\subsection{Step 3: Controlling Premature Exits}\label{sec:implementation}\label{sec:maximal}
By \eqref{eq:after-late}, the sole remaining task is to bound \(\E[\1_\G n_\tau]\).
The stopped moment bound \eqref{eq:late-bounds} discards a path at its exit, so it does not directly bound this quantity.
We use it instead to control maxima that include the first exit state, and weight each exit by the number of periods left.
We group possible exits by their remaining-horizon scale, halving that scale until at most \(2H\) periods remain:
\[
 \begin{gathered}
 N_0=n_M,\qquad N_{j+1}=\lfloor N_j/2\rfloor,\qquad
 J=\min\{j:N_j\le2H\},\\
 (a_j,b_j]=(T-N_j+1,\,T-N_{j+1}+1],\qquad 0\le j<J.
 \end{gathered}
\]
On \(\G\), these blocks cover every exit with \(n_\tau>2H\), and satisfy
\[
 n_t\le N_j\quad(a_j<t\le b_j),\qquad
 \sum_{j<J}N_j\le T+1,\qquad J=O(\log T).
\]
On each block, Step~2 controls the entrance error and the drift, while the accumulated noise variance is \(O(1/N_j)\).
Maximum estimates that retain the exit-producing update turn these bounds into the following lemma, proved in Appendix~\ref{app:exit}.
Nonbinding lower exits are controlled there using the midpoint buffer and positive resource drift.

\begin{lemma}\label{lem:remaining}
For \(T>4H\), the block exit probabilities satisfy
\begin{equation}
 p_j:=\Prob\big(\G,\ a_j<\tau\le b_j,\ n_\tau>2H\big)
 \le C\left(\epsT+\frac1{N_j}\right),
 \qquad 0\le j<J.
 \label{eq:block-exit}
\end{equation}
\end{lemma}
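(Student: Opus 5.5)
The plan is to bound each block exit probability by a maximal inequality over the block, applied to the late processes from Step~2. On \(\G\) an exit with \(a_j<\tau\le b_j\) and \(n_\tau>2H\) can be of three kinds:
\begin{itemize}
\item a binding resource-rate exit, where \(\norm{u_\tau}\) leaves the safe radius \(\rho\);
\item a price exit, where \(\norm{e_\tau}\) leaves its late safe radius;
\item a nonbinding lower exit, where \(q_{\tau,i}\) falls below the buffer level for some \(i\in\N\).
\end{itemize}
Each kind will be bounded separately and the results added. The radii are constants independent of \(T\), so Markov/Chebyshev bounds at these levels cost only constants.

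\textbf{Binding exits.} Before \(\tau\) the invariant \eqref{eq:invariant} gives \(e_{t,\B}=w-\kappa u_t\). A binding price exit is therefore either a rate exit or an event \(\norm w\gtrsim\) const; the latter has \(\Prob(\G,\norm w\ge c)\le C\E[\1_\G\norm w^2]\le C\epsT\) by Step~1.

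For the rate itself, write \(u_{t+1}=u_t+\beta_t(q_{t,\B}-h_\B(y_t))+\beta_t\xi_t\), where \(\xi_t\) is a bounded martingale difference. I will work with \(V_t=\norm{u_t}^2\) on the block. By Lemma~\ref{lem:restoring}, the drift of \(V_t\) is at most \(-\frac{\beta_t}{4}V_t+C\beta_t(\norm w^2+\norm{y_{t,\N}}^2)+C\beta_t^2\). Summed over a block, \(\sum\beta_t\le C\) and \(\sum\beta_t^2\le C/N_j\).

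I will stop the block process one step after the exit, so the exit-producing update is retained. The increment at that step is at most \(C\beta_t=O(1/N_j)\), so the overshoot is harmless.

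The maximal bound comes from Doob's inequality applied to the martingale part \(\sum_{a_j<s\le t\wedge\tau}\beta_s\langle u_s,\xi_s\rangle\), whose variance is \(C\sum\beta_s^2\le C/N_j\), together with a crude bound on the drift part. The entrance value \(\E[\1_\G\1\{a_j<\tau\}V_{a_j}]\) is controlled by Lemma~\ref{lem:late} at \(t=a_j\), which gives \(C(\epsT+1/N_j)\). The nonbinding-price contribution \(\sum\beta_t\E\norm{y_{t,\N}}^2\) is controlled by Lemma~\ref{lem:late} as well.

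With these inputs, Chebyshev at level \(\rho^2\) gives a contribution of \(C(\epsT+1/N_j)\) from binding exits.

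\textbf{Nonbinding prices.} Here I reuse the reflection estimate of Lemma~\ref{lem:reflection}. Strict slack \(s_i\) gives negative drift, so I will apply a reflected maximal inequality: a Skorokhod-type bound where the reflected noise has variance \(\sum\alpha_t^2\le C/N_j\), plus the entrance moment from Lemma~\ref{lem:late}. This again yields \(C(\epsT+1/N_j)\).

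\textbf{Nonbinding lower exits.} For \(i\in\N\), \(\E_t[q_{t+1,i}-q_{t,i}]=\beta_t(q_{t,i}-h_i(y_t))\). While the prices are within their safe radius, \(h_i(y_t)\le d_i-s_i/2\), using the Lipschitz response from (\ref{ass:threshold}). Hence, while \(q_{t,i}\) sits near the buffer level, which is \(\ge d_i-s_i/4\), the drift is nonnegative.

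On \(\G\), entrance starts with a buffer \(q_{M,i}\ge d_i-s_i/8\). Reaching the lower level therefore requires an excursion of the martingale \(\sum\beta_t\xi_{t,i}\) of size \(\ge s_i/8\) within the block, which is again \(O(1/N_j)\) in probability by Doob's inequality. An alternative route to the same exit is a price exit first, and that is already counted.

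\textbf{Main obstacle.} The delicate step is making the maximal inequality for \(V_t\) genuinely uniform over the block while including the exit step. The coupling term \(C\beta_t\norm{y_{t,\N}}^2\) and the possibly non-monotone drift must be handled without conditioning on survival beyond \(\tau\). I would first try the supermartingale \(V_t+C\sum_{s\ge t}\beta_s(\norm w^2+\norm{y_{s,\N}}^2+\beta_s)\) stopped at \(\tau\wedge b_j\), and apply Ville's maximal inequality to it. If the drift of \(y_\N\) makes this non-adapted, I would instead dominate \(\norm{y_{t,\N}}^2\) by its running reflected bound.
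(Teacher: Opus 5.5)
Your plan works and has the same skeleton as the paper's proof. On \(\G\cap\{n_\tau>2H\}\) the terminal window cannot cause the stop. Binding and price exits are bounded by a block maximal estimate started from the killed moments of Lemma~\ref{lem:late} at \(a_j\) (where \(n_{a_j}=N_j\)). That estimate uses \(\sum_{a_j\le t<b_j}\beta_t\le C\), \(\sum_{a_j\le t<b_j}\beta_t^2\le C/N_j\), the invariant with \(\E[\1_\G\norm w^2]\le C\epsT\), and a reflected bound on \(z\), and is closed by Markov. Nonbinding lower exits use the midpoint buffer, positive drift, and Doob.

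The differences from the paper are technical, and both versions work. Lemma~\ref{lem:block} expands the vector \(u_{k\wedge\tau}\) linearly and bounds its drift by weighted Cauchy--Schwarz against the killed moments; it uses no restoring inequality on the block. It also takes the \(z\)-maximum from the midpoint via \eqref{eq:z-global-max}. You instead use \(V=\norm u^2\) with Lemma~\ref{lem:restoring} and restart Lemma~\ref{lem:reflection} at \(a_j\).

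Your stated main obstacle disappears. With \(D_t=u_t+\Psi_\B(e_t)\), on \(A=\G\cap\{a_j<\tau\}\) and for \(a_j\le k\le b_j\), pathwise
\[
 \norm{u_{k\wedge\tau}}^2\le\norm{u_{a_j}}^2
 +\sum_{t=a_j}^{b_j-1}\1\{t<\tau\}\,2\beta_t\bigl(C_w\norm w^2+C_z\norm{z_t}^2\bigr)
 +C\sum_{t=a_j}^{b_j-1}\beta_t^2+\max_{a_j\le l\le b_j}|M_l|,
\]
where \(M_l=2\sum_{t=a_j}^{l-1}\1\{t<\tau\}\beta_t\ip{u_t}{\zeta_{t,\B}}\).
\begin{itemize}
\item The first two terms have first moments \(C(\epsT+1/N_j)\) on \(A\). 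Killed moments suffice here because the sum is linear in them.
\item \(\E[\1_AM_{b_j}^2]\le C/N_j\), because \(\norm{u_t}<\delta_B\) before \(\tau\).
\item Markov and Doob at fixed levels then finish the argument. For rate exits the level is \(\delta_B^2\); note that \(\delta_B\) is the binding radius, while \(\rho\) is the price radius.
\end{itemize}
So no Ville argument is needed, and your proposed process \(V_t+C\sum_{s\ge t}(\cdots)\) is not adapted anyway. Restoring is even dispensable, since \(|\ip{u_t}{D_t}|\le C(\norm{u_t}^2+\norm{e_t}^2)\) already has killed moment \(C(\epsT+1/n_t)\).

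The step that needs repair is the nonbinding lower exit.
\begin{itemize}
\item Use the actual thresholds. \eqref{eq:G} gives \(q_{M,i}\ge d_i+s_i/4\), and \eqref{eq:late-stop} puts the exit at \(d_i-\delta_i\). With your entrance level \(d_i-s_i/8\), the gap \(\delta_i-s_i/8\) can be nonpositive, since \(\delta_i\) may be at most \(s_i/8\).
\item The drift \(\beta_t(q_{t,i}-h_i(y_t))\ge\beta_ts_i/2\) is positive at every \(t<\tau\), not only near the buffer. So it can be discarded pathwise from \(M\).
\item More substantively, the required deviation need not come from the increments of block \(j\). The state \(q_{a_j,i}\) has no fixed buffer above the exit level, so a within-block excursion bound misses crossings built up over several blocks.
\end{itemize}
Apply Doob instead to the martingale accumulated from the midpoint, \(\sum_{t=M}^{k-1}\1_\G\1\{t<\tau\}\beta_t\zeta_{t,i}\). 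On a lower exit in block \(j\), it must reach \(-(s_i/4+\delta_i)\) by \(b_j\). Its variance \(\sum_{t=M}^{b_j-1}\beta_t^2\) is a tail sum of \(n^{-2}\) down to \(N_{j+1}\ge(N_j-1)/2\), hence \(O(1/N_j)\). This is exactly the paper's argument, and with it your bound \(C(\epsT+1/N_j)\) follows.
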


An exit in block \(j\) costs at most \(N_j\) periods, so the \(1/N_j\) probability term contributes only \(O(1)\) per block.
On \(\G\), exits outside these blocks leave at most \(2H\) periods. Summing the charges gives
\begin{align}
 \E[\1_\G n_\tau]
 &\le2H+\sum_{j<J}N_jp_j \notag\\
 &\le2H+C\epsT\sum_{j<J}N_j+CJ \notag\\
 &\le C\bigl(T\epsT+H+J\bigr)\le C\log T.
 \label{eq:successful-entrance-charge}
\end{align}
This resolves the last term in \eqref{eq:after-late}.
Together with the failed-entrance bound \eqref{eq:failed-entrance-charge} from Step~1, it also gives the full remaining-horizon estimate:
\begin{equation}
 \E n_\tau
 =\E[\1_{\G^c}n_\tau]+\E[\1_\G n_\tau]
 \le C\log T.
 \label{eq:remaining}
\end{equation}

To complete Theorem~\ref{thm:main}, substitute \eqref{eq:cumulative} and \eqref{eq:remaining} into \eqref{eq:regret-reduction}:
\[
 \Reg_T(\RA)
 \le
 \underbrace{L\E\sum_{t<\tau}\norm{e_t}^2}_{O(\log T)\text{ by Steps 1--2}}
 +
 \underbrace{C\E n_\tau}_{O(\log T)\text{ by Step 3}}
 \le C\log T.
\]
The feasibility filter enforces \(a_tx_t\le B_t\) on every path, giving prefix feasibility and nonnegative regret.
The bounded range of smaller horizons is covered by
\[
 0\le\Reg_T(\RA)\le2\bar rT
 \le8\bar rH
 \le\frac{8\bar rH}{\log 2}\log T,
 \qquad 2\le T\le4H.
\]
The large- and small-horizon bounds, together with pathwise feasibility, complete the proof of Theorem~\ref{thm:main}.

\subsection{Optimality of the Logarithmic Rate}\label{sec:lower-bound}
 The following embedding of the multisecretary lower bound of \citet{Bray2025} establishes the optimality of the horizon order within Assumption~\ref{ass:gao}, including a mixed active set.

\begin{corollary}\label{cor:lower-bound}
Consider the two-resource packing instance
\begin{equation}
 r\sim\operatorname{Unif}[0,1],\qquad
 a=(1,Z)^\top,\quad Z\sim\operatorname{Bernoulli}(1/2),\qquad
 d=(1/2,1)^\top, \label{eq:lower-bound-instance}
\end{equation}
where $r$ and $Z$ are independent and requests are i.i.d.
This instance satisfies Assumption~\ref{ass:gao}, with $\B=\{1\}$ and $\N=\{2\}$.
Let $\Pi_T$ be the class of all possibly randomized, nonanticipating binary policies satisfying \eqref{eq:inventory}, including policies that know the arrival distribution.
There are constants $\gamma>0$ and $T_0<\infty$, independent of the policy and horizon, such that for every even $T\ge T_0$,
\begin{equation}
 \inf_{\pi\in\Pi_T}\Reg_T(\pi)\ge\gamma\log T.
 \label{eq:lower-bound}
\end{equation}
\end{corollary}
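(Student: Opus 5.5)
The proof has two parts: checking that the instance \eqref{eq:lower-bound-instance} satisfies Assumption~\ref{ass:gao}, and reducing the two-resource problem to the multisecretary problem so that the lower bound of \citet{Bray2025} applies.

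\textbf{Verifying the assumptions.} Condition (\ref{ass:input}) holds with \(\bar r=\bar a=1\), \(\underline d=1/2\) and \(\bar d=1\). For (\ref{ass:moment}), the second-moment matrix is
\[
\E[aa^\top]=\begin{pmatrix}1&1/2\\1/2&1/2\end{pmatrix},
\]
which has determinant \(1/4>0\) and is therefore positive definite.

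To locate \(y^*\), I will guess \(y^*=(1/2,0)\). The acceptance rule is then \(r\ge 1/2\), so resource~1 is consumed at rate \(1/2=d_1\) and resource~2 at rate \(\E[Z\1\{r\ge1/2\}]=1/4<1=d_2\). These rates give \(F_1(y^*)=0\) and \(F_2(y^*)=3/4>0\). So \(y^*\) satisfies the KKT conditions for the convex problem \(\min f_d\), strict complementarity (\ref{ass:complementarity}) holds, and \(\B=\{1\}\), \(\N=\{2\}\).

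For (\ref{ass:threshold}), fix \(a\in\{(1,0),(1,1)\}\). Then \(\Prob(r\ge a^\top y\mid a)=1-\operatorname{clip}(a^\top y)\), where \(\operatorname{clip}\) truncates to \([0,1]\). Since \(a^\top y^*=1/2\) lies in the interior of \([0,1]\), the upper bound holds with \(\lambda_2=1\).

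The lower bound is the delicate step, because on \(\Xi_1\) the value \(a^\top y\) can leave \([0,1]\). The plan is as follows.
\begin{itemize}
\item Since \(y\ge0\) and \(a_1=1\), we have \(a^\top y\ge0\), so only the upper overshoot \(a^\top y>1\) matters.
\item On that region the probability difference equals exactly \(1/2\).
\item Meanwhile \(|a^\top(y-y^*)|\le \sqrt2(\bar r/\underline d+1)+1/2\), a bounded constant.
\end{itemize}
Taking \(\lambda_1=\min\{1,\ (1/2)/(\sqrt2\cdot 3+1/2)\}\) then gives the two-sided bound on all of \(\Xi_1\). This is the only spot where I expect computation to be needed.

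\textbf{Reduction to multisecretary.} Resource~2 can never bind: every request uses at most one unit of it, so total consumption is at most \(T=Td_2\). Hence both the constraint \eqref{eq:inventory} for resource~2 and the corresponding constraint in \eqref{eq:benchmark} are redundant. Every policy in \(\Pi_T\) is therefore a feasible policy for the multisecretary problem with capacity \(T/2\) (an integer for even \(T\)) and i.i.d.\ \(\operatorname{Unif}[0,1]\) values. Conversely, the extra observation \(Z\) is independent of \(r\), so it can be viewed as internal randomization of a multisecretary policy.

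Likewise, \(\OPT_T^{\rm H}\) coincides with the fractional hindsight optimum of that multisecretary instance. Because \(T/2\) is an integer, this optimum is simply the sum of the \(T/2\) largest rewards.

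\textbf{Conclusion.} The regret of any \(\pi\in\Pi_T\) equals its multisecretary regret against the hindsight benchmark. The lower bound of \citet{Bray2025} for uniform values with capacity proportional to \(T\) (here the nondegenerate ratio \(1/2\)) then supplies \(\gamma>0\) and \(T_0\) such that this regret is at least \(\gamma\log T\) for every nonanticipating, possibly distribution-aware policy. Taking the infimum over \(\Pi_T\) gives \eqref{eq:lower-bound}.

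The main thing to check carefully is that Bray's lower bound is stated for the same benchmark and allows randomized policies. If it is stated for deterministic policies, average over the internal randomization (including \(Z\)): conditional on the randomness, each policy is deterministic, so the bound passes through the expectation.
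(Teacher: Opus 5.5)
Your proposal is correct and follows the paper's proof essentially step by step. You use the same constants for (\ref{ass:input}), the same reference price $y^*=(1/2,0)^\top$ with slack $(0,3/4)^\top$, and the same two-regime check of (\ref{ass:threshold}) on $\Xi_1$; the paper gets $\lambda_1=1/8$ from $u<9/2$, and your looser constant also works. The reduction is also the paper's: $Z$ is treated as internal randomization, so Bray's Proposition~2 applies to the identical hindsight benchmark. Your determinant argument for (\ref{ass:moment}) and your conditioning-on-randomness step are only cosmetic variants of the paper's explicit eigenvalue bound $(3-\sqrt5)/4$ and its remark that randomization cannot improve the known-distribution value.
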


Appendix~\ref{app:lower-bound} verifies the assumptions and reduces the instance to the known multisecretary lower bound.
The embedding preserves the fractional hindsight benchmark and allows the policy to know the distribution.
Thus the logarithmic order is unavoidable over the stated class, although particular instances may have smaller regret.
Different mechanisms yield polynomial lower bounds with thin size-weighted value-to-size mass \citep{ZhangContinuous2026} and \(\Omega(\log^2 T)\) with support gaps \citep{ZhangBellman2026}.

\section{Numerical Experiments}
\label{sec:experiments}

We compare \algname{RASGD} with six unknown-distribution policies across resource levels and horizons. The experiments assess hindsight regret, online runtime, and the contributions of \algname{RASGD}'s resource feedback and stepsize design.

\subsection{Benchmarks and Evaluation}
\label{sec:exp-design}
Requests are i.i.d., rewards and consumptions are nonnegative, and online actions are binary. We use two benchmarks:
\begin{enumerate}[label=\textbf{B\arabic*.},leftmargin=*]
\item \textbf{Single resource.} The unit-consumption multisecretary model has $a_t=1$, $r_t\sim U[0,1]$, and capacity $B_1=\rho T$.
\item \textbf{Multiple resources.} There are $m=10$ resources unless stated otherwise, with mutually independent $r_t,a_{t,1},\ldots,a_{t,m}\sim U[0,2]$. Capacities are $B_1=Td$, with either homogeneous or mixed coverage as defined below.
\end{enumerate}

The coverage label $\rho_i=B_{1,i}/(T\E[a_i])$ measures capacity relative to expected offered demand. A homogeneous B2 instance gives all ten resources the same coverage $\rho$. In the separate \emph{B2 mixed} instance, resources 1--5 have coverage $0.50$ and resources 6--10 have coverage $0.90$, producing binding and slack resources in the same problem. Thus, mixed is a capacity configuration, not an additional scalar coverage level.

Smaller values indicate greater scarcity; coverage above one exceeds mean demand but need not cover every realized B2 stream. Resource sweeps use ten levels from $0.10$ to $1.25$ at $T=1{,}000,2{,}000,5{,}000$. Horizon sweeps fix coverage at $0.50$ or $0.90$ and vary $T$ up to $20{,}000$. In B2, the label \emph{all binding} refers to the homogeneous $0.50$ setting.

Every method is evaluated against the same realized fractional hindsight optimum:
\[
 R_A=\max_{0\le z\le1}\left\{\sum_t r_tz_t:\sum_ta_tz_t\le Td\right\}
       -\sum_t r_tx_t^A.
\]
Lower regret is better. The optimum is computed by exact sorting for B1 and a certified LP for B2. Each quality configuration uses 30 independent streams shared across methods; reported intervals are 95\% Monte Carlo intervals. All deployed policies apply the same inventory feasibility check.

\subsection{Algorithms and Calibration}
\label{sec:exp-protocol}
The seven methods receive the same arrival information and do not know its distribution. We present the five first-order methods first, followed by the two empirical-LP policies.
\begin{enumerate}[label=\textbf{\arabic*.},leftmargin=*]
\item \textbf{\algname{RASGD} (ours).} Algorithm~\ref{alg:rafo} updates prices by projected stochastic gradient descent at every arrival, using remaining inventory and a stepsize that decreases early and increases near the horizon. It solves no empirical LP.
\item \textbf{\algname{Gao A3}.} Algorithm~3 of \citet{Gao2026} maintains separate learning and decision price paths during exploration, then transfers the learner to the decision path. Our feasibility check leaves its virtual updates unchanged.
\item \textbf{\algname{Ma A5}.} Algorithm~5 of \citet{Ma2025} performs a first-order price update at every arrival with box projection and refreshes the resource target at epoch boundaries.
\item \textbf{\algname{LSY A1}.} Algorithm~1 of \citet{LiSunYe2020} uses a fixed resource target and a projected price update. With the same constant stepsize, initialization, and feasibility check, it coincides with the Euclidean version of Algorithm~1 of \citet{BalseiroLuMirrokni2020}; we report this shared version once.
\item \textbf{\algname{DMD A1}.} Algorithm~1 of \citet{BalseiroLuMirrokni2020} uses the entropic mirror map, giving multiplicative price updates with a fixed resource target.
\item \textbf{\algname{Li--Ye A2}.} Algorithm~2 of \citet{LiYe2022} re-solves an empirical LP at geometric checkpoints, retaining the original resource target.
\item \textbf{\algname{Li--Ye A3}.} Algorithm~3 of \citet{LiYe2022} re-solves an empirical LP after every arrival using the observed history and remaining resource rate. The same empirical pricing rule appears in Algorithm~3 of \citet{Bray2025}, after time-index alignment.
\end{enumerate}
Executable adaptations and implementation conventions are documented in the accompanying reproducibility materials.

Practical first-order parameters are selected on validation streams with ten seeds disjoint from the test seeds at $T=5{,}000$, using mean regret across resource settings. \algname{RASGD}, \algname{Gao A3}, and \algname{Ma A5} start with 12 candidates each; \algname{LSY A1} and \algname{DMD A1} each search six stepsize gains. For every tunable gain, selection at an initial search boundary triggers one extension, dividing the minimum or multiplying the maximum by four while retaining the other initial coordinates. One parameter vector per benchmark is then frozen across coverage, horizon, and dimension tests. The search grids, validation scores, and final parameters are retained in the reproducibility materials. Practical parameter selection is independent of the theoretical \algname{RASGD} calibration $\kappa=2/\mu_0$.

\subsection{Regret Across Resource Levels and Horizons}
\label{sec:exp-quality}

Table~\ref{tab:exp-main-regret} and Figure~\ref{fig:exp-resource} show that \algname{RASGD} achieves regret close to that of \algname{Li--Ye A3} in the representative settings, using only a projected gradient update per arrival. The gap is particularly small at moderate-to-high coverage. \algname{RASGD} even attains a slightly lower mean regret on B1 at coverage $0.90$: $0.57$ versus $0.80$ for \algname{Li--Ye A3}. Although \algname{Li--Ye A3} retains an advantage in most B2 settings, this comparison highlights \algname{RASGD}'s strong allocation quality without empirical LP re-solving.

At $T=5{,}000$, \algname{RASGD} also has lower mean regret than the four other first-order methods in both benchmarks below mean-demand coverage and in B2 mixed. At abundant B2 coverage, \algname{Ma A5} and \algname{LSY A1} can slightly outperform \algname{RASGD}. Entropic \algname{DMD A1} improves over \algname{LSY A1} in several scarce or mixed settings, but its positive initial prices incur additional regret when resources are abundant.

\begin{table}[htbp]
\centering\SingleSpacedXI\footnotesize\setlength{\tabcolsep}{2pt}
\caption{Hindsight regret at selected resource-coverage levels at $T=5{,}000$. Entries are means $\pm$ 95\% confidence half-widths over 30 paired streams; lower is better.}\label{tab:exp-main-regret}
\begin{tabular}{lrrrrrrr}\toprule Coverage & \algname{RASGD} & \algname{Gao A3} & \algname{Ma A5} & \algname{LSY A1} & \algname{DMD A1} & \algname{Li--Ye A2} & \algname{Li--Ye A3}\\\midrule
\multicolumn{8}{l}{\textbf{B1: single resource}}\\[2pt]
$0.50$ & $2.87\pm0.31$ & $5.32\pm0.74$ & $5.87\pm0.51$ & $17.60\pm0.76$ & $10.00\pm0.59$ & $13.39\pm3.57$ & $1.74\pm0.28$\\
$0.75$ & $1.45\pm0.23$ & $3.74\pm0.64$ & $3.80\pm0.29$ & $11.28\pm0.52$ & $5.19\pm0.32$ & $9.30\pm2.01$ & $1.28\pm0.19$\\
$0.90$ & $0.57\pm0.14$ & $2.03\pm0.49$ & $1.69\pm0.23$ & $4.89\pm0.43$ & $4.12\pm0.13$ & $6.04\pm1.79$ & $0.80\pm0.17$\\
$1.25$ & $0.00\pm0.00$ & $0.00\pm0.00$ & $0.00\pm0.00$ & $0.00\pm0.00$ & $1.39\pm0.10$ & $0.95\pm0.14$ & $0.00\pm0.00$\\
\midrule
\multicolumn{8}{l}{\textbf{B2: multiple resources, homogeneous coverage}}\\[2pt]
$0.50$ & $38.40\pm1.88$ & $53.32\pm3.26$ & $70.96\pm4.23$ & $59.66\pm2.81$ & $55.89\pm1.48$ & $94.70\pm10.44$ & $31.69\pm1.71$\\
$0.75$ & $26.14\pm1.63$ & $39.29\pm1.68$ & $40.35\pm2.66$ & $36.33\pm3.53$ & $43.30\pm1.50$ & $62.06\pm10.01$ & $24.15\pm1.58$\\
$0.90$ & $13.13\pm0.77$ & $37.93\pm0.91$ & $23.22\pm1.75$ & $19.85\pm0.68$ & $33.35\pm0.85$ & $20.81\pm2.78$ & $12.11\pm1.05$\\
$1.25$ & $0.44\pm0.06$ & $3.19\pm0.22$ & $0.37\pm0.08$ & $0.30\pm0.03$ & $19.41\pm0.78$ & $2.41\pm0.42$ & $0.43\pm0.23$\\
\midrule
\multicolumn{8}{l}{\textbf{B2: multiple resources, mixed coverage}}\\[2pt]
$0.50\,/\,0.90$ & $27.43\pm2.30$ & $51.77\pm3.49$ & $65.53\pm7.18$ & $68.48\pm3.17$ & $42.79\pm1.41$ & $84.72\pm14.31$ & $21.74\pm1.81$\\
\bottomrule\end{tabular}
\par\smallskip\raggedright\footnotesize Coverage is initial inventory divided by expected total demand. B2 has ten resources. Homogeneous rows give every resource the displayed coverage. The mixed row assigns coverage $0.50$ to resources 1--5 and $0.90$ to resources 6--10; it is a separate capacity configuration.
\end{table}

\begin{figure}[htbp]
\centering\small
\centering\includegraphics[width=.88\textwidth]{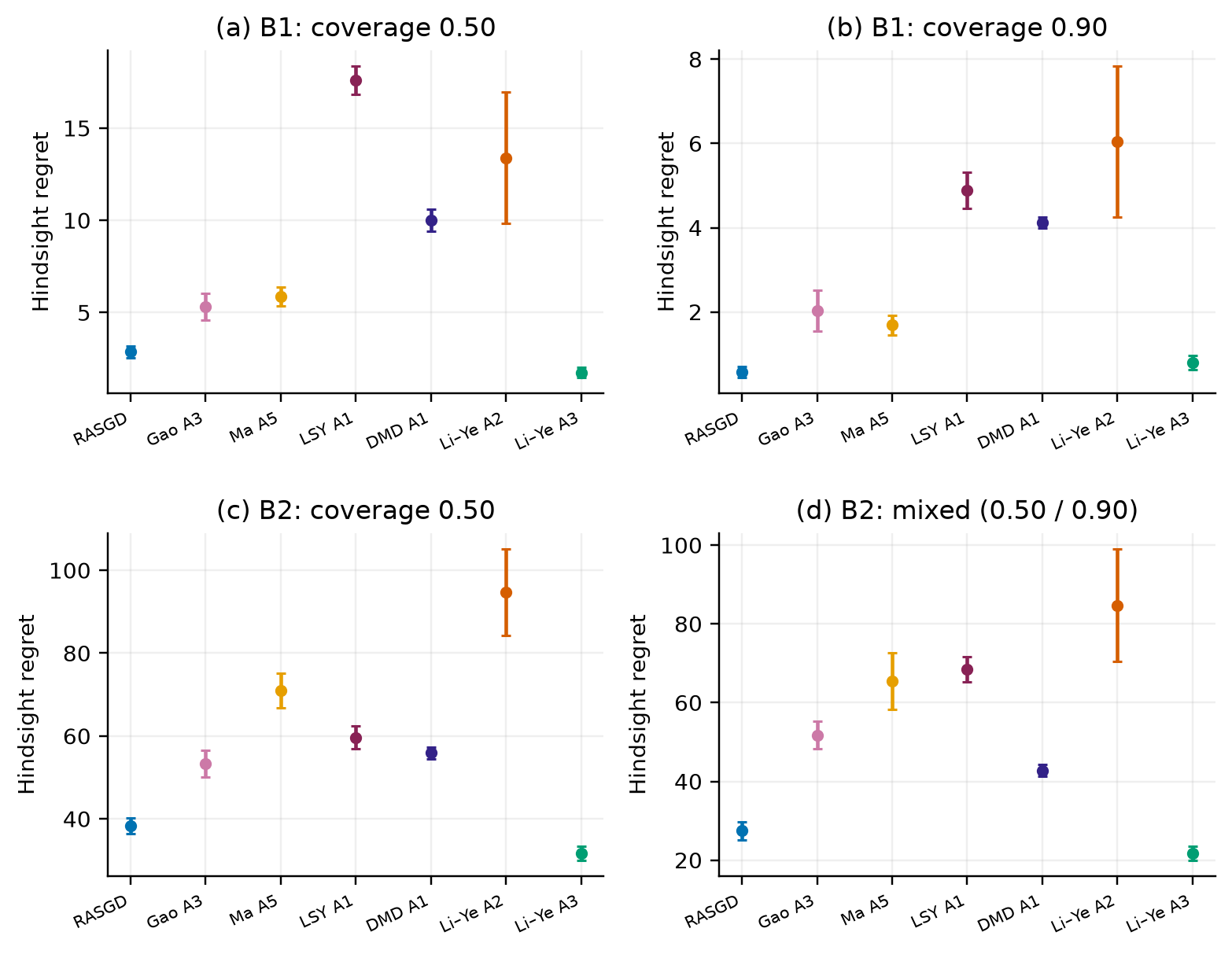}
\caption{Hindsight regret at $T=5{,}000$. Top row: B1 at coverage $0.50$ and $0.90$. Bottom row: B2 at coverage $0.50$ and B2 mixed (five resources at $0.50$ and five at $0.90$). Points are means and bars are 95\% intervals over 30 paired streams.}
\label{fig:exp-resource}
\end{figure}

The horizon comparison in Figure~\ref{fig:exp-horizon} supports the same conclusion through $T=20{,}000$. \algname{RASGD} remains the best-performing first-order method in the displayed settings. At coverage $0.90$, its regret remains close to \algname{Li--Ye A3}'s, with a slightly lower B1 mean at the largest horizon; \algname{Li--Ye A3} retains a clearer advantage at coverage $0.50$. Together with the runtime results in Section~\ref{sec:exp-time}, these findings show that \algname{RASGD} combines allocation quality approaching per-arrival re-solving with first-order computational efficiency. These finite-horizon comparisons do not estimate asymptotic regret rates.

\begin{figure}[htbp]
\centering\small
\centering\includegraphics[width=.86\textwidth]{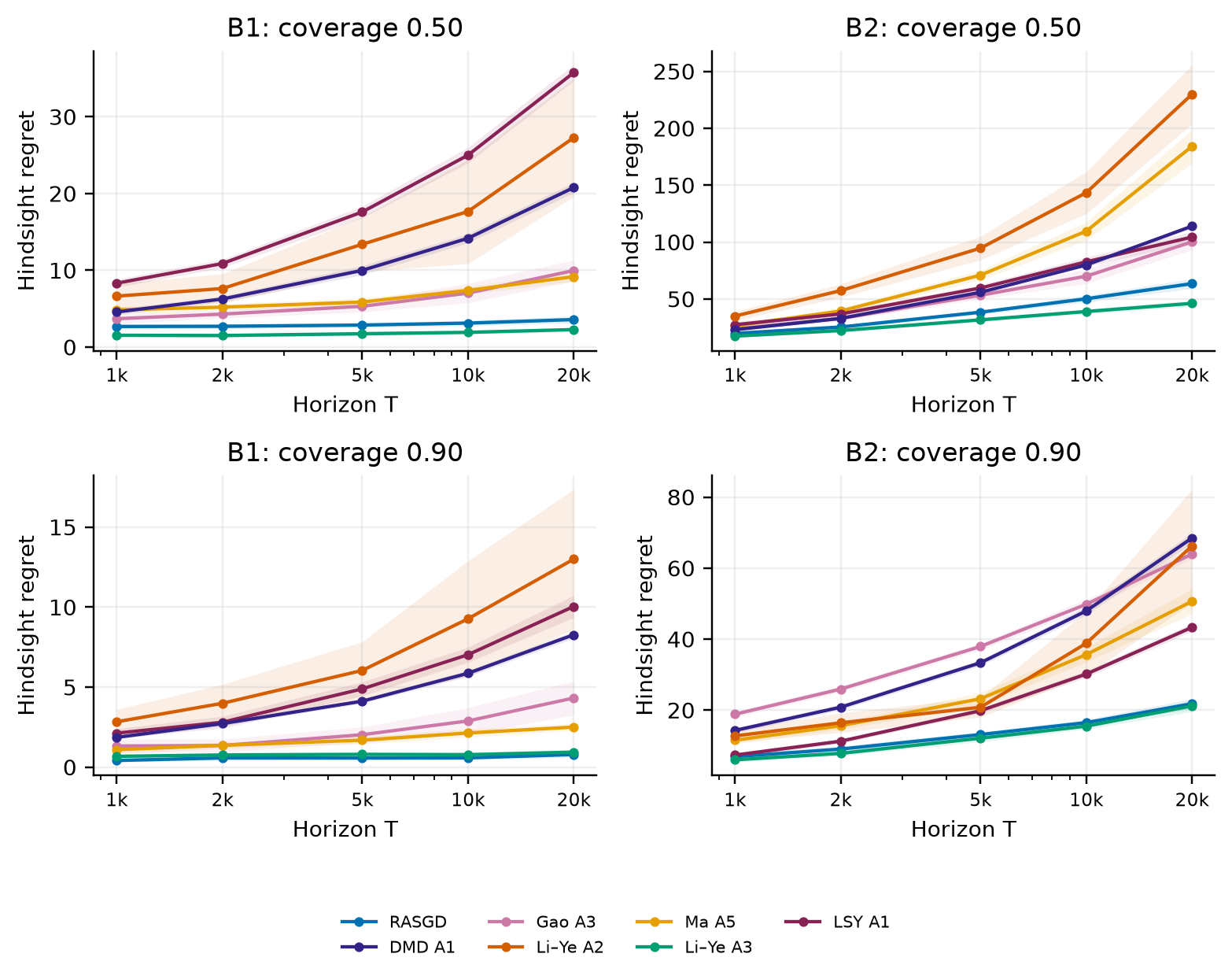}
\caption{Hindsight regret under frozen parameters. Rows use coverage $0.50$ and $0.90$; columns show B1 and B2. Bands are 95\% intervals over 30 paired streams. The horizon axis is logarithmic and the regret axis is linear.}
\label{fig:exp-horizon}
\end{figure}

\subsection{Online Runtime}
\label{sec:exp-time}

Table~\ref{tab:exp-timing-5000} and Figure~\ref{fig:exp-timing-5000} report online time for $T=5{,}000$. All methods were measured on the same Apple M5 MacBook Air with 10 CPU cores and 32 GB memory, running macOS 26.4, with execution restricted to one thread. Each first-order method uses a specialized compiled kernel. 
 Timing includes initialization and all online operations, including empirical LP solves, and excludes compilation, input generation, hindsight optimization, and logging. Ten independent streams each contribute the median of three interleaved timing blocks; software versions and measurement details are recorded in the reproducibility materials.

\begin{table}[htbp]
\centering\SingleSpacedXI\small\setlength{\tabcolsep}{4pt}
\caption{Online time in milliseconds per stream on B2 at $T=5{,}000$ and coverage $0.50$. Entries are means $\pm$ 95\% confidence half-widths over ten streams.}\label{tab:exp-timing-5000}
\begin{tabular}{lrr}\toprule Method & $m=10$ & $m=50$\\\midrule
\algname{RASGD} & $0.262\pm0.039$ & $0.797\pm0.113$\\
\algname{Gao A3} & $0.324\pm0.092$ & $0.979\pm0.043$\\
\algname{Ma A5} & $0.189\pm0.033$ & $0.760\pm0.046$\\
\algname{LSY A1} & $0.253\pm0.138$ & $0.694\pm0.029$\\
\algname{DMD A1} & $0.717\pm0.221$ & $2.638\pm0.100$\\
\algname{Li--Ye A2} & $62.828\pm23.299$ & ---\\
\algname{Li--Ye A3} & $18{,}184.344\pm5{,}307.279$ & ---\\
\bottomrule\end{tabular}
\par\smallskip\raggedright\footnotesize ---: omitted due to excessive runtime. Each reported stream contributes the median of three timing blocks.
\end{table}

\algname{RASGD}, \algname{Gao A3}, \algname{Ma A5}, and \algname{LSY A1} have comparable sub-millisecond mean times at both dimensions. Entropic \algname{DMD A1} is slower, taking about $2.7$ and $3.3$ times \algname{RASGD}'s time at $m=10$ and $m=50$, respectively; its update evaluates an exponential for every resource at every arrival. Both empirical-LP methods take substantially longer at $m=10$, especially \algname{Li--Ye A3} with its per-arrival solves. These are implementation-specific wall-clock comparisons, rather than a statistical equivalence claim. Together with the regret results, they support \algname{RASGD}'s favorable allocation quality at a computational cost close to the other additive first-order updates.

\begin{figure}[htbp]
\centering\small
\centering\includegraphics[width=.95\textwidth]{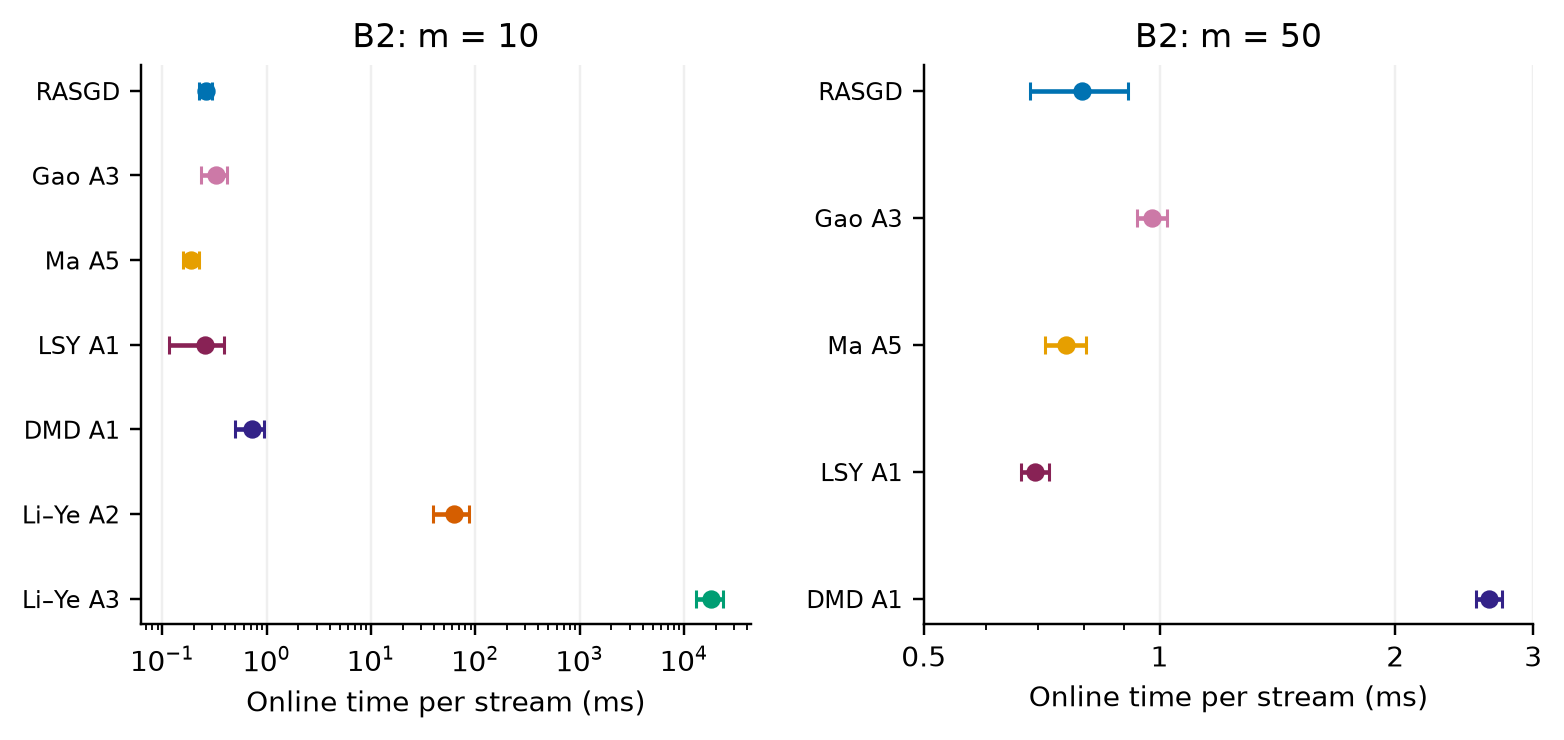}
\caption{Single-thread online time on B2 at $T=5{,}000$ and coverage $0.50$. Points and bars show means and 95\% intervals over ten streams. Both panels use logarithmic axes with panel-specific ranges. The $m=50$ comparison includes the five first-order methods under the pilot cost rule.}
\label{fig:exp-timing-5000}
\end{figure}

\subsection{Ablation of Resource Feedback and Stepsize}
\label{sec:exp-mechanisms}

We test the contributions of two \algname{RASGD} components: resource feedback and the late-horizon stepsize increase. Removing feedback replaces the remaining-inventory target by the fixed initial target $d$; removing the late-horizon increase makes the stepsize decrease throughout the horizon, subject to the same truncation. Together with full \algname{RASGD} and the variant removing both components, these give four policies. We evaluate them on B2 mixed, with five resources at coverage $0.50$ and five at $0.90$, at $T=2{,}000$ and $10{,}000$, using 30 paired streams.

Table~\ref{tab:exp-ablation} and Figure~\ref{fig:exp-ablation} compare two calibration protocols. The shared-parameter protocol fixes $(\kappa,s_0)=(4,64)$ for all four policies, isolating the effect of changing components. The independently tuned protocol lets each policy select its own parameters on separate validation streams with the same search grid and boundary-extension rule, testing whether retuning compensates for a removed component.

\begin{table}[htbp]
\centering\SingleSpacedXI\small\setlength{\tabcolsep}{4pt}
\caption{Component ablation on B2 mixed. Entries are mean hindsight regrets over 30 paired streams; lower is better.}\label{tab:exp-ablation}
\begin{tabular*}{\textwidth}{@{\extracolsep{\fill}}lrrrr@{}}\toprule
& \multicolumn{2}{c}{Shared parameters} & \multicolumn{2}{c}{Independently tuned}\\
\cmidrule(lr){2-3}\cmidrule(l){4-5}
Variant & $T=2{,}000$ & $T=10{,}000$ & $T=2{,}000$ & $T=10{,}000$\\\midrule
\textbf{Full \algname{RASGD}} & \textbf{20.51} & \textbf{34.10} & \textbf{20.51} & \textbf{34.10}\\
No resource feedback & 31.32 & 73.63 & 32.21 & 78.69\\
No late-horizon increase & 27.83 & 62.46 & 32.39 & 63.88\\
Neither component & 32.49 & 74.85 & 34.82 & 68.67\\
\bottomrule\end{tabular*}
\par\smallskip\raggedright\footnotesize Shared parameters use $(\kappa,s_0)=(4,64)$ for every variant. Independently tuned parameters are selected on separate validation streams. Figure~\ref{fig:exp-ablation} reports the corresponding 95\% intervals.
\end{table}

\begin{figure}[htbp]
\centering\small
\centering\includegraphics[width=.95\textwidth]{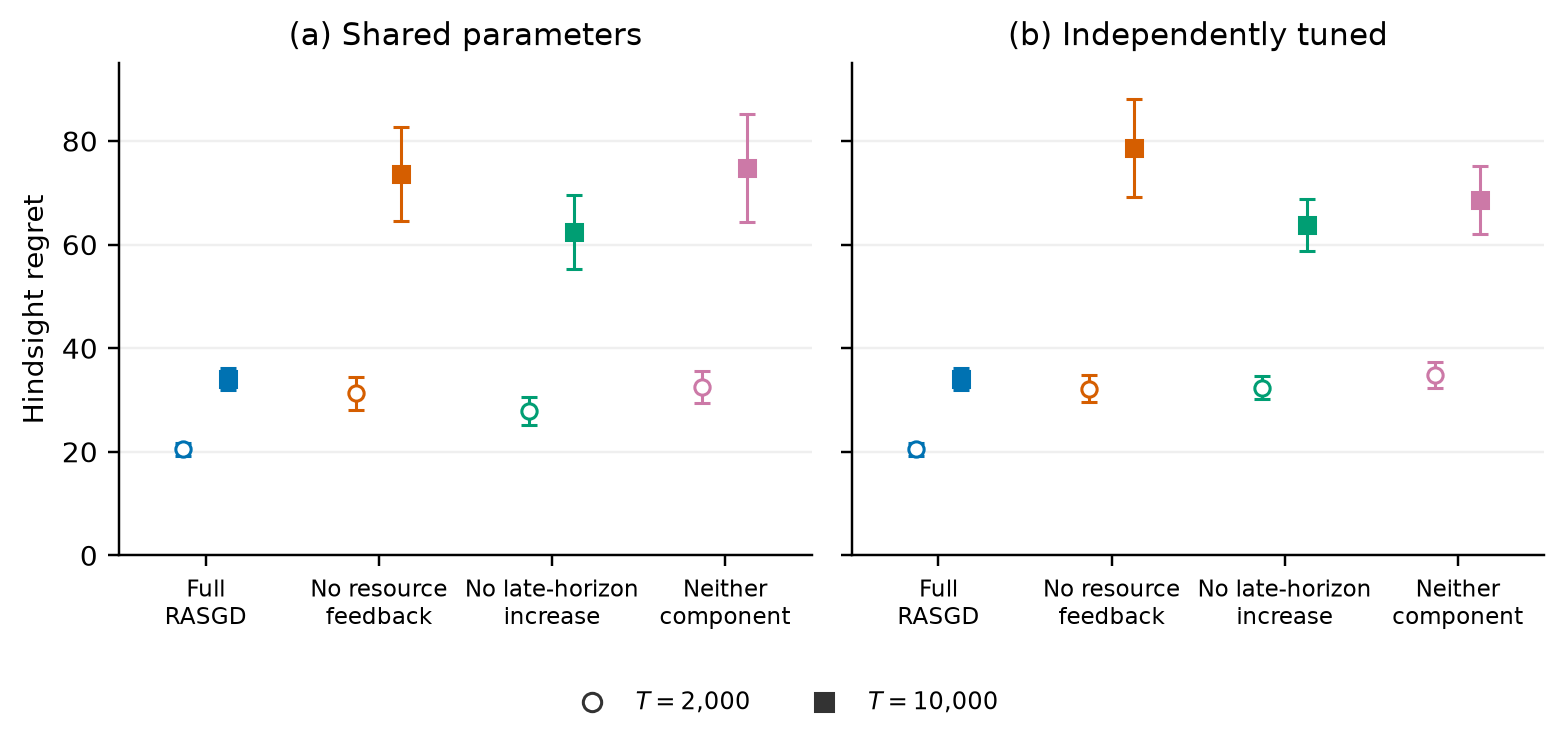}
\caption{Component ablation on B2 mixed: shared parameters (left) and independent tuning (right). Points and bars show mean regret and 95\% intervals over 30 paired streams. Open circles indicate $T=2{,}000$; filled squares indicate $T=10{,}000$. Both panels use the same vertical scale.}
\label{fig:exp-ablation}
\end{figure}

Removing either component increases mean regret at both horizons under both protocols. Full \algname{RASGD} remains best after independent retuning, supporting the contribution of both resource feedback and the late-horizon stepsize increase in this mixed-resource setting. Retuning need not lower test regret because selection uses independent validation data.

\section{Conclusion}\label{sec:conclusion}
This paper develops RASGD, a resource-adaptive stochastic gradient
descent framework for online linear programming with optimal regret and without resolving.      
RASGD couples per-arrival resource feedback with a two-sided stochastic-gradient stepsize.
Under standard non-degeneracy conditions, it achieves optimal \(O(\log T)\) fractional-hindsight regret with prefix feasibility and \(O(m)\) work and memory per arrival.
The analysis follows the two time scales built into the algorithm.
Early on, a joint price--inventory energy shows that prices learn the nominal optimum while remaining-inventory rates move slowly; later, the matched stepsize yields a price--inventory invariant that turns the nominal threshold response into restoring feedback for binding resources.
Inclusive first-exit bounds then show that rare departures from this stable region contribute only logarithmically, avoiding any need to track optimizers at changing capacities.

The numerical results show that RASGD attains regret competitive with
per-arrival LP re-solving and lower mean regret than the other tested
first-order methods in representative resource-constrained settings.
RASGD retains the computational efficiency of first-order updates and is
substantially faster than per-arrival LP re-solving in the measured
implementations. Ablation experiments on the mixed-resource benchmark
support the contributions of resource feedback and the late-horizon
stepsize increase.

These results highlight RASGD's potential as a fast and efficient OLP
algorithm for large-scale AI services, including LLM admission control
under token and compute budgets.
More broadly, jointly designing resource feedback and stepsizes to match
resource dynamics offers a promising approach to other resource-constrained
learning and decision problems.
Extending this principle to stochastic replenishment, time-varying demand,
and joint admission and scheduling in AI services presents natural
directions for future work.

\bibliographystyle{informs2014}
\bibliography{references}
\clearpage
\begin{APPENDICES}
\section{Proofs for the Regret Analysis}\label{app:analysis}\label{app:auxiliary}
The proofs follow the framework and the three steps of Section~\ref{sec:analysis}.
Each subsection collects the supporting estimates and complete proofs for its corresponding part of the main text.
Unless stated otherwise, constants are independent of \(T\), and we consider \(T>4H\).

\subsection{Proofs for the Analysis Framework and Regret Decomposition}\label{app:setup}\label{app:geometry}
For \(e=y-y^*\), define \(\Psi(e)=F(y)-F(y^*)=h(y^*)-h(y)\).
We first record the geometry at the fixed reference price.

\begin{lemma}\label{lem:geometry}
Under Assumption~\ref{ass:gao}, \(y^*\) is unique.
For \(e=y-y^*\), \(y\in\Y\), and \(R_*(y)=\E[rX(y;r,a)]\), one may take \(L=\lambda_2m\bar a^2\) so that
\begin{align}
 \ip e{\Psi(e)}&\ge\mulo\norm e^2,&
 \norm{\Psi(e)}&\le L\norm e,\label{eq:geometry}\\
 0&\le R_*(y^*)-R_*(y)-y^{*\top}[d-h(y)]
 \le L\norm e^2.\label{eq:quadratic}
\end{align}
Moreover, \(R_*(y^*)=f_d(y^*)\).
\end{lemma}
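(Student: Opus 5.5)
The plan is to work conditionally on \(a\) throughout and integrate at the end. Write \(p(y;a)=\Prob(r\ge a^\top y\mid a)\), so that \(h(y)=\E[a\,p(y;a)]\). Then
\[
 \Psi(e)=h(y^*)-h(y)=\E\bigl[a\,(p(y^*;a)-p(y;a))\bigr].
\]
Since \(p(\cdot;a)\) is nonincreasing along \(a^\top y\), the scalar \(p(y^*;a)-p(y;a)\) has the same sign as \(a^\top e\). Condition (\ref{ass:threshold}) bounds its magnitude between \(\lambda_1|a^\top e|\) and \(\lambda_2|a^\top e|\); this is valid because \(\Y\subseteq\Xi_1\).

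\textbf{Step 1: coercivity and the Lipschitz bound.} For coercivity, pair \(\Psi(e)\) with \(e\). The integrand \((a^\top e)(p(y^*;a)-p(y;a))\) is at least \(\lambda_1(a^\top e)^2\). Taking expectations and using (\ref{ass:moment}) gives
\[
 \ip e{\Psi(e)}\ge\lambda_1 e^\top\E[aa^\top]e\ge\lambda_0\lambda_1\norm e^2\ge\mulo\norm e^2 .
\]
For the Lipschitz bound, \(\norm{\Psi(e)}\le\E[\norm a\,\lambda_2|a^\top e|]\le\lambda_2\E\norm a^2\,\norm e\le\lambda_2m\bar a^2\norm e\), using \(\norm a^2\le m\bar a^2\).

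\textbf{Step 2: uniqueness.} Suppose \(y'\) is another minimizer of \(f_d\). Since \(\norm{y'}_1\le D_0\), it lies in \(\Y\subseteq\Xi_1\). Monotonicity of the subdifferential gives \(\ip{y'-y^*}{g'-g^*}\ge0\) for any subgradients. I would instead use directly that \(F(y)=d-h(y)\) is a valid subgradient at every \(y\). Optimality of \(y^*\) and \(y'\) gives \(\ip{F(y')}{y^*-y'}\ge0\) and \(\ip{F(y^*)}{y'-y^*}\ge0\). There is one subtlety here: optimality guarantees only that \emph{some} subgradient satisfies the variational inequality, not \(F\) itself. To avoid this I would use convexity instead. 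Because \(f_d\) is convex with \(f_d(y')=f_d(y^*)\), it is constant on the segment between them. On that segment, \(f_d(y^*+s e)-f_d(y^*)=\int_0^s\ip{F(y^*+ue)}{e}\,du\); this holds because \(f_d\) is locally Lipschitz and \(F\) is a.e. its derivative along the line. Then
\[
 \ip{F(y^*+ue)}{e}=\ip{F(y^*)}{e}+\ip{F(y^*+ue)-F(y^*)}{e},
\]
and the second term equals \(u^{-1}\ip{ue}{\Psi(ue)}\ge\mulo u\norm e^2\). Optimality of \(y^*\) on the segment gives the first term nonnegative: the one-sided directional derivative is \(\ge0\), and it is at most \(\ip{F(y^*)}{e}\) in the limit. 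A cleaner alternative is strict complementarity together with \(e_\N\ge0\), which gives \(\ip{F(y^*)}{e}=\sum_{i\in\N}s_ie_i\ge0\) directly. The integral is then at least \(\mulo s^2\norm e^2/2>0\), a contradiction unless \(e=0\).

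\textbf{Step 3: the quadratic reward bound.} Here I expect the main obstacle. Write
\[
 R_*(y^*)-R_*(y)-y^{*\top}[d-h(y)] .
\]
By (\ref{ass:complementarity}), \(y^{*\top}d=y^{*\top}h(y^*)\), since \(y^*_i=0\) off \(\B\) and \(F_\B(y^*)=0\). The expression therefore equals
\[
 \E\bigl[(r-a^\top y^*)(X(y^*;r,a)-X(y;r,a))\bigr].
\]
The integrand is pointwise nonnegative, because \(X(y^*)\) maximizes \((r-a^\top y^*)x\); this gives the lower bound. For the upper bound, condition on \(a\). The integrand is nonzero only when \(r\) lies between \(a^\top y^*\) and \(a^\top y\), where \(|r-a^\top y^*|\le|a^\top e|\). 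The conditional probability of that event is \(|p(y;a)-p(y^*;a)|\le\lambda_2|a^\top e|\), up to the tie convention, which does not affect the bound. So the conditional expectation is at most \(\lambda_2(a^\top e)^2\le\lambda_2m\bar a^2\norm e^2\).

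\textbf{Step 4: the identity \(R_*(y^*)=f_d(y^*)\).} We have \(f_d(y^*)=d^\top y^*+\E[(r-a^\top y^*)X(y^*)]=R_*(y^*)+y^{*\top}(d-h(y^*))\), and the last term vanishes by the same complementarity identity. Feasibility of \(X(y^*;\cdot)\) for (\ref{eq:population-primal}) is \(F(y^*)\ge0\), which also follows from (\ref{ass:complementarity}).
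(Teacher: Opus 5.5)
Your proof is correct. Steps 1, 2 and 4 follow the paper's argument. Coercivity and the Lipschitz bound come from survival monotonicity together with (\ref{ass:threshold}) and (\ref{ass:moment}). Uniqueness comes from integrating the selected subgradient \(F\) along the segment, with \(\ip{F(y^*)}{e}\ge0\) supplied by strict complementarity. The identity \(R_*(y^*)=f_d(y^*)\) comes from the hinge identity. Your first attempt at \(\ip{F(y^*)}{e}\ge0\) has the inequality pointing the wrong way: for a general subgradient the directional derivative is \emph{at least} \(\ip{F(y^*)}{e}\). It can be repaired, because \(\norm{\Psi(e)}\le L\norm e\) makes \(F\) continuous at \(y^*\), which forces equality. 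In any case, the complementarity route you fall back on is exactly the paper's.

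The genuine difference is Step 3.

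\textbf{The paper's route.} It does not compute the reward gap probabilistically. It uses \(f_d(y)=R_*(y)+y^\top F(y)\) and \(y^{*\top}F(y^*)=0\) to rewrite the gap as the Bregman-type quantity \(f_d(y^*)-f_d(y)+e^\top F(y)\). It then sandwiches this between two bounds. The subgradient inequality at \(y\) gives the lower bound \(0\). The subgradient inequality at \(y^*\) gives the upper bound \(e^\top(F(y)-F(y^*))=\ip{e}{\Psi(e)}\), which the already-proved \eqref{eq:geometry} bounds by \(L\norm e^2\).

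\textbf{Your route.} You rewrite the gap as \(\E[(r-a^\top y^*)(X(y^*;r,a)-X(y;r,a))]\). The lower bound follows pointwise from optimality of the threshold at \(y^*\). For the upper bound you condition on \(a\), bound the integrand by \(|a^\top e|\) on the disagreement event, and bound that event's probability by \(|p(y;a)-p(y^*;a)|\le\lambda_2|a^\top e|\) using (\ref{ass:threshold}) directly.

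The two are really the same estimate. Conditional on \(a\), your upper bound \(|a^\top e|\,|p(y;a)-p(y^*;a)|\) equals \((a^\top e)(p(y^*;a)-p(y;a))\), which is exactly the integrand of \(\ip{e}{\Psi(e)}\).

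What each buys: your argument is more elementary and convexity-free, and it makes the mechanism visible. Loss arises only when \(r\) falls between \(a^\top y^*\) and \(a^\top y\), and then it is at most \(|a^\top e|\). The paper's argument reuses \eqref{eq:geometry} and frames the loss as a divergence of the dual objective, which matches the subgradient viewpoint used in the rest of the analysis.
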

\begin{proof}[Proof of Lemma~\ref{lem:geometry}]
Condition (\ref{ass:complementarity}) gives \eqref{eq:reference-slack}, so \(F(y^*)\ge0\) and \(y^{*\top}F(y^*)=0\).

Write \(S_y(a)=\Prob(r\ge a^\top y\mid a)\).
Survival monotonicity and \eqref{eq:margin} give
\[
 \ip e{\Psi(e)}
 =\E[(a^\top e)(S_{y^*}(a)-S_y(a))]
 \ge\lambda_1\E[(a^\top e)^2]
 \ge\lambda_0\lambda_1\norm e^2.
\]
Also,
\(\norm{\Psi(e)}\le\lambda_2\E[\norm a\,|a^\top e|]\le
\lambda_2\E\norm a^2\,\norm e\).
 Since \(\ip e{F(y^*)}\ge0\), integration along the segment from \(y^*\) to \(y\) gives
\[
 f_d(y)-f_d(y^*)\ge F(y^*)^\top e+\frac{\mulo}{2}\norm e^2.
\]
 The finite convex objective is absolutely continuous along the segment, with derivative equal almost everywhere to the selected subgradient's directional component.
The bound gives uniqueness on \(\Y\); all minimizers lie there, so uniqueness is global.

For the reward comparison, the hinge identity
\((r-a^\top y)_+=(r-a^\top y)X(y;r,a)\) gives
\[
 f_d(y)=R_*(y)+y^\top F(y).
\]
Since \(y^{*\top}F(y^*)=0\), this first gives
\(R_*(y^*)=f_d(y^*)\). Moreover,
\[
 R_*(y^*)-R_*(y)-y^{*\top}F(y)
 =f_d(y^*)-f_d(y)+e^\top F(y).
\]
Because \(F(y)\in\partial f_d(y)\) and
\(F(y^*)\in\partial f_d(y^*)\), the two subgradient inequalities imply
\[
 0\le f_d(y^*)-f_d(y)+e^\top F(y)
 \le e^\top\big(F(y)-F(y^*)\big)
 =e^\top\Psi(e)
 \le L\norm e^2.
\]
This proves \eqref{eq:quadratic}.
\end{proof}

Weak duality holds for every realized sample and fixed \(y\ge0\):
\begin{equation}
 \OPT_T^{\rm H}\le Td^\top y+\sum_{t=1}^T(r_t-a_t^\top y)_+,
 \qquad \E\OPT_T^{\rm H}\le Tf_d(y^*)=TR_*(y^*). \label{eq:weak-dual}
\end{equation}
It is therefore enough to control the policy's loss relative to this population upper bound.

The cap bounds the learning input even when nonbinding inventory accumulates:
\begin{equation}
 \norm{g_t}\le G:=\sqrt m\,(\dmax+1+\bar a)
 \quad\text{on every sample path}. \label{eq:gradient-bound}
\end{equation}

For the stopping construction and the subsequent proofs, retain \(M,\epsT,e_t,u_t\) from Section~\ref{sec:roadmap}, and write
\[
 z_t=y_{t,\N},\qquad \zeta_t=h(y_t)-a_t\widetilde x_t.
\]
Then \(\E_t\zeta_t=0\) and \(\norm{\zeta_t}\le2\sqrt m\,\bar a\) on every path, including when the feasibility filter is active.

\textbf{The safe region and analytical stopping time.}\label{sec:stop}
Choose fixed proof-only radii
\begin{equation}
 0<\delta_B<\min\{1/4,\dmin/4\},\qquad
 0<\delta_i<\min\{1/4,\dmin/4,s_i/4\}\quad(i\in\N), \label{eq:deltas}
\end{equation}
and choose \(\rho>0\) such that
\begin{equation}
 \rho<\min\left\{1,\frac{\sigma_0}{4},
       \frac{\min_{i\in\B}y_i^*}{4},
       \frac{\min_{i\in\N}s_i}{8(1+L)}\right\}. \label{eq:rho}
\end{equation}
 Also require \(\delta_B<\rho/(4\kappa)\), shrinking \(\delta_B\) after choosing \(\rho\) if necessary.

 Fix an integer \(H\), independent of \(T\), such that
\begin{equation}
 H\ge s_0+2,\qquad
 \kappa G/H\le \rho/2,\qquad
 H(3\dmin/4)>\bar a,\qquad H\ge8. \label{eq:H}
\end{equation}
 For now assume \(T>4H\). The radii and \(H\) are proof-only quantities; bounded horizons are handled in the final step.

The early stopping index \(S\) is the first period-start \(t\le M\) for which
\[
 \norm{u_t}\ge\delta_B
 \quad\text{or}\quad q_{t,i}\le d_i-\delta_i\text{ for some }i\in\N.
\]
Set \(S=M+1\) if there is no such index. At the midpoint, define the entrance event
\begin{equation}
 \G=\{S>M,\ \norm{e_M}<\rho/2,\ \norm{u_M}<\delta_B/2,\
                  q_{M,i}\ge d_i+s_i/4\ (i\in\N)\}. \label{eq:G}
\end{equation}
 The nonbinding surplus in \(\G\) buffers later lower crossings.

On \(\G\), let \(\tau\) be the first period-start \(t\ge M\) at which
\begin{equation}
 \norm{u_t}\ge\delta_B,\quad
 \norm{e_t}\ge \rho,\quad
 q_{t,i}\le d_i-\delta_i\text{ for some }i\in\N,\quad
 \text{or }n_t\le H. \label{eq:late-stop}
\end{equation}

Set \(\tau=S\) on \(\{S\le M\}\), and \(\tau=M\) on \(\{S>M\}\setminus\G\).
Since \(\G\in\F_{M-1}\), this is a stopping time for \(\mathcal H_t=\F_{t-1}\);
\(\1\{t<\tau\}\) is predictable and preserves conditional centering of \(\zeta_t\).
The implemented policy continues after \(\tau\).

\textbf{Updates before the stop.}
For \(t<\tau\), \(B_{t,i}\ge n_t(3\dmin/4)>\bar a\), so \(x_t=\widetilde x_t\), including the exit-producing update.
For \(M\le t<\tau\), \(\alpha_t=\kappa/(n_t-1)\) and \(\alpha_tG\le\rho/2\).
The unprojected candidate is within \(3\rho/2\) of \(y^*\); positive parts cannot increase this distance.
By \eqref{eq:rho}, radial projection is inactive and binding coordinates remain positive.
Only nonbinding orthant reflection can act, also when \(t+1=\tau\).

\begin{proof}[Proof of Lemma~\ref{lem:regret-reduction}]
Write \(R^*=R_*(y^*)\). By \eqref{eq:weak-dual}, it suffices to bound
\(TR^*-\E\sum_{t=1}^T r_tx_t\).
Since \(|R^*|,|r_tx_t|\le\bar r\), the periods from \(\tau\) onward contribute at most \(2\bar r\,\E n_\tau\).

Before \(\tau\), the filter is inactive and the stopping indicator is predictable. The reward comparison \eqref{eq:quadratic} and the tower property give
\begin{equation}
 \E\sum_{t<\tau}(R^*-r_t\widetilde x_t)
 \le y^{*\top}\E\sum_{t<\tau}(d-a_t\widetilde x_t)
     +L\E\sum_{t<\tau}\norm{e_t}^2. \label{eq:prestop-regret}
\end{equation}
 Before \(\tau\), preferred and implemented actions coincide, so
\begin{equation}
 \sum_{t<\tau}(d-a_t\widetilde x_t)
 =B_\tau-n_\tau d=n_\tau(q_\tau-d). \label{eq:telescope}
\end{equation}
 Only binding coordinates contribute to \(y^{*\top}(q_\tau-d)\). Their exit overshoot is controlled by
\[
 q_{t+1,\B}-q_{t,\B}
 =\frac{q_{t,\B}-a_{t,\B}\widetilde x_t}{n_t-1},
\]
whose numerator is bounded on safe states.
A late exit has \(n_t-1\ge H\), giving \(\norm{q_{\tau,\B}-d_\B}\le\delta_B+C/H\).
An early exit has overshoot at most \(C/T\), and a midpoint failure without an early exit remains inside the early resource region.
Thus the linear term is at most \(C\E n_\tau\) in every case. Combining the bounds proves the lemma.
\end{proof}

\subsection{Proofs for Step 1: Early Learning and Midpoint Entrance}\label{app:early}\label{app:forward}\label{app:early-localization}
We first establish the moment and localization bounds used in Lemma~\ref{lem:early}, then combine them to obtain its three conclusions.
For \(1\le t\le M\), define
\[
 E_t=\E[\1\{t<S\}\norm{e_t}^2],\qquad
 U_t=\E[\1\{t<S\}\norm{u_t}^2].
\]
These moments discard paths after the early stop.
The localization estimates below instead retain the state at the stop.

\begin{lemma}\label{lem:early-moments}
For \(1\le t\le M\),
\begin{equation}
 E_t\le C\big((t+s_0)^{-1}+\epsT\big),\qquad
 U_t\le C\epsT,\qquad
 \sum_{t=1}^{M-1}E_t\le C\log T. \label{eq:early-bounds}
\end{equation}
\end{lemma}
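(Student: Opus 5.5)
We will prove the three bounds in \eqref{eq:early-bounds} by a joint energy recursion on the stopped moments \(E_t\) and \(U_t\). The inventory bound is handled first, because it is essentially a martingale estimate. The price bound is then derived from the strongly convex SGD recursion, with the inventory error entering as a perturbation.

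\textbf{Inventory.} For \(t<S\), the filter is inactive. The cap is also inactive on binding coordinates, since \(\norm{u_t}<\delta_B<1\). Hence \eqref{eq:feedback-recursions} gives
\[
u_{t+1}-u_t=\beta_t\bigl(q_{t,\B}-h_\B(y_t)\bigr)+\beta_t\zeta_{t,\B},
\]
with \(\beta_t\le 2/T\) for \(t<M\). Rather than bounding the drift through prices, we use the exact telescoping identity \eqref{eq:telescope}:
\[
n_t u_t=\sum_{s<t}\bigl(d_\B-a_{s,\B}\widetilde x_s\bigr).
\]
We split each summand as \(F_\B(y_s)-\zeta_{s,\B}\). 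Since \(F_\B(y^*)=0\), we have \(\norm{F_\B(y_s)}\le L\norm{e_s}\) by \eqref{eq:geometry}. Multiplying by the predictable indicator \(\1\{s<S\}\) and using Doob's inequality on the stopped martingale, the noise part contributes \(\E\norm{\sum_s\1\{s<S\}\zeta_s}^2\le Ct\). By Cauchy--Schwarz, the drift part contributes at most
\[
L^2 t\sum_{s<t}E_s\le Ct\log T
\]
once the price bound is available. Dividing by \(n_t^2\ge T^2/4\) gives \(U_t\le C(t+t\log T)/T^2\le C\epsT\). To avoid circularity, we first prove the price bound with \(U_t\) replaced by the a priori bound \(\norm{u_t}<\delta_B\), and then refine.

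\textbf{Price.} Nonexpansiveness of \(\Pi_\Y\), together with \(y^*\in\Y\), gives
\[
\norm{e_{t+1}}^2\le\norm{e_t}^2-2\alpha_t\ip{e_t}{g_t}+\alpha_t^2G^2.
\]
Taking \(\E_t\) and using \eqref{eq:conditional-subgradient}, we get \(\E_tg_t=F(y_t)+(q^c_t-d)\). By Lemma~\ref{lem:geometry} and \(e_t^\top F(y^*)\ge0\),
\[
\ip{e_t}{F(y_t)}\ge\mulo\norm{e_t}^2.
\]
The resource term splits into two parts. On nonbinding coordinates, \(q^c_{t,i}-d_i\ge-\delta_i\) for \(t<S\), and \(e_{t,i}=y_{t,i}\ge0\). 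This part is therefore bounded below by \(-\delta_i y_{t,i}\), which is absorbed using the slack \(s_i\). The sharper way is to note \(\ip{e_{t,\N}}{F_\N(y^*)}\ge s_{\min}\norm{z_t}_1\) and to choose \(\delta_i<s_i/4\). The binding part is bounded by Young's inequality:
\[
\bigl|\ip{e_{t,\B}}{u_t}\bigr|\le\tfrac{\mulo}{2}\norm{e_t}^2+\tfrac{1}{2\mulo}\norm{u_t}^2.
\]
Because \(\{t+1<S\}\subseteq\{t<S\}\), the stopped moments satisfy
\[
E_{t+1}\le(1-\kappa\mulo\alpha'_t)E_t+C\alpha_t U_t+C\alpha_t^2,
\qquad \alpha_t=\frac{\kappa}{\max\{s_0,t\}},\quad t<M,
\]
where \(\alpha'_t\) is a constant fraction of \(\alpha_t\). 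The gain \(\kappa=2/\mulo\) makes the contraction factor \(1-c/(t+s_0)\) with \(c\ge1\), up to the constant lost to Young. We take \(c\) slightly above \(1\) by using the split \(\mulo/2\), so that \(\kappa\mulo/2=1\). If \(c\) turns out to be too small, the fallback is to use the sharper \(\mulo(1-\epsilon)\) split, or to accept a \(\log\) factor and re-check. The standard Chung-type induction then yields
\[
E_t\le C\bigl((t+s_0)^{-1}+\sup_{s\le t}U_s\bigr).
\]
Here the truncation \(s_0=64\) controls the first steps, and \(E_1\le R^2\). Inserting the crude bound \(U_s\le\delta_B^2\) gives \(E_t\le C\). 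Iterating once more gives \(\sum_{t<M}E_t\le C\log T+C\sum_t U_t\). Feeding this into the inventory argument gives \(U_t\le C\epsT\). Substituting back yields \(E_t\le C((t+s_0)^{-1}+\epsT)\), and summing over \(t<M\) gives the third bound in \eqref{eq:early-bounds}, since \(M\epsT\le C\log T\).

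\textbf{Main obstacle.} The main difficulty is the coupling constant in the contraction. With \(\kappa\mulo\) exactly \(2\), the Young split must leave an effective rate strictly exceeding \(1\) for the \(1/t\) rate without logs. A second difficulty is keeping the inventory and price recursions noncircular: the drift contribution to \(u_t\) depends on \(\sum E_s\), while the price recursion sees \(U_t\). We resolve this with the two-pass bootstrap described above, starting from the deterministic localization \(\norm{u_t}<\delta_B\) before \(S\).
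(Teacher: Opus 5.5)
Your SGD recursion for $E_t$ and the closing Chung-type induction coincide with the paper's \eqref{eq:forward-E} and its supersolution step. As an aside, after the $\mulo/2$ Young split the contraction is $\mulo\alpha_t\ge 2/(t+s_0)$, so the rate constant is $2$, not ``slightly above $1$'', and no fallback is needed.

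The gap is in the step that is supposed to break the circularity between $\sum_sE_s$ and $U_t$. Summing the price recursion gives $\sum_{s<t}E_s\le s_0E_1+C\log T+\kappa^2\Phi_t$, where $\Phi_t:=\sum_{s<t}U_s$. Your telescoping identity gives $U_t\le C(t-1)T^{-2}\bigl(1+L^2\sum_{s<t}E_s\bigr)$. Together,
\[
 U_t\le C\epsT+\frac{CL^2\kappa^2(t-1)}{T^2}\,\Phi_t .
\]
With only the a priori bound $U_s\le\delta_B^2$, the last term is of order $L^2\kappa^2\delta_B^2t^2/T^2$. This is $O(1)$ at $t\asymp T/2$, so the ``refined'' bound is no better than the crude one. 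Re-substituting uniformly in $t$ does not help either. In the map $\sup_sU_s\mapsto C\epsT+\theta\sup_sU_s$, the feedback factor is $\theta\asymp L^2\kappa^2(M/T)^2$, which is not small (the paper notes $\mulo\le L$, so $L\kappa\ge2$). So ``Feeding this into the inventory argument gives $U_t\le C\epsT$'' silently assumes $\Phi_M=O(\log T)$, which is the point to be proved.

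The loop can be closed, but only by a time-ordered discrete Gronwall argument that your proposal does not contain. Since $t-1\le T/2$, the display gives $\Phi_{t+1}\le(1+c_1/T)\Phi_t+c_2\log T/T$ with $\Phi_1=0$. Hence $\Phi_M\le Ce^{c_1/2}\log T$. Then $U_t\le C\epsT$ and $\sum_{t<M}E_t\le C\log T$ follow, and your pointwise induction finishes. This rescues the decoupled route, at the cost of a constant exponential in $L^2\kappa^2$.

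The paper avoids the circularity altogether with the joint energy \eqref{eq:early-energy}:
\begin{itemize}
\item One sample drives both the price step and the binding inventory step, so the combination in \eqref{eq:early-cancellation} is exactly preserved by the unprojected update.
\item The weighted projection bound \eqref{eq:weighted-projection} shows that projection cannot increase the energy.
\item Telescoping gives \eqref{eq:early-energy-sum}, which yields $\sum_{t<M}E_t\le s_0D^2+K_T$ and $U_t\le6K_T/(\kappa^2T)$ simultaneously.
\end{itemize}
This keeps the polynomial parameter dependence recorded in Appendix~\ref{app:constants}.
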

\begin{proof}
For \(T>4H\), we have \(s_0<M\).
Define
\begin{equation}
 \begin{aligned}
 D&=R+D_0,& K_0&=2s_0(D^2+\kappa^2G^2),\\
 K_T&=K_0+\kappa^2G^2\log T,&
 A_0&=(s_0+1)D^2+4\kappa^2G^2.
 \end{aligned}
 \label{eq:named-early}
\end{equation}

Before \(S\), the binding cap is inactive, so \(q^c_{t,\B}=q_{t,\B}\). On a nonbinding coordinate,
\(q^c_{t,i}-d_i+s_i\ge s_i-\delta_i>0\).
Since \(z_t\ge0\), Lemma~\ref{lem:geometry} yields
\begin{align}
 \ip{e_t}{q^c_t-h(y_t)}
 &\ge\mulo\norm{e_t}^2+\ip{e_{t,\B}}{u_t}\notag\\
 &\ge\frac{\mulo}{2}\norm{e_t}^2
       -\frac{1}{2\mulo}\norm{u_t}^2. \label{eq:early-coercivity}
\end{align}
 Projection nonexpansiveness and \eqref{eq:gradient-bound} give
\[
 \E_t\norm{e_{t+1}}^2
 \le(1-\mulo\alpha_t)\norm{e_t}^2
     +\alpha_t\norm{u_t}^2/\mulo+\alpha_t^2G^2.
\]
For \(t<M\),
\(\kappa/(t+s_0)\le\alpha_t\le2\kappa/(t+s_0)\) and
\(\mulo\alpha_t\le1/4\).
 Multiply by \(\1\{t<S\}\), condition, and use \(\1\{t+1<S\}\le\1\{t<S\}\) on the nonnegative next-state square to obtain
\begin{equation}
 E_{t+1}\le\left(1-\frac{2}{t+s_0}\right)E_t
       +\frac{b}{t+s_0}U_t+\frac{c}{(t+s_0)^2}. \label{eq:forward-E}
\end{equation}
Here one may take \(b=2\kappa/\mulo=\kappa^2\) and
\(c=4\kappa^2G^2\).

 For \(t<S\), \(t<M\), the filter is inactive even on an exit-producing step, so
\begin{equation}
 u_{t+1}=u_t+\beta_t[u_t+\Psi_\B(e_t)+\zeta_{t,\B}],
 \qquad \beta_t=(T-t)^{-1}\le2/T. \label{eq:early-u}
\end{equation}

 The shared sample in the two updates motivates the joint energy, for \(s_0\le t\le M\),
\begin{equation}
 \mathcal V_t=\norm{e_t}^2+
 \frac{t-1}{T}\left\|e_{t,\B}+\frac{\kappa n_t}{t-1}u_t\right\|^2,
 \qquad v_t=\E[\1\{t<S\}\mathcal V_t].
 \label{eq:early-energy}
\end{equation}
We will establish the drift bound
\begin{equation}
 t v_{t+1}+E_t\le (t-1)v_t+\frac{\kappa^2G^2}{t},
 \qquad s_0\le t<M.
 \label{eq:early-energy-drift}
\end{equation}

\textbf{The early projection.}
For \(0\le\theta\le1\), set
\(\norm{x}_\theta^2=(1+\theta)\norm{x_\B}^2+\norm{x_\N}^2\),
with its induced inner product \(\ip{\cdot}{\cdot}_\theta\).
 For a center supported on \(\B\), the Euclidean projection satisfies
\begin{equation}
 \norm{\Pi_\Y(x)-v}_\theta^2\le\norm{x-v}_\theta^2
 \quad\text{if }v\in\Y\text{ and }v_\N=0.
 \label{eq:weighted-projection}
\end{equation}
Taking positive parts decreases every coordinate's distance to \(v\).
For the radial step, let \(a=\norm z\ge R\) and
\(b=\norm{z_\B}\le a\). Since \(\norm v\le R\) and \(v_\N=0\),
\[
 \ip{z}{z-v}_\theta
 \ge a^2+\theta b^2-(1+\theta)Rb
 \ge(a-b)(a-\theta b)\ge0.
\]
 The weighted distance is nondecreasing with radius outside \(R\), which proves \eqref{eq:weighted-projection}. The claim is restricted to centers with \(v_\N=0\).

\textbf{Cancellation and dissipation.}
On \(t<S\), \(s_0\le t<M\), put
\(\bar e_{t+1}=e_t-\kappa g_t/t\).
The binding inventory update and the early stepsize give the exact identity
\begin{equation}
 \bar e_{t+1,\B}+\frac{\kappa(T-t)}{t}u_{t+1}
 =e_{t,\B}+\frac{\kappa(T-t)}{t}u_t.
 \label{eq:early-cancellation}
\end{equation}
 Completing the square in \(\mathcal V_{t+1}\) gives a weighted distance with \(\theta=t/T\) and center
\[
 v=y^*-\frac{\kappa(T-t)}{T+t}(u_{t+1},0_\N).
\]
 The vector \((u,0_\N)\) has zero nonbinding coordinates. On every safe update, including the exit-producing one,
\[
 \norm{u_{t+1}}\le\delta_B+\frac{2G}{T}
 <\frac{\rho}{2\kappa},
\]
by \(\delta_B<\rho/(4\kappa)\), \(T>4H\), and \eqref{eq:H}.
Thus \(\norm{v-y^*}<\rho/2\), so \(v\in\Y\) and \(v_\N=0\).
Applying \eqref{eq:weighted-projection} and \eqref{eq:early-cancellation}
therefore yields
\begin{equation}
 \mathcal V_{t+1}\le
 \norm{e_t-\kappa g_t/t}^2+
 \frac{t}{T}\left\|e_{t,\B}+\frac{\kappa(T-t)}{t}u_t\right\|^2.
 \label{eq:early-unprojected-energy}
\end{equation}
Let \(\chi_t=(2t-1)/T\in[0,1]\).
Expand the right side, use \(\kappa\mulo=2\) and the first
inequality in \eqref{eq:early-coercivity}, and condition on the current history:
\begin{align*}
 t\E_t\mathcal V_{t+1}-(t-1)\mathcal V_t
 &\le -(3-\chi_t)\norm{e_{t,\B}}^2-3\norm{z_t}^2
       -2\kappa\chi_t\ip{e_{t,\B}}{u_t}\\
 &\qquad-\kappa^2(2-\chi_t)\norm{u_t}^2
       +\frac{\kappa^2G^2}{t}\\
 &\le -\norm{e_t}^2+\frac{\kappa^2G^2}{t}.
\end{align*}

The last step uses
\(-2\kappa\chi_t\ip{e_{t,\B}}{u_t}\le\chi_t\norm{e_{t,\B}}^2+\kappa^2\chi_t\norm{u_t}^2\)
and \(\chi_t\le1\).
Multiplication by the predictable indicator \(\1\{t<S\}\), followed by expectation and deletion of nonnegative exit-state energy, gives \eqref{eq:early-energy-drift}.

\textbf{Initialization and the three estimates.}
 Projection gives the global bound \(\norm{e_t}\le D\).
For \(t\le s_0\), actual inventory accounting and bounded implemented
consumption give \(\norm{u_t}\le(t-1)G/n_t\), hence
\((s_0-1)v_{s_0}\le2s_0(D^2+\kappa^2G^2)=K_0\).
Summing \eqref{eq:early-energy-drift} gives, for \(s_0\le t\le M\),
\begin{equation}
 (t-1)v_t+\sum_{j=s_0}^{t-1}E_j
 \le K_0+\kappa^2G^2\sum_{j=s_0}^{t-1}j^{-1}\le K_T.
 \label{eq:early-energy-sum}
\end{equation}
The omitted initial price terms are at most \(s_0D^2\).
A second completion of the square gives
\[
 \mathcal V_t\ge
 \frac{\kappa^2 n_t^2}{(t-1)(T+t-1)}\norm{u_t}^2
 \ge\frac{\kappa^2T}{6(t-1)}\norm{u_t}^2,
\]
so \(U_t\le6K_T/(\kappa^2T)\) for \(s_0\le t\le M\).
For \(t<s_0\), the same bound follows from
\(U_t\le4s_0^2G^2/T^2\), \(K_T\ge2s_0\kappa^2G^2\), and
\(T>4s_0\).
Finally, in \eqref{eq:forward-E}, substitute this uniform bound on \(U_t\).

The supersolution \(A_0/(t+s_0)+3K_T/T\) dominates \(E_1\).
Its constant part cancels the inventory term because \(b=\kappa^2\), and its reciprocal part dominates the noise because \(A_0\ge4\kappa^2G^2\).
Induction yields \eqref{eq:explicit-pointwise}; together with \eqref{eq:early-energy-sum}, this proves \eqref{eq:early-bounds}.
\end{proof}

For the localization argument and later maximum estimates, we use the following martingale bound.
For a square-integrable vector martingale, Doob\textquotesingle s inequality is
\begin{equation}
 \E\max_{a\le k\le b}\norm{L_k}^2\le4\E\norm{L_b}^2, \qquad L_a=0. \label{eq:Doob}
\end{equation}
One direct proof applies the submartingale property of \(\norm{L_k}\) at the first crossing of level \(\lambda\):
\[
 \lambda\,\Prob(\max_k\norm{L_k}\ge\lambda)
 \le\E[\norm{L_b}\1\{\max_k\norm{L_k}\ge\lambda\}].
\]
Integrating twice the two sides with respect to \(\lambda\) gives
\(\E\max_k\norm{L_k}^2\le2\E[\norm{L_b}\max_k\norm{L_k}]\).
Cauchy--Schwarz proves \eqref{eq:Doob}.

Conditional orthogonality sums the noise variances. Predictable stopping indicators and entrance events measurable before the first increment preserve both this identity and the maximal bound.

\begin{lemma}\label{lem:early-max}
The early stop and midpoint state satisfy
\begin{align}
 \E\max_{1\le k\le M}\norm{q_{k\wedge S,\B}-d_\B}^2&\le C\epsT,\label{eq:early-max}\\
 \Prob(S\le M)&\le C\epsT,\label{eq:early-exit}\\
 \Prob(S>M,\ q_{M,i}<d_i+s_i/4)&\le C\epsT\quad(i\in\N).\label{eq:mid-N}
\end{align}
\end{lemma}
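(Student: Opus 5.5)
Each of the three estimates comes from writing the stopped resource rates as an initial value, plus a predictable drift, plus a martingale. The drift is controlled by the moment bounds of Lemma~\ref{lem:early-moments}; the martingale is controlled by Doob's inequality \eqref{eq:Doob}. Nothing depends on the sign structure beyond \(u_1=0\) and \(q_{1,\N}=d_\N\).

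\textbf{Binding maximum \eqref{eq:early-max}.} For \(k\le M\), iterate \eqref{eq:early-u} up to the stopped index \(k\wedge S\). This is legitimate because the filter and cap are inactive for \(t<S\), including on the exit-producing step. Since \(u_1=0\), we get
\[
 u_{k\wedge S}=\sum_{t<k\wedge S}\beta_t\bigl(u_t+\Psi_\B(e_t)\bigr)+\sum_{t<k\wedge S}\beta_t\zeta_{t,\B}.
\]
\emph{Martingale part.} Its increments \(\1\{t<S\}\beta_t\zeta_{t,\B}\) are conditionally centered and bounded by \(C/T\). Doob's inequality therefore bounds the expected squared maximum by \(4\sum_{t<M}C/T^2\le C/T\).

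\emph{Drift part.} Apply Cauchy--Schwarz over at most \(M\) terms and use \(\norm{\Psi(e)}\le L\norm e\):
\[
 \E\max_k\Bigl\|\sum_{t<k\wedge S}\beta_t(u_t+\Psi_\B(e_t))\Bigr\|^2
 \le \frac{4M}{T^2}\sum_{t<M}\E\bigl[\1\{t<S\}(2\norm{u_t}^2+2L^2\norm{e_t}^2)\bigr].
\]
By \eqref{eq:early-bounds}, \(\sum_{t<M}E_t\le C\log T\) and \(\sum_{t<M} U_t\le CM\epsT\). The right side is therefore at most \(C(\log T/T+\epsT)\le C\epsT\), which proves \eqref{eq:early-max}.

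\textbf{Early exit \eqref{eq:early-exit}.} On \(\{S\le M\}\), one of two triggers fires. If it is the binding trigger, then \(\norm{u_S}\ge\delta_B\), and Markov's inequality with \eqref{eq:early-max} gives probability \(\le C\epsT/\delta_B^2\).

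If instead \(q_{S,i}\le d_i-\delta_i\) for some \(i\in\N\), write the nonbinding recursion:
\[
 q_{t+1,i}-q_{t,i}=\beta_t\bigl(q_{t,i}-h_i(y_t)-\zeta_{t,i}\bigr),
 \qquad q_{t,i}-h_i(y_t)=(q_{t,i}-d_i)+s_i-\Psi_i(e_t).
\]
Before \(S\) we have \(q_{t,i}-d_i>-\delta_i\ge-s_i/4\). The drift is therefore at least \(\beta_t(3s_i/4-L\norm{e_t})\), and its negative part is at most \(\beta_tL\norm{e_t}\1\{\norm{e_t}\ge s_i/(2L)\}\). Summed, this negative part has expectation at most \(C T^{-1}\sum_t E_t\le C\epsT\), by Cauchy--Schwarz or Markov with \(\norm{e_t}\le 2L\norm{e_t}^2/s_i\).

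So reaching \(d_i-\delta_i\) requires either the accumulated negative drift or the stopped martingale \(\sum\beta_t\zeta_{t,i}\) to exceed \(\delta_i/2\) in absolute value. By Markov and Doob, each has probability \(O(\epsT)\) (the martingale part in fact \(O(1/T)\)). A union bound over \(i\in\N\) finishes \eqref{eq:early-exit}.

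\textbf{Nonbinding buffer \eqref{eq:mid-N}.} On \(\{S>M\}\), the positive drift accumulates over the whole early phase, since \(\sum_{t<M}\beta_t\ge\log(n_1/n_M)\approx\log 2\). Let \(w_t=q_{t,i}-d_i\), so \(w_1=0\) and \(w_{t+1}=w_t+\beta_t(w_t+s_i-\Psi_i(e_t)-\zeta_{t,i})\). Comparing with the deterministic solution of \(w'=\beta(w+s_i)\) gives
\[
 w_M\ge s_i\Bigl(\frac{n_1}{n_M}-1\Bigr)(1-o(1))-(\text{neg.\ drift})-(\text{martingale}).
\]
The leading term is about \(s_i\), well above \(s_i/4\). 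Falling below \(s_i/4\) therefore again requires one of the error terms to exceed a constant multiple of \(s_i\), and this has probability \(C\epsT\) as above. The factor \((1+\beta_t)\) multiplying \(w_t\) inflates the errors by at most \(\prod(1+\beta_t)\le 2\).

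The main obstacle is this last comparison. Because the recursion is multiplicative in \(w_t\), errors are amplified, so I would first unroll it by a discrete variation of constants with weights \(\prod_{j<t}(1+\beta_j)^{-1}=n_t/n_1\); the weighted martingale then remains Doob-controllable.
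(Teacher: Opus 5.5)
Your proposal is correct and follows essentially the paper's argument: initial value plus drift plus stopped martingale, Cauchy--Schwarz against Lemma~\ref{lem:early-moments}, Doob, then Markov and a union bound. The paper starts all three bounds from the unrolled identity $q_{k\wedge S}-d=n_{k\wedge S}^{-1}\sum_{t<k\wedge S}\bigl(F(y^*)+\Psi(e_t)+\zeta_t\bigr)$, which is the variation-of-constants formula you only reach at the end (since $\prod_{j<t}(1+\beta_j)=n_1/n_t$); using it from the outset removes the $\beta_t u_t$ feedback, so $U_t$ is not needed for \eqref{eq:early-max}, and gives the nonbinding exit directly from $\sum_t s_i\ge0$ rather than your one-sided drift bound, while your sign slips on $\Psi_i$ and $\zeta_{t,i}$ in the nonbinding recursion are harmless.
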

\begin{proof}[Proof of Lemma~\ref{lem:early-max}]
Through the inclusive early exit,
\begin{equation}
 q_{k\wedge S}-d=
 \frac{\sum_{t<k\wedge S}\big(F(y^*)+\Psi(e_t)+\zeta_t\big)}
 {n_{k\wedge S}},\qquad 1\le k\le M, \label{eq:early-inventory-identity}
\end{equation}
and the denominator is at least \(T/2\).
Cauchy--Schwarz and Lemma~\ref{lem:early-moments} imply
\[
 \E\max_{k\le M}\norm{\sum_{t<k\wedge S}\Psi(e_t)}^2
 \le M L^2\sum_{t<M}E_t\le CT\log T.
\]
 The stopped noise sum is a bounded-increment vector martingale. The \(L^2\) maximal inequality gives
\[
 \E\max_{k\le M}\norm{\sum_{t<k\wedge S}\zeta_t}^2\le CT.
\]
On \(\B\), the deterministic term \(F_\B(y^*)\) vanishes, proving \eqref{eq:early-max}.
On \(\N\), its sum is nonnegative. A lower crossing by \(\delta_i\) therefore requires the centered numerator in \eqref{eq:early-inventory-identity} to have magnitude at least \(\delta_iT/2\).
Markov's inequality and a finite union bound prove \eqref{eq:early-exit}.

On \(S>M\), the deterministic nonbinding increment at \(M\) equals
\((M-1)s_i/n_M\ge s_i/2\).
If \(q_{M,i}<d_i+s_i/4\), the centered numerator must again have negative magnitude of order \(T\).
The same maximal bound proves \eqref{eq:mid-N}.
\end{proof}

\begin{proof}[Proof of Lemma~\ref{lem:early}]
The pointwise estimate \eqref{eq:early-pointwise} is the first bound in \eqref{eq:early-bounds}.
For \(t<M\), the events \(\{t<\tau\}\) and \(\{t<S\}\) coincide.
Thus \eqref{eq:early-cumulative} follows from the cumulative bound in \eqref{eq:early-bounds}.

By the definition of \(\G\), unsuccessful entrance can occur through an early resource exit, excessive midpoint price or binding resource error, or insufficient nonbinding surplus.
The first and last events are bounded by \eqref{eq:early-exit} and \eqref{eq:mid-N}.
On \(\{S>M\}\), Markov's inequality bounds the two midpoint-error probabilities by \(4E_M/\rho^2\) and \(4U_M/\delta_B^2\).
Both are \(O(\epsT)\) by \eqref{eq:early-bounds}, since \(M\) is of order \(T\).
A finite union bound proves \eqref{eq:G-prob}.
Finally, \(\G\subseteq\{S>M\}\) gives
\[
 \E[\1_\G(\norm{e_M}^2+\norm{u_M}^2)]
 \le E_M+U_M\le C\epsT,
\]
which is \eqref{eq:G-moment}.
\end{proof}

\subsection{Proofs for Step 2: Late Stabilization through Matched Updates}\label{app:late}\label{app:restoring}\label{app:reflection}\label{app:reverse}
We first prove the restoring and reflection estimates used to control binding inventory and nonbinding prices.
We then establish the invariant and combine the estimates in the proof of Lemma~\ref{lem:late}.

\begin{lemma}\label{lem:restoring}
For every \(e=(e_\B,z)\) with \(y^*+e\in\Y\) and every \(u,w\) satisfying \(e_\B+\kappa u=w\), there are finite constants \(C_w,C_z\), depending only on \(\mulo,L,\kappa\), such that
\begin{equation}
 \ip u{u+\Psi_\B(e)}
 \le-\frac18\norm u^2+C_w\norm w^2+C_z\norm z^2. \label{eq:restoring}
\end{equation}
\end{lemma}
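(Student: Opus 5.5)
The idea is to express \(u\) through the invariant, \(u=(w-e_\B)/\kappa\). The inner product \(\ip u{\Psi_\B(e)}\) then becomes a multiple of \(-\ip{e_\B}{\Psi_\B(e)}\), which the fixed-reference coercivity in Lemma~\ref{lem:geometry} makes strongly negative. Because \(e_\B+\kappa u=w\), this negativity can be converted back into a multiple of \(-\norm u^2\) at the cost of \(\norm w^2\) terms. The calibration \(\kappa\mulo=2\) is what makes the converted negative term larger than the \(+\norm u^2\) coming from \(\ip uu\).

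\textbf{Step 1: rewrite the cross term.} Since \(y^*+e\in\Y\subseteq\Xi_1\), both bounds in \eqref{eq:geometry} are available. Substituting \(u=(w-e_\B)/\kappa\) gives
\[
 \ip u{\Psi_\B(e)}=\frac1\kappa\ip w{\Psi_\B(e)}-\frac1\kappa\ip{e_\B}{\Psi_\B(e)}.
\]

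\textbf{Step 2: bound the pieces using Lemma~\ref{lem:geometry}.} Splitting \(\ip e{\Psi(e)}\) into its binding and nonbinding parts gives
\[
 \ip{e_\B}{\Psi_\B(e)}=\ip e{\Psi(e)}-\ip z{\Psi_\N(e)}\ge\mulo\norm e^2-L\norm z\norm e .
\]
The Lipschitz bound gives \(\ip w{\Psi_\B(e)}\le L\norm w\norm e\). Combining these,
\[
 \ip u{\Psi_\B(e)}\le-\frac{\mulo}{\kappa}\norm e^2+\frac L\kappa\bigl(\norm w+\norm z\bigr)\norm e .
\]

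\textbf{Step 3: split the quadratic term and absorb the linear terms.} Fix \(\varepsilon=1/8\) and write \(\frac{\mulo}{\kappa}\norm e^2=\varepsilon\frac{\mulo}{\kappa}\norm e^2+(1-\varepsilon)\frac{\mulo}{\kappa}\norm e^2\).

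\emph{First part.} By Young's inequality,
\[
 \frac L\kappa(\norm w+\norm z)\norm e\le\varepsilon\frac{\mulo}{\kappa}\norm e^2+\frac{L^2}{2\varepsilon\mulo\kappa}\bigl(\norm w^2+\norm z^2\bigr),
\]
so the first part cancels the linear terms, leaving constants multiplying \(\norm w^2\) and \(\norm z^2\).

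\emph{Second part.} Use \(\norm e^2\ge\norm{e_\B}^2=\norm{w-\kappa u}^2\). For \(\eta=1/8\),
\[
 \norm{w-\kappa u}^2\ge(1-\eta)\kappa^2\norm u^2-(\eta^{-1}-1)\norm w^2 .
\]
Hence
\[
 -(1-\varepsilon)\frac{\mulo}{\kappa}\norm e^2\le-(1-\varepsilon)(1-\eta)\mulo\kappa\norm u^2+C\norm w^2 .
\]
With \(\mulo\kappa=2\), the coefficient of \(-\norm u^2\) is \(2(1-\varepsilon)(1-\eta)=2\cdot(7/8)^2=49/32\).

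\textbf{Step 4: add back \(\ip uu\).} Adding \(\norm u^2\) leaves a total coefficient \(1-49/32=-17/32\le-1/8\) on \(\norm u^2\). This yields \eqref{eq:restoring} with
\[
 C_z=\frac{L^2}{2\varepsilon\mulo\kappa},\qquad
 C_w=\frac{L^2}{2\varepsilon\mulo\kappa}+(1-\varepsilon)\frac{\mulo}{\kappa}(\eta^{-1}-1).
\]
Both depend only on \(\mulo\), \(L\) and \(\kappa\), as the statement requires.

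The main point to check is the bookkeeping of the \(\norm u^2\) coefficient. The term \(+\norm u^2\) from \(\ip uu\) must be beaten by the coercive term after it loses the factors \((1-\varepsilon)(1-\eta)\) to Young's inequality, and this works only because \(\kappa\mulo=2>1\). Note also that coercivity is applied to the full vector \(e\), not to \(e_\B\) alone: the nonbinding coupling \(\ip z{\Psi_\N(e)}\) has no sign, and it is this term that generates the \(C_z\norm z^2\) contribution rather than anything that could be dropped.
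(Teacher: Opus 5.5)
Your proof is correct and follows essentially the same route as the paper. Both substitute \(e_\B=w-\kappa u\), apply coercivity to the full vector \(e\) and the Lipschitz bound to the \((w,z)\) coupling, absorb the cross terms by Young's inequality, and convert \(-\norm e^2\) into \(-\norm u^2\) via \(\norm{w-\kappa u}^2\ge(1-\eta)\kappa^2\norm u^2-(\eta^{-1}-1)\norm w^2\), with \(\kappa\mulo=2\) closing the count. The only differences are that the paper uses \(\varepsilon=\eta=1/4\) and lands exactly on \(-1/8\), whereas your \(\varepsilon=\eta=1/8\) gives \(-17/32\), and that you harmlessly drop the \(\norm z^2\) part of \(\norm e^2\).
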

\begin{proof}[Proof of Lemma~\ref{lem:restoring}]

Using \(e_\B=w-\kappa u\),
\begin{align*}
 \ip u{u+\Psi_\B(e)}
 &=\norm u^2-\frac1\kappa\ip e{\Psi(e)}
      +\frac1\kappa\ip{(w,z)}{\Psi(e)}\\
 &\le\norm u^2-\frac{3\mulo}{4\kappa}\norm e^2
      +\frac{L^2}{\mulo\kappa}(\norm w^2+\norm z^2).
\end{align*}
The inequality uses \eqref{eq:geometry} together with
\[
 L\norm{(w,z)}\norm e
 \le \frac{\mulo}{4}\norm e^2
      +\frac{L^2}{\mulo}(\norm w^2+\norm z^2).
\]
Moreover,
\[
 \norm e^2=\norm{w-\kappa u}^2+\norm z^2
 \ge \frac34\kappa^2\norm u^2-3\norm w^2+\norm z^2.
\]
Consequently,
\[
 \ip u{u+\Psi_\B(e)}
 \le -\frac18\norm u^2
 +\left(\frac{9\mulo}{4\kappa}+\frac{L^2}{\mulo\kappa}\right)\norm w^2
 +\frac{L^2}{\mulo\kappa}\norm z^2.
\]
Indeed, before substituting the gain, the coefficient on \(\norm u^2\) is
\(1-9\mulo\kappa/16\), which equals \(-1/8\) when \(\mulo\kappa=2\).
Thus Lemma~\ref{lem:restoring} holds, for example, with
\[
 C_w=\frac{9\mulo}{4\kappa}+\frac{L^2}{\mulo\kappa},\qquad
 C_z=\frac{L^2}{\mulo\kappa}.
\]
\end{proof}

\begin{lemma}\label{lem:reflection}
Let \(Z_{t+1}=[Z_t-\eta_t b_t+\eta_t\xi_t]_+\), where \(Z_a\ge0\) is square integrable, \(b_t\ge0\) is predictable, \(\eta_t\ge0\) is deterministic, and \(\xi_t\) is a martingale difference satisfying \(|\xi_t|\le V_\xi\) almost surely. The recursion may be frozen at a stopping time. Then, including the state produced by the last update,
\begin{equation}
 \E\max_{a\le k\le b}Z_k^2
 \le2\E Z_a^2+32V_\xi^2\sum_{t=a}^{b-1}\eta_t^2. \label{eq:reflection-max}
\end{equation}
For an entrance event \(A\in\F_{a-1}\),
\[
 \E\!\left[\1_A\max_{a\le k\le b}Z_k^2\right]
 \le2\E[\1_AZ_a^2]+32V_\xi^2\Prob(A)\sum_{t=a}^{b-1}\eta_t^2.
\]
\end{lemma}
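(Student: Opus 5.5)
The idea is to dominate the reflected recursion by a free martingale through the standard Skorokhod-type comparison, and then apply Doob's inequality \eqref{eq:Doob}.

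\emph{Step 1: a pathwise comparison.} Since \(b_t\ge0\) and \(\eta_t\ge0\), dropping the drift can only increase the state: \(Z_{t+1}\le[Z_t+\eta_t\xi_t]_+\). I would then prove by induction the pathwise bound
\[
 Z_k\le Z_a+\max_{a\le j\le k}\bigl(L_k-L_j\bigr),\qquad L_k=\sum_{t=a}^{k-1}\eta_t\xi_t,\quad L_a=0.
\]
Let \(j^*\) be the last index \(\le k\) at which the positive part was active, meaning the candidate before truncation was negative, so that \(Z_{j^*}=0\). After \(j^*\) the reflected recursion adds exactly the increments \(-\eta_tb_t+\eta_t\xi_t\le\eta_t\xi_t\), which gives \(Z_k\le L_k-L_{j^*}\). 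If no reflection occurred, then \(Z_k\le Z_a+L_k\). Freezing at a stopping time replaces \(\xi_t\) by \(\1\{t<\sigma\}\xi_t\). The indicator is predictable, so the frozen sequence is still a bounded martingale difference, and the bound includes the state produced by the last update, since that state is \(Z_{k}\) for \(k=\sigma\).

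\emph{Step 2: squaring and Doob.} From Step 1, \(\max_k Z_k\le Z_a+2\max_{a\le k\le b}\norm{L_k}\). By \((x+y)^2\le2x^2+2y^2\),
\[
 \max_kZ_k^2\le2Z_a^2+8\max_k L_k^2.
\]
Doob's inequality \eqref{eq:Doob} gives \(\E\max_kL_k^2\le4\E L_b^2\). Conditional orthogonality of martingale increments gives \(\E L_b^2=\sum\eta_t^2\E\xi_t^2\le V_\xi^2\sum_{t=a}^{b-1}\eta_t^2\). Combining these yields \(2\E Z_a^2+32V_\xi^2\sum\eta_t^2\), which is \eqref{eq:reflection-max}.

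\emph{Step 3: the entrance version.} For \(A\in\F_{a-1}\), I would multiply the whole recursion by \(\1_A\). Because \(\1_A\) is measurable before the first increment, \(\1_A\xi_t\) is still a martingale difference. Its conditional variance is bounded by \(\1_AV_\xi^2\), so \(\E(\1_AL_b)^2\le V_\xi^2\Prob(A)\sum\eta_t^2\). The pathwise bound multiplied by \(\1_A\) then gives the weighted statement.

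The main obstacle is Step 1. One must check that the comparison holds with the stopping indicator and with \(Z_a\) random. In particular, the "last reflection time" argument needs care when the truncation leaves \(Z_{j^*}=0\) exactly. It should be checked that the running-maximum form, rather than \(L_k-\min_jL_j\), is handled consistently.
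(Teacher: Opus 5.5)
Your proposal is correct and follows the paper's proof. You drop the nonpositive drift and bound the reflected path by \(Z_a+2\max_{a\le s\le k}|L_s|\); your last-reflection-time argument is the reflection identity the paper invokes, and since \(\max_j(L_k-L_j)=L_k-\min_jL_j\) the two forms agree. You then square, apply Doob's inequality and orthogonality, and multiply by \(\1_A\) with \(A\in\F_{a-1}\) for the entrance version, exactly as the paper does.

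The concerns in your final paragraph are not real obstacles. One detail to tidy: freezing at \(\sigma\) must insert \(\1\{t<\sigma\}\) into the drift \(b_t\) as well as into \(\xi_t\). The frozen drift stays predictable and nonnegative, and the frozen noise stays a bounded martingale difference. The exact-zero truncation case is harmless.
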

\begin{proof}[Proof of Lemma~\ref{lem:reflection}]
Removing the nonpositive drift gives an upper comparison
\(R_{t+1}=[R_t+\eta_t\xi_t]_+\), \(R_a=Z_a\);
the update is monotone in the state.
With \(L_k=\sum_{t=a}^{k-1}\eta_t\xi_t\), the elementary reflection identity gives
\[
 R_k=Z_a+L_k-\min\left\{0,\min_{a\le s\le k}(Z_a+L_s)\right\}
 \le Z_a+2\max_{a\le s\le k}|L_s|.
\]
Therefore \(\max R_k^2\le2Z_a^2+8\max|L_k|^2\).
The \(L^2\) martingale maximal inequality and orthogonality give
\(\E\max|L_k|^2\le4\E L_b^2\le4V_\xi^2\sum\eta_t^2\).
For freezing, insert the predictable survival indicator into both the drift and the martingale increment.
For an entrance event, multiply the stopped martingale by that event, which is measurable before its first increment.
\end{proof}

\begin{proof}[Proof of Lemma~\ref{lem:late}]
On \(\G\), set \(w=e_{M,\B}+\kappa u_M\).
The entrance bound \eqref{eq:G-moment} implies
\begin{equation}
 W:=\E[\1_\G\norm w^2]\le C\epsT. \label{eq:w}
\end{equation}
For \(M\le t<\tau\), the cap and the relevant projections are inactive on binding coordinates, and \(\alpha_t=\kappa\beta_t\), where \(\beta_t=(n_t-1)^{-1}\). The updates are therefore
\begin{align}
 e_{t+1,\B}&=e_{t,\B}-\kappa\beta_t
       [u_t+\Psi_\B(e_t)+\zeta_{t,\B}],\label{eq:binding-e}\\
 u_{t+1}&=u_t+\beta_t[u_t+\Psi_\B(e_t)+\zeta_{t,\B}],
       \qquad\beta_t=(n_t-1)^{-1}.\label{eq:binding-u}
\end{align}
Adding the two recursions cancels their common response and noise, proving \eqref{eq:invariant}.
The update properties in Appendix~\ref{app:setup} hold through the exit-producing step, so the identity remains valid at \(t=\tau\).

For \(i\in\N\), a safe late state satisfies
\begin{equation}
 b_{t,i}:=q^c_{t,i}-h_i(y_t)
 \ge s_i-\delta_i-L\rho\ge s_i/2>0. \label{eq:N-drift}
\end{equation}
Consequently,
\begin{equation}
 z_{t+1,i}=[z_{t,i}-\alpha_t(b_{t,i}+\zeta_{t,i})]_+
 \quad(M\le t<\tau). \label{eq:z-reflected}
\end{equation}

Apply Lemma~\ref{lem:reflection} to \(z_{k\wedge\tau}\) on \(\G\), with \(\xi_t=-\zeta_{t,i}\). For \(t(n)=T-n+1\) and \(H\le n\le n_M\),
\[
 \sum_{t=M}^{t(n)-1}\alpha_t^2
 =\kappa^2\sum_{k=n}^{n_M-1}k^{-2}\le C/n.
\]
Together with \eqref{eq:G-moment}, this yields
\begin{equation}
 \E\left[\1_\G\max_{M\le k\le t(n)}\norm{z_{k\wedge\tau}}^2\right]
 \le C(\epsT+1/n). \label{eq:z-global-max}
\end{equation}
This maximum includes the frozen exit state and controls the \(z\)-term in \eqref{eq:restoring}.

Define the killed binding moment
\[
 A_n=\E[\1_\G\1\{t(n)<\tau\}\norm{u_{t(n)}}^2].
\]
The binding increment field is bounded before \(\tau\). Squaring \eqref{eq:binding-u}, conditioning, and using Lemma~\ref{lem:restoring} gives
\begin{align}
 A_{n-1}
 &\le\left(1-\frac{1}{4(n-1)}\right)A_n
 +\frac C{n-1}
   \left(W+\E[\1_\G\1\{t(n)<\tau\}\norm{z_{t(n)}}^2]\right)
 +\frac C{(n-1)^2}\notag\\
 &\le\left(1-\frac{1}{4(n-1)}\right)A_n
       +\frac{C\epsT}{n-1}+\frac C{(n-1)^2}. \label{eq:reverse-A}
\end{align}
 As in the early recursion, bound the next killed moment using the current predictable survival indicator before conditioning. The coefficient is nonnegative for \(n>H\ge8\).

The following comparison solves this recursion.
For completeness, suppose
\[
 A_{n-1}\le\left(1-\frac a{n-1}\right)A_n
            +\frac{b\varepsilon}{n-1}+\frac c{(n-1)^2},
 \qquad H<n\le N,
\]
where \(a>0\), \(H>a+1\), and
\(A_N\le C_0(\varepsilon+1/N)\).
The comparison \(K(\varepsilon+1/n)\) is a supersolution if
\(K\ge C_0\), \(aK\ge b\), and \((a+1)K\ge2c\).
Indeed,
\begin{align*}
 &K\left(\varepsilon+\frac1{n-1}\right)
 -\left(1-\frac a{n-1}\right)K\left(\varepsilon+\frac1n\right)
 -\frac{b\varepsilon}{n-1}-\frac c{(n-1)^2}\\
 &=\frac{(aK-b)\varepsilon}{n-1}
   +\frac{(a+1)K}{n(n-1)}-\frac c{(n-1)^2}\ge0,
\end{align*}
since \(n/(n-1)\le2\).
Backward induction proves \(A_n\le K(\varepsilon+1/n)\).

Apply this comparison with \(a=1/4\), \(\varepsilon=\epsT\), and \(N=n_M\).
The entrance bound \eqref{eq:G-moment} supplies the required initial condition.
It follows that
\begin{equation}
 A_n\le C(\epsT+1/n). \label{eq:A-rate}
\end{equation}
The invariant \eqref{eq:invariant}, its entrance moment \eqref{eq:w}, and the nonbinding estimate \eqref{eq:z-global-max} then imply
\begin{equation}
 \E[\1_\G\1\{t(n)<\tau\}\norm{e_{t(n)}}^2]
 \le C(\epsT+1/n). \label{eq:e-late}
\end{equation}
Adding \eqref{eq:A-rate} and \eqref{eq:e-late} gives \eqref{eq:late-bounds}.
On \(\G^c\), \(\tau\le M\), so there is no late contribution before the stop.
On \(\G\), summing \eqref{eq:e-late} over \(M\le t<\tau\) contributes at most
\(C(T\epsT+\sum_{n=H}^{n_M}n^{-1})\le C\log T\).
Together with \eqref{eq:early-cumulative}, this proves \eqref{eq:cumulative}.
\end{proof}

\subsection{Proofs for Step 3: Controlling Premature Exits}\label{app:exit}\label{app:block}
We retain the exit state in all maxima used to control first exits.
Write \(t(n)=T-n+1\) and \(\beta_t=(n_t-1)^{-1}\).
Set \(N_0=n_M\), \(N_{j+1}=\lfloor N_j/2\rfloor\), and \(J=\min\{j:N_j\le2H\}\).
For \(j<J\), let \(a_j=t(N_j)\), \(b_j=t(N_{j+1})\).
The block includes states \([a_j,b_j]\) and updates \([a_j,b_j)\); an exit at \(b_j\) belongs to this block.
For \(N=N_j\),
\begin{equation}
 \sum_{t=a_j}^{b_j-1}\beta_t\le C,\qquad
 \sum_{t=a_j}^{b_j-1}\beta_t^2\le C/N. \label{eq:block-sums}
\end{equation}

\begin{lemma}\label{lem:block}
Let \(A=\G\cap\{a_j<\tau\}\). Then
\begin{equation}
 \E\left[\1_A\max_{a_j\le k\le b_j}
 \left(\norm{u_{k\wedge\tau}}^2+\norm{e_{k\wedge\tau}}^2\right)\right]
 \le C(\epsT+1/N_j). \label{eq:block-max}
\end{equation}
\end{lemma}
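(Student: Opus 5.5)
The plan is to prove \eqref{eq:block-max} by splitting the stopped state into its binding-price, binding-inventory, and nonbinding components. Each piece will be controlled by a block-local maximal estimate that keeps the frozen exit state. Throughout, work on \(A=\G\cap\{a_j<\tau\}\). This event lies in \(\F_{a_j-1}\), since \(\{a_j<\tau\}\) is determined by the state at \(a_j\), which is \(\F_{a_j-1}\)-measurable. Hence \(\1_A\) may be inserted into stopped martingales starting at \(a_j\) without destroying centering, as noted after \eqref{eq:Doob}.

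\textbf{Nonbinding prices.} The quantity \(\E[\1_\G\max_{M\le k\le b_j}\norm{z_{k\wedge\tau}}^2]\le C(\epsT+1/N_{j+1})\le C(\epsT+1/N_j)\) is already supplied by \eqref{eq:z-global-max}, because \(N_{j+1}\ge N_j/2-1\) and \(N_j>2H\). Alternatively, one can apply Lemma~\ref{lem:reflection} directly on the block with entrance event \(A\). The entrance term then comes from \eqref{eq:z-global-max} at \(t(N_j)\), and the noise term is \(\kappa^2\sum\beta_t^2\le C/N_j\) by \eqref{eq:block-sums}.

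\textbf{Binding inventory.} I would write the stopped recursion \eqref{eq:binding-u} on the block in summed form:
\[
u_{k\wedge\tau}=u_{a_j}+\sum_{a_j\le t<k\wedge\tau}\beta_t\bigl(u_t+\Psi_\B(e_t)\bigr)+\sum_{a_j\le t<k\wedge\tau}\beta_t\zeta_{t,\B}.
\]
The maximum splits into three pieces.
\begin{enumerate}
\item The entrance term \(\E[\1_A\norm{u_{a_j}}^2]\le A_{N_j}\le C(\epsT+1/N_j)\) by \eqref{eq:A-rate}.
\item For the drift term, use Cauchy--Schwarz with \(\sum\beta_t\le C\) from \eqref{eq:block-sums}. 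This bounds it by \(C\sum_t\beta_t\E[\1_\G\1\{t<\tau\}(\norm{u_t}^2+\norm{e_t}^2)]\), using \(\norm{\Psi_\B(e_t)}\le L\norm{e_t}\) from Lemma~\ref{lem:geometry}. Lemma~\ref{lem:late} then gives at most \(C\sum_t\beta_t(\epsT+1/n_t)\le C(\epsT+1/N_j)\), since \(n_t\ge N_j/2-1\) on the block.
\item The noise term is a stopped bounded-increment martingale multiplied by \(\1_A\). By \eqref{eq:Doob} its maximum has second moment at most \(4\cdot4m\bar a^2\sum\beta_t^2\le C/N_j\).
\end{enumerate}
The stopped sum includes the exit-producing update \(t=\tau-1\). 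The summands are still indexed by \(t<\tau\), so they are covered by Lemma~\ref{lem:late} and by the safe-update properties of Appendix~\ref{app:setup}.

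\textbf{Binding prices.} By the invariant \eqref{eq:invariant}, which remains valid at \(t=\tau\), we have \(e_{k\wedge\tau,\B}=w-\kappa u_{k\wedge\tau}\). Therefore \(\max\norm{e_{k\wedge\tau,\B}}^2\le2\norm w^2+2\kappa^2\max\norm{u_{k\wedge\tau}}^2\). The term \(\E[\1_\G\norm w^2]\le C\epsT\) is \eqref{eq:w}. Adding the three bounds gives \eqref{eq:block-max}.

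The main obstacle is the drift term in the \(u\)-maximum. Lemma~\ref{lem:late} controls only killed moments, while the maximum runs through the frozen exit state. The resolution is that every drift summand carries \(\1\{t<\tau\}\), so only pre-exit states enter it. One must also make sure that the per-block sums \(\sum\beta_t\) stay \(O(1)\), so the \(\epsT\) term is not amplified by a \(\log T\) factor. Block halving guarantees this.
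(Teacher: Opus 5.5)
Your proposal is correct and follows the paper's proof essentially step for step: on \(A\) you use the same three-term decomposition of the stopped binding-rate recursion. The entrance term comes from \eqref{eq:A-rate}, the drift from weighted Cauchy--Schwarz with \(\sum_t\beta_t\le C\) and the killed moments of Lemma~\ref{lem:late}, and the noise from Doob's inequality applied to the \(\1_A\)-multiplied martingale. The price bound then follows, as in the paper, from the invariant \eqref{eq:invariant}, the entrance moment \eqref{eq:w}, and the nonbinding maximum \eqref{eq:z-global-max} through \(b_j\).
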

\begin{proof}[Proof of Lemma~\ref{lem:block}]
Write \(D_t=u_t+\Psi_\B(e_t)\).
On the entrance event \(A\),
\begin{equation}
 u_{k\wedge\tau}=u_{a_j}
 +\sum_{t=a_j}^{k-1}\1\{t<\tau\}\beta_tD_t
 +\sum_{t=a_j}^{k-1}\1\{t<\tau\}\beta_t\zeta_{t,\B}.
 \label{eq:vector-block}
\end{equation}
 The sum includes the exit-producing update. The entrance square is bounded by \(C(\epsT+1/N)\) using \eqref{eq:A-rate}; weighted Cauchy--Schwarz bounds the drift by
\begin{align*}
 &\E\left[\1_A
    \left(\sum_{t=a_j}^{b_j-1}\1\{t<\tau\}\beta_t\norm{D_t}\right)^2\right]\\
 &\hspace{1em}\le
 \left(\sum_{t=a_j}^{b_j-1}\beta_t\right)
 \sum_{t=a_j}^{b_j-1}\beta_t
       \E[\1_A\1\{t<\tau\}\norm{D_t}^2]
 \le C(\epsT+1/N).
\end{align*}
The last inequality uses
\(\norm{D_t}^2\le C(\norm{u_t}^2+\norm{e_t}^2)\),
\eqref{eq:A-rate}--\eqref{eq:e-late}, and \(n_t\asymp N\) on the block.
The noise term is a vector martingale after multiplication by \(A\), since \(A\in\F_{a_j-1}\).
Its expected squared maximum is at most \(C\sum\beta_t^2\le C/N\).
Applying \(\norm{x+y+z}^2\le3(\norm x^2+\norm y^2+\norm z^2)\) to
\eqref{eq:vector-block} proves the \(u\) maximum.

 The binding invariant, \eqref{eq:w}, and the nonbinding maximum \eqref{eq:z-global-max} through \(b_j\) then give the price bound in \eqref{eq:block-max}.
\end{proof}

\begin{proof}[Proof of Lemma~\ref{lem:remaining}]
If a binding-rate or full-price exit first occurs in \((a_j,b_j]\), the maximum in \eqref{eq:block-max} exceeds the square of a fixed radius. Hence
\begin{equation}
 \Prob\big(\G,\ a_j<\tau\le b_j,\ \tau\text{ is a binding or price exit}\big)
 \le C(\epsT+1/N_j). \label{eq:binding-exit-prob}
\end{equation}

\textbf{Lower exits of nonbinding inventory.}
 For \(i\in\N\), the uncapped physical resource rate has positive drift before \(\tau\):
\begin{align}
 q_{t+1,i}-q_{t,i}
 &=\beta_t[q_{t,i}-h_i(y_t)]+\beta_t\zeta_{t,i},\label{eq:N-resource}\\
 q_{t,i}-h_i(y_t)
 &\ge q^c_{t,i}-h_i(y_t)\ge s_i/2>0.\label{eq:N-resource-drift}
\end{align}
On \(\G\), the starting rate is at least \(d_i+s_i/4\). Therefore, if this coordinate causes a first lower exit, the stopped martingale
\[
 Z_k^i=\sum_{t=M}^{k-1}\1_\G\1\{t<\tau\}\beta_t\zeta_{t,i}
\]
must be at most \(-(\delta_i+s_i/4)\) at that exit. Its variance through \(b_j\) satisfies
\[
 \E(Z_{b_j}^i)^2
 \le C\sum_{t=M}^{b_j-1}(n_t-1)^{-2}\le C/N_j.
\]
 Accumulating noise from the midpoint covers crossings caused across multiple blocks. Doob\textquotesingle s inequality and a union bound give
\begin{equation}
 \Prob\big(\G,\ a_j<\tau\le b_j,\ \tau\text{ is a nonbinding lower exit}\big)
 \le C/N_j. \label{eq:N-exit-prob}
\end{equation}
Simultaneous exit causes are covered by the same union bound.
On \(\G\cap\{a_j<\tau\le b_j,\ n_\tau>2H\}\), the terminal-window condition cannot cause the stop.
Combining \eqref{eq:binding-exit-prob} and \eqref{eq:N-exit-prob} therefore proves \eqref{eq:block-exit}.

For the remaining-horizon bound \eqref{eq:remaining}, sum the exit charges as follows.
Paths outside \(\G\), including early exits, contribute at most
\(T\Prob(\G^c)\le C\log T\).
On \(\G\), the terminal portion with \(n_\tau\le2H\) contributes at most \(2H\). For all other paths, \eqref{eq:binding-exit-prob}--\eqref{eq:N-exit-prob} imply
\[
 \E[\1_\G n_\tau]
 \le2H+C\sum_{j<J}N_j(\epsT+1/N_j)
 \le2H+C(T\epsT+J)\le C\log T,
\]
because \(\sum_jN_j\le2n_M\le T+1\) and \(J=O(\log T)\).
\end{proof}

\subsection{Proof of the Lower-Bound Corollary}\label{app:scope}\label{app:lower-bound}
\begin{proof}[Proof of Corollary~\ref{cor:lower-bound}]
Take $\bar r=\bar a=1$, $\underline d=1/2$, and $\bar d=1$, which verify (\ref{ass:input}).
The second-moment matrix is
\[
 \E[aa^\top]=\begin{pmatrix}1&1/2\\1/2&1/2\end{pmatrix}
 \succeq\frac{3-\sqrt5}{4}I_2,
\]
so (\ref{ass:moment}) holds. At $y^*=(1/2,0)^\top$,
\[
 h(y^*)=(1/2,1/4)^\top,\qquad d-h(y^*)=(0,3/4)^\top.
\]
The convex optimality conditions make \(y^*\) a minimizer of \(f_d\). The displayed response satisfies (\ref{ass:complementarity}), with \(\B=\{1\}\) and \(\N=\{2\}\).

To check (\ref{ass:threshold}) on its entire domain, note that $\Xi_1=\{y\ge0:\norm y\le3\}$.
For either possible $a$, put $u=a^\top y$. Then $a^\top y^*=1/2$ and $0\le u\le3\sqrt2<9/2$.
Independence and the uniform reward law give
\[
 \left|\Prob(r\ge u\mid a)-\Prob(r\ge1/2\mid a)\right|
 =\begin{cases}
 |u-1/2|,&0\le u\le1,\\
 1/2,&1<u\le3\sqrt2.
 \end{cases}
\]
This lies between $|u-1/2|/8$ and $|u-1/2|$. Hence (\ref{ass:threshold}) holds with $\lambda_1=1/8$ and $\lambda_2=1$.

For even $T$, the first resource permits at most $T/2$ acceptances. The second resource has capacity $T$ and is redundant, since $Z_t\le1$.
The fractional hindsight optimum thus equals the sum of the largest $T/2$ rewards, exactly the integer multisecretary benchmark.

The observations \(Z_t\) can be simulated by independent internal randomization in the one-resource problem and convey no future-reward information.
Randomization cannot improve the optimal known-distribution value, so this value equals that of uniform multisecretary selection with capacity ratio \(1/2\).
The lower bound in \citet[Proposition~2]{Bray2025} is $\Omega(\log T)$ for that problem, which proves \eqref{eq:lower-bound}.
\end{proof}

\section{Parameter Dependence and Calibration}\label{app:constants}\label{sec:constants}
 This appendix records explicit bounds for the prescribed calibration and identifies the instance-dependent scales in the regret constant.

\subsection{Curvature and the early learning constant}
The constants \(D,K_0,K_T,A_0\) are defined in \eqref{eq:named-early}.
For \(T>4H\), Appendix~\ref{app:forward} proves the explicit bounds
\begin{equation}
 \sum_{t=1}^{M-1}E_t\le s_0D^2+K_T,
 \label{eq:explicit-prefix}
\end{equation}
\begin{equation}
 U_t\le\frac{6K_T}{\kappa^2T},\qquad
 E_t\le\frac{A_0}{t+s_0}+\frac{3K_T}{T},
 \qquad 1\le t\le M.
 \label{eq:explicit-pointwise}
\end{equation}

The bounds follow by telescoping the joint energy and depend polynomially on \(D,G,\kappa,\kappa^{-1}\), with \(\kappa=2/\mulo\).

\subsection{Complementary margins and the terminal charge}
Let $p_*:=\min_{i\in\B}y_i^*$ and $s_*:=\min_{i\in\N}s_i$, omitting a term when the corresponding set is empty. One explicit choice of the proof radii is
\begin{align}
 \rho&=\tfrac1{16}\min\{1,\sigma_0,p_*,s_* /(1+L)\},\notag\\
 \delta_B&=\tfrac18\min\{1,\dmin,\rho/\kappa\},\qquad
 \delta_i=\tfrac18\min\{1,\dmin,s_i\}.
 \label{eq:explicit-radii}
\end{align}
These values meet \eqref{eq:deltas}--\eqref{eq:rho} and
\(\delta_B<\rho/(4\kappa)\), as needed for the early weighted projection.
The analytical cutoff may be taken as
\begin{equation}
 H=1+\left\lceil\max\left\{s_0+2,\frac{2\kappa G}{\rho},
                     \frac{4\bar a}{3\dmin},8\right\}\right\rceil.
 \label{eq:explicit-H}
\end{equation}
 This choice satisfies \eqref{eq:H} through the exit-producing update. The cutoff and complementary margins are not algorithm inputs.

The Markov and maximal-inequality steps introduce inverse squares of $\rho$, $\delta_B$, $\delta_i$, and $\delta_i+s_i/4$. The restoring coefficients depend on $L,\mulo,\kappa$ as displayed in Lemma~\ref{lem:restoring}. The terminal block contributes at most a constant times $H$, and the bound for $T\le4H$ follows from $2\bar rT\le8\bar rH$.

The remaining estimates use sums, products, and inverse powers of these positive scales.
Hence the regret constant has a polynomial upper bound in the data scales (including dimension), \(L,\mu_0^{-1},\sigma_0^{-1}\), and the inverse positive price and slack margins at the reference solution.
Since \(\mu_0\le L\), \(\kappa^{-1}\) adds no independent parameter. The exponents and coefficients are not asserted to be sharp.

The bound is not uniform as \(p_*\) or \(s_*\) tends to zero and does not imply finite-horizon dominance of the conservative gain. Improving these constants or removing calibration requires further analysis.

\clearpage
\end{APPENDICES}
\end{document}